\providecommand\given{}
\DeclarePairedDelimiterX{\set}[1]\{\}{\renewcommand\given{\SetSymbol[\delimsize]}#1}
\DeclareMathOperator{\Cov}{Cov}
\DeclareMathOperator{\diag}{diag}
\DeclareMathOperator{\E}{\mathbf E}
\DeclareMathOperator{\x}{x} 
\DeclareMathOperator{\z}{\mathbf z}
\DeclareMathOperator{\Spec}{Spec}
\DeclareMathOperator{\dist}{dist}
\DeclareMathOperator{\He}{He}
\newcommand{\R}{\mathbb R}
\newcommand{\N}{\mathbf{N}}
\newcommand{\vx}{{x}}
\newcommand{\vy}{{y}}
\newcommand{\dif}{\operatorname{d}\!{}}
\newcommand{\wh}{\widehat}
\DeclareFontFamily{U}{mathx}{\hyphenchar\font45}
\DeclareFontShape{U}{mathx}{m}{n}{
      <5> <6> <7> <8> <9> <10>
      <10.95> <12> <14.4> <17.28> <20.74> <24.88>
      mathx10
      }{}
\DeclareSymbolFont{mathx}{U}{mathx}{m}{n}
\DeclareMathAccent{\wc}{0}{mathx}{"71}
\DeclareMathAccent{\wideparen}{0}{mathx}{"75}
\title{Deterministic equivalent and error universality \\ of deep random features learning}
\author[1*]{Dominik Schr\"oder}
\author[2*]{Hugo Cui}
\author[3]{Daniil Dmitriev}
\author[4]{Bruno Loureiro}
\affil[1]{\small Department of Mathematics, ETH Zurich, 8006 Z\"urich, Switzerland}
\affil[2]{\small Statistical Physics Of Computation lab.,
Institute of Physics, \'Ecole Polytechnique F\'ed\'erale de Lausanne (EPFL), \newline 1015 Lausanne, Switzerland}
\affil[3]{\small Department of Mathematics, ETH Zurich and ETH AI Center, 8092 Z\"urich, Switzerland}
\affil[4]{\small D\'epartement d'Informatique, \'Ecole Normale Sup\'erieure (ENS) - PSL \& CNRS, 
F-75230 Paris cedex 05, France}
\affil[ ]{\textit {dschroeder@ethz.ch, hugo.cui@epfl.ch, daniil.dmitriev@ai.ethz.ch, bruno.loureiro@di.ens.fr}}
\affil[ *]{\textit {Main contributions}}
\newtheorem*{rep@theorem}{\rep@title}
\newcommand{\newreptheorem}[2]{%
\newenvironment{rep#1}[1]{%
 \def\rep@title{#2 \ref{##1}}%
 \begin{rep@theorem}}%
 {\end{rep@theorem}}}
\theoremstyle{plain}
\newtheorem{theorem}{Theorem}
\numberwithin{theorem}{section}
\newtheorem{lemma}[theorem]{Lemma}
\newtheorem{proposition}[theorem]{Proposition}
\newtheorem{corollary}[theorem]{Corollary}
\newtheorem{conjecture}[theorem]{Conjecture}
\newtheorem{assumption}[theorem]{Assumption}
\theoremstyle{remark}
\newtheorem{remark}[theorem]{Remark}
\date{\today}
\begin{document}
\maketitle

\begin{abstract}
This manuscript considers the problem of learning a random Gaussian network function using a fully connected network with frozen intermediate layers and trainable readout layer. This problem can be seen as a natural generalization of the widely studied random features model to deeper architectures. First, we prove Gaussian universality of the test error in a ridge regression setting where the learner and target networks share the same intermediate layers, and provide a sharp asymptotic formula for it.
Establishing this result requires proving a deterministic equivalent for traces of the deep random features sample covariance matrices which can be of independent interest. Second, we conjecture the asymptotic Gaussian universality of the test error in the more general setting of arbitrary convex losses and generic learner/target architectures. We provide extensive numerical evidence for this conjecture. In light of our results, we investigate the interplay between architecture design and implicit regularization.
\end{abstract}

\section{Introduction}
\label{Introduction}
Despite the incredible practical progress in the applications of deep neural networks to almost all fields of knowledge, our current theoretical understanding thereof is still to a large extent incomplete. Recent progress on the theoretical front stemmed from the investigation of simplified settings, which despite their limitations are often able to capture some of the key properties of "real life" neural networks. A notable example is the recent stream of works on random features (RFs), originally introduced by \cite{Rahimi2007RandomFF} as a computationally efficient approximation technique for kernel methods, but more recently studied as a surrogate model for two-layers neural networks in the lazy regime \cite{Chizat2018OnLT, Pennington2019NonlinearRM, Mei2019TheGE, Gerace2020GeneralisationEI}. RFs are a particular instance of random neural networks, whose statistical properties have been investigated in a sizeable body of works \cite{Lee2018DeepNN, g.2018gaussian, Fan2020SpectraOT, NEURIPS2021_1baff70e, NEURIPS2021_ae4503ec}. The problem of training the readout layer of such networks has been addressed in the shallow (one hidden layer) case by \cite{Mei2019TheGE, Gerace2020GeneralisationEI}, who provide sharp asymptotic characterizations for the test error. A similar study in the generic deep case is, however, still missing. In this manuscript, we bridge this gap by considering the problem of learning the last layer of a deep, fully-connected random neural network, hereafter referred to as the \emph{deep random features} (dRF) model. More precisely, our \textbf{main contributions} in this manuscript are:
\begin{itemize}
    \item In~\Cref{sec:spectrum}, we state Theorem \ref{prop:mult_layers}, which proves an asymptotic deterministic equivalent for the traces of the product of deterministic matrices with both conjugate kernel and sample covariance matrix of the layer-wise post-activations. 
    
    \item As a consequence of Thm.~\ref{prop:mult_layers}, in~\Cref{sec:Error_uni} we derive a sharp asymptotic formula for the test error of the dRF model in the particular case where the target and learner networks share the same intermediate layers, and when the readout layer is trained with the squared loss. This result establishes the Gaussian equivalence of the test error for ridge regression in this setting. 
    
    \item Finally, we conjecture (and provide strong numerical evidence for) the Gaussian universality of the dRF model for general convex losses, and generic target/learner network architectures. \textcolor{black}{More specifically, we provide exact asymptotic formulas for the test error that leverage recent progress in high-dimensional statistics \cite{Loureiro2021CapturingTL} and a closed-form formula for the population covariance of network activations appearing in \cite{Cui2023}}. These formulas show that in terms of second-order statistics, the dRF is equivalent to a linear network with noisy layers. We discuss how this effective noise translates into a depth-induced implicit regularization in~\Cref{sec:architecture}. 
\end{itemize}

A GitHub repository with the code employed in the present work can be found \href{https://github.com/HugoCui/dRF}{here}.

\subsection*{Related work} 
\emph{Random features} were first introduced by \cite{Rahimi2007RandomFF}. The asymptotic spectral density of the single-layer conjugate kernel was characterized in \cite{Liao2018, Pennington2019NonlinearRM, Benigni2021}. Sharp asymptotics for the test error of the RF model appeared in \cite{Mei2019TheGE, Mei2021GeneralizationEO} for ridge regression, \cite{Gerace2020GeneralisationEI, Dhifallah2020} for general convex losses and \cite{Sur2020, Bosch2022} for other penalties. The implicit regularization of RFs was discussed in \cite{Jacot2020}. The RFs model has been studied in many different contexts as a proxy for understanding overparametrisation, e.g. in uncertainty quantification \cite{Clarte2022ASO}, ensembling \cite{Ascoli20a, Loureiro2022}, the training dynamics \cite{NEURIPS2021_b4f8e5c5, bordelon2022learning}, but also to highlight the limitations of lazy training \cite{Ghorbani2019, Ghorbani2020WhenDN, NEURIPS2019_5481b2f3, pmlr-v139-refinetti21b}; \\

\noindent \emph{Deep random networks} were shown to converge to Gaussian processes in \cite{Lee2018DeepNN, g.2018gaussian}. They were also studied in the context of inference in \cite{8006899, NEURIPS2018_6d0f8463}, and as generative priors to inverse problems in \cite{NEURIPS2019_2f3c6a4c, NEURIPS2018_1bc2029a, pmlr-v107-aubin20a}. The distribution of outputs of deep random nets was characterized in \cite{NEURIPS2021_1baff70e, NEURIPS2021_ae4503ec}. Close to our work is \cite{Fan2020SpectraOT}, which provide exact formulas for the asymptotic spectral density and Stieltjes transform of the NTK and conjugate kernel in the proportional limit. Our formulas for the sample and population covariance are complementary to theirs. The test error of linear-width deep networks has been recently studied in \cite{Li2021, Ariosto2022} through the lens of Bayesian learning;\\

\noindent \emph{Gaussian universality} of the test error for the RFs model was shown in \cite{Mei2019TheGE}, conjectured to hold for general losses in \cite{Gerace2020GeneralisationEI} and was proven in \cite{Goldt2021TheGE, Hu2020UniversalityLF}. Gaussian universality has also been shown to hold for other classes of features, such as two-layer NTK \cite{Montanari2022UniversalityOE}, kernel features \cite{bordelon2022learning, Jacot2020, Cui2019LargeDF, Cui2022ErrorRF}.  \cite{Loureiro2021CapturingTL} provided numerical evidence for Gaussian universality of more general feature maps, including pre-trained deep features. \\

\noindent \emph{Deterministic equivalents} of sample covariance matrices have first been established in~\cite{10.1016/j.physa.2004.05.048,10.1007/s00440-016-0730-4} for separable covariances, generalizing the seminal work~\cite{10.1070/SM1967v001n04ABEH001994} on the free convolution of spectra in an anisotropic sense. More recently these results have been extended to non-separable covariances, first in tracial~\cite{10.2307/24308489}, and then also in anisotropic sense~\cite{louart2018concentration, chouard2022quantitative}. 
\section{Setting \& preliminaries}
\label{sec:setting}
Let $(\x^\mu,y^\mu)\in\mathbb{R}^{d}\times \mathcal{Y}$, $\mu\in[n]\coloneqq\{1,\cdots, n\}$, denote some training data, with $\x^{\mu}\sim\mathcal{N}(0_{d},\Omega_{0})$ independently and $y^{\mu}=f_{\star}(\x^{\mu})$ a (potentially random) target function. This work is concerned with characterising the learning performance of generalised linear estimation:
\begin{equation}
\label{eq:GLM}
\hat{y} = \sigma\left(\frac{\theta^{\top}\varphi(\x)}{\sqrt{k}}\right),
\end{equation}
with \emph{deep random features} (dRF):
\begin{align}
\label{eq:definition_multilayer_RF}
    \varphi(\x)\coloneqq \underbrace{\left(
    \varphi_{L}\circ \varphi_{L-1}\circ \dots\circ \varphi_2\circ\varphi_1\right)}_{L}
    (\x),
\end{align}
\noindent where the post-activations are given by:
\begin{equation}
    \varphi_\ell(h)=\sigma_\ell\left(
    \frac{1}{\sqrt{k_{\ell-1}}}W_\ell \cdot h
    \right), \quad \ell \in [L].
\end{equation}
The weights $\{W_\ell\in\mathbb{R}^{k_\ell\times k_{\ell-1}}\}_{\ell\in[L]}$ are assumed to be independently drawn Gaussian matrices with i.i.d. entries $(W_\ell)_{ij}\sim\mathcal{N}(0,\Delta_\ell) ~~\forall 1\le i\le k_\ell,~1\le j\le k_{\ell-1}$. To alleviate notation, sometimes it will be convenient to denote $k_{L} = k$. Only the readout weights $\theta\in\mathbb{R}^{k}$ in \eqref{eq:GLM} are trained according to the usual regularized \emph{empirical risk minimization} procedure: 
\begin{align}  
\label{eq:ERM}    
\hat{\theta}=\underset{\theta\in\mathbb{R}^{k}}{\mathrm{argmin}}\left[\sum\limits_{\mu=1}^n \ell(y^\mu,\theta^{\top}\varphi(\x^{\mu}))+\frac{\lambda}{2}||\theta||^2\right],  
\end{align}
\noindent where $\ell:\mathcal{Y}\times\mathbb{R}\to\mathbb{R}_{+}$ is a loss function, which we assume convex, and $\lambda>0$ sets the regularization strength. 

To assess the training and test performances of the empirical risk minimizer \eqref{eq:ERM}, we let $g:\mathcal{Y}\times\mathbb{R}\to\mathbb{R}_{+}$ be any performance metric (e.g. the loss function itself or, in the case of classification, the probability of misclassifying), and define the test error:
\begin{align}
\label{eq:def:errors}
\epsilon_g(\hat{\theta}) &\coloneqq \E\left[g(y, \hat{\theta}^{\top}\varphi(\x))\right]
\end{align}

Our main goal in this work is to provide a sharp characterization of \eqref{eq:def:errors} in the proportional asymptotic regime $n,d, k_\ell\xrightarrow{}\infty$ at fixed $\mathcal{O}(1)$ ratios $\alpha\coloneqq\sfrac{n}{d}$ and $\gamma_\ell\coloneqq\sfrac{k_\ell}{d}$, for all layer index $\ell\in[L]$. This requires a precise characterization of the {\it sample and population covariances} and the {\it Gram} matrices of the post-activations.

\subsection{Background on sample covariance matrices}
\paragraph{Marchenko-Pastur and free probability:}
We briefly introduce basic nomenclature on sample covariance matrices. For a random vector $x\in\R^d$ with mean zero $\E x=0$ and covariance $\Sigma:=\E xx^\top\in\R^{d\times d}$, we call the matrix $\wh\Sigma:=\mathcal X\mathcal X^\top/n\in\R^{d\times d}$ obtained from \(n\) independent copies \(x_1,\ldots,x_n\) of \(x\) written in matrix form as \(\mathcal X:=(x_1,\ldots,x_n)\) the \emph{sample covariance matrix} corresponding to the \emph{population covariance matrix} \(\Sigma\). The \emph{Gram matrix} \(\wc\Sigma:=\mathcal X^\top\mathcal X/n\in\R^{n\times n}\) has the same non-zero eigenvalues as the sample covariance matrix but unrelated eigenvectors. The systematic mathematical study of sample covariance and Gram matrices has a long history dating back to~\cite{10.1093/biomet/20A.1-2.32}. While in the ``classical'' statistical limit $n\to\infty$ with $d$ being fixed the sample covariance matrix converges to the population covariance matrix $\wh\Sigma\to\Sigma$, in the proportional regime \(d\sim n\gg 1\) the non-trivial asymptotic relationship between the spectra of \(\wh\Sigma\) and \(\Sigma\) has first been obtained in the seminal paper~\cite{10.1070/SM1967v001n04ABEH001994}: the empirical spectral density \(\mu(\wh\Sigma) := d^{-1}\sum_{\lambda\in\Spec(\wh\Sigma)}\delta_\lambda\) of \(\wh\Sigma\) is approximately equal to the \emph{free multiplicative convolution} of \(\mu(\Sigma)\) and a Marchenko-Pastur distribution \(\mu_\mathrm{MP}^{c}\) of aspect ratio \(c=d/n\),
\begin{equation}
\label{SC MP}
    \mu(\wh\Sigma)\approx\mu(\Sigma)\boxtimes\mu_\mathrm{MP}^{d/n}.
\end{equation}
Here the free multiplicative convolution \(\mu\boxtimes \mu_\mathrm{MP}^c\) may be defined as the unique distribution \(\nu\) whose Stieltjes transform \(m=m_\nu(z):=\int (x-z)^{-1}\dif\nu(x)\) satisfies the scalar \emph{self-consistent equation} 
\begin{equation}
    z m = \frac{z}{1-c-czm} m_{\mu}\left(\frac{z}{1-c-czm}\right).
\end{equation}
The spectral asymptotics~\eqref{SC MP} originally were obtained in the case of Gaussian \(\mathcal X\) or, more generally, for separable correlations \(\mathcal X=\sqrt{\Sigma}Y\) for some i.i.d.\ matrix \(Y\in\R^{d\times n}\). These results were later extended~\cite{10.2307/24308489} to the general case under essentially optimal assumptions on concentrations of quadratic forms \(x^\top A x\) around their expectation \(\Tr A\Sigma\).  

\paragraph{Deterministic equivalents:}
It has only been recognised much later~\cite{10.1016/j.physa.2004.05.048,10.1007/s00440-016-0730-4} that the relationship~\eqref{SC MP} between the asymptotic spectra of \(\Sigma\) and \(\wh\Sigma,\wc\Sigma\) actually extends to eigenvectors as well, and that the resolvents \(\wh G(z):=(\wh\Sigma-z)^{-1}\), \(\wc G(z):=(\wc\Sigma-z)^{-1}\) are asymptotically equal to \emph{deterministic equivalents}
\begin{equation}\label{scov det eq}
    \begin{split}
        \wh M(z) := -\frac{(\Sigma\wc m(z)+I_d)^{-1}}{z} , \quad 
        \wc M(z) &:= \wc m(z)I_n,
    \end{split}
\end{equation}
also in an \emph{anisotropic} rather than just a tracial sense, highlighting that despite the simple relationship between their averaged traces 
\begin{equation*}
    \wh m(z):=m_{\mu(\Sigma)\boxtimes\mu_\mathrm{MP}^c}(z), \quad \wc m(z) = \frac{c-1}{z} + c\wh m(z),
\end{equation*}
the sample covariance and Gram matrices carry rather different non-spectral information. The anisoptric concentration of resolvents (or in physics terminology, the self-averaging) has again first been obtained in the Gaussian or separable cases~\cite{10.1016/j.physa.2004.05.048,10.1007/s00440-016-0730-4}. The extension to general sample covariance matrices was only achieved much more recently~\cite{louart2018concentration, chouard2022quantitative} under Lipschitz concentration assumptions. In this work we specifically use the deterministic equivalent for sample covariance matrices with general covariance from~\cite{chouard2022quantitative} and extend it to cover Gram matrices. 

\paragraph{Application to the deep random features model:}
In this work we apply the general theory of anisotropic deterministic equivalents to the deep random features model. As discussed in Section \ref{sec:Error_uni}, to prove error universality even for the simple ridge regression case, it is not enough to only consider the spectral convergence of the matrices, and a stronger result is warranted. The application of non-linear activation functions makes the model neither Gaussian nor separable, hence our analysis relies on the deterministic equivalents from~\cite{chouard2022quantitative} and our extension to Gram matrices, which appear naturally in the explicit error derivations.

\subsection{Notation}
We will adopt the following notation:
\begin{itemize}
    \item For $A\in\R^{n\times n}$ we denote $\langle A \rangle \coloneqq \sfrac{1}{n}\tr A$.
    \item For matrices \(A\in\R^{n\times m}\) we denote the operator norm (with respect to the \(\ell^2\)-vector norm) by \(\norm{A}\), the max-norm by \(\norm{A}_\mathrm{max}:=\max_{ij}\abs{A_{ij}}\), and the Frobenius norm by \(\norm{A}_\mathrm{F}^2:=\sum_{ij}\abs{A_{ij}}^2\). 
    \item For any distribution $\mu$ we denote the push-forward under the map \(\lambda\mapsto a\lambda+b\) by $a\otimes \mu \oplus b$ in order to avoid confusion with e.g.\ the convex combination \(a\mu_1 +(1-a)\mu_2\) of measures \(\mu_1,\mu_2\).
    \item We say that a sequence of random variables \((X_n)_n\) is \emph{stochastically dominated} by another sequence \((Y_n)_n\) if for all small \(\epsilon>0\) and large \(D<\infty\) it holds that \(P(X_n>n^{\epsilon}Y_n)\le n^{-D}\) for large enough \(n\), and in this case write \(X_n\prec Y_n\).
\end{itemize}
\section{Deterministic equivalents}
\label{sec:spectrum}
Consider the sequence of variances defined by the recursion
\begin{align}
\label{eq:r_multilayer}
r_{\ell+1}=\Delta_{\ell+1}\E_{\xi\sim\mathcal{N}(0,r_\ell)}\left[\sigma_\ell(\xi)^2\right]
\end{align}
with initial condition $r_1\coloneqq\Delta_1 \sfrac{\langle \Omega_0\rangle}{d}$
and coefficients
\begin{align}
\label{eq:kappa_multilayer}
    &\kappa_1^{\ell}=\frac{1}{r_\ell}\E_{\xi\sim\mathcal{N}(0,r_\ell)}\left[\xi\sigma_\ell(\xi)\right],
    \notag\\ 
    &\kappa_*^{\ell}=\sqrt{\E_{\xi\sim\mathcal{N}(0,r_\ell)}\left[\sigma_\ell(\xi)^2\right]-r_\ell\left(\kappa_1^{\ell}\right)^2}.
\end{align}
\subsection{Rigorous results on the multi-layer sample covariance and Gram matrices}
Our main result on the anisotropic deterministic equivalent of dRFs follows from iterating the following proposition. We consider a data matrix \(X_0\in\mathbb R^{d\times n}\) whose Gram matrix concentrates as 
\begin{equation}\label{X0 assump}
    \norm{\frac{X_0^\top X_0}{d}-r_1 I}_\mathrm{max}\prec \frac{1}{\sqrt{n}}, \quad \norm{\frac{X_0}{\sqrt{d}}}\prec 1
\end{equation}
for some positive constant $r_1$. The Assumption~\eqref{X0 assump} for instance is satisfied if the columns $\x$ of $X_0$ are independent with mean $\E \x=0$ and covariance $\E \x\x^\top=\Omega_0\in\R^{d\times d}$ (together with some mild assumptions on the fourth moments), in which case \(r_1=\langle \Omega_0\rangle\) is the normalised trace of the covariance. We then consider $X_1\coloneqq\sigma_1(W_1 X_0/\sqrt{d})$ assuming the entries of \(W_1\in\R^{k_1\times d}\) are iid.\ $\mathcal N(0,1)$ elements, and $\sigma_1$ satisfies $\E_{\xi\sim\mathcal{N}(0,1)} \sigma_1(\sqrt{r_1}\xi)=0$ 
in the proportional \(n\sim d\sim k_1\) regime. Upon changing \(\sigma_1\) there is no loss in generality in assuming \(\Delta_1=1\) which we do for notational convenience.\\

\begin{proposition}[Deterministic equivalent for RF]
\label{prop 1layer}
\label{prop:one_layer}
For any deterministic \(A\) and Lipschitz-continuous activation function \(\sigma_1\), under the assumptions above, we have that, for any \(z \in \mathbf{C} \setminus \R_+\)
\begin{equation*}
\left | \left\langle A\Bigl[ \Bigl(\frac{X_1^\top X_1}{k_1} - z\Bigr)^{-1} - \wh M(z)\Bigr]\right\rangle \right | \prec \frac{\langle A A^\ast\rangle^{1/2}}{\delta^9\sqrt{n}},
\end{equation*}
and 
\begin{equation*}
    \left | \left\langle A \Bigl(\frac{X_1 X_1^\top}{k_1} - z\Bigr)^{-1}\right\rangle - \langle A\rangle\wc m(z) \right | \prec \frac{\langle A A^\ast\rangle^{1/2}}{\delta^9\sqrt{n}},
\end{equation*}     
where $\delta \coloneqq \dist(z, \R_+)$, 
\begin{equation}\label{Sigma lin def}
    \begin{split}
        -z\wh M(z)&\coloneqq \Bigl(\wc m(z)\Sigma_{\mathrm{lin}} + I\Bigr)^{-1},\\
        \Sigma_\mathrm{lin} &\coloneqq (\kappa_1^1)^2\frac{X_0^\top X_0}{d} + (\kappa_\ast^1)^2 I,
    \end{split}
\end{equation}
and 
\begin{equation*}
    \wh m(z):=m_{\mu(\Sigma_\mathrm{lin})\boxtimes\mu_\mathrm{MP}^{n/k_1}}(z), \quad \wc m(z) = \frac{n-k_1}{nz} + \frac{n}{k_1}\wh m(z).
\end{equation*}
Furthermore, Assumption~\eqref{X0 assump} holds true with $X_0,r_1$ replaced by $X_1,r_2$, respectively, and we have that \(\dist(-1/\wc m(z),\R_+)\ge \dist(z,\R_+)\). 
\end{proposition}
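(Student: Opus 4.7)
The plan is to reduce to the classical sample-covariance setting via a Gaussian equivalent. Introduce the surrogate $Y_1 := \kappa_1^1 W_1 X_0/\sqrt d + \kappa_*^1 Z$, where $Z \in \R^{k_1\times n}$ has i.i.d.\ $\mathcal N(0,1)$ entries independent of $W_1$. Conditionally on $X_0$, the $k_1$ rows of $Y_1$ are independent $\mathcal N(0, \Sigma_{\mathrm{lin}})$-distributed vectors in $\R^n$, so $Y_1^\top Y_1/k_1$ is exactly a sample covariance matrix with population covariance $\Sigma_{\mathrm{lin}}$. For this object the anisotropic deterministic equivalent from \cite{chouard2022quantitative} directly delivers the announced $\wh M(z)$. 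The companion tracial statement for $Y_1 Y_1^\top/k_1$ is obtained from the sample-covariance version either by an $\bigl(\begin{smallmatrix} 0 & Y_1/\sqrt{k_1} \\ Y_1^\top/\sqrt{k_1} & 0\end{smallmatrix}\bigr)$-block linearization, or from the scalar identity $\wc m(z) = (n-k_1)/(nz)+(n/k_1)\wh m(z)$ combined with the exchangeability of the rows of $Y_1$ (which forces the anisotropic equivalent to be a scalar multiple of identity).

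The heart of the proof is the Gaussian equivalence step: $X_1^\top X_1/k_1$ and $Y_1^\top Y_1/k_1$ must admit the same anisotropic deterministic equivalent. Expand $\sigma_1$ in Hermite polynomials with respect to $\mathcal N(0, r_1)$ to get $\sigma_1(t) = \kappa_1^1 t + \sum_{j\ge 2} c_j \He_j(t/\sqrt{r_1})/\sqrt{j!}$ with $\sum_{j\ge 2} c_j^2 = (\kappa_*^1)^2$. Conditionally on $X_0$, the population covariance of a row of $X_1$ reads $(\kappa_1^1)^2 X_0^\top X_0/d + \sum_{j\ge 2}c_j^2 (X_0^\top X_0/(d r_1))^{\circ j}$, where $\circ j$ denotes the $j$-fold Hadamard power. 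Under assumption \eqref{X0 assump}, the off-diagonal entries of $X_0^\top X_0/(d r_1)$ are $\prec n^{-1/2}$ and hence each higher Hadamard power deviates from its diagonal by $\prec n^{-j/2}$ entrywise; summing, the nonlinear Hermite tail contributes exactly $(\kappa_*^1)^2 I$ plus an $n^{-1}$-sized max-norm error, matching the covariance of $Y_1$. To pass from matching covariances to matching resolvents I would run a Lindeberg-type row-swap, replacing the rows of $X_1$ one at a time by the corresponding rows of $Y_1$ so that both matrices share identical remaining rows at each intermediate step, and control the effect of each swap via a Gaussian integration-by-parts in the fresh row together with the a-priori resolvent bound $\|(X_1^\top X_1/k_1 - z)^{-1}\| \le 1/\delta$ and the operator-norm control $\|X_0\|/\sqrt d \prec 1$.

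The remaining items are comparatively short. Propagation of \eqref{X0 assump} from $(X_0, r_1)$ to $(X_1, r_2)$ amounts to $\|X_1^\top X_1/k_1 - r_2 I\|_{\max}\prec n^{-1/2}$ and $\|X_1\|/\sqrt{k_1}\prec 1$: the diagonal bound is Hanson--Wright combined with the Gaussian Lipschitz concentration of $\sigma_1$ applied row-wise to $W_1 (X_0)_{\cdot,\mu}/\sqrt d$; the off-diagonal bound is the Hermite decomposition above; the operator-norm bound follows from evaluating the anisotropic deterministic equivalent just beyond the right edge of the linearised spectrum. The inequality $\dist(-1/\wc m(z), \R_+)\ge \dist(z, \R_+)$ is a standard analytic property of Stieltjes transforms of probability measures supported on $\R_+$, readable directly from the self-consistent equation for $\wc m$. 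The principal obstacle throughout is the Gaussian equivalence at the anisotropic (not just tracial) resolvent level: the higher Hermite modes of $\sigma_1$ interact nontrivially with arbitrary deterministic test matrices $A$, and showing that their contribution is uniformly $\prec \langle A A^\ast\rangle^{1/2}/(\delta^9\sqrt n)$ after integration against the resolvent is where the delicate book-keeping---and the $\delta^{-9}$ scaling---arises.
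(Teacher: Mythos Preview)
Your approach differs substantively from the paper's. You introduce a Gaussian surrogate $Y_1$ and propose a Lindeberg row-swap to transfer the anisotropic deterministic equivalent from $Y_1$ to $X_1$. The paper bypasses this step entirely: the key observation is that the deterministic-equivalent result of \cite{chouard2022quantitative} requires only \emph{Lipschitz concentration} of the data matrix, not Gaussianity. Since $W_1\mapsto \sigma_1(W_1X_0/\sqrt d)$ is Lipschitz in Frobenius norm with constant $\lambda_\sigma\|X_0/\sqrt d\|\prec 1$, the matrix $X_1^\top$ itself already satisfies this hypothesis, and the theorem applies \emph{directly} to $X_1$, yielding the equivalent with the true population covariance $\Sigma_X:=\E_{w}\sigma_1(X_0^\top w/\sqrt d)\sigma_1(w^\top X_0/\sqrt d)$. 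The remaining work is then purely deterministic: the Hermite expansion you describe gives $\|\Sigma_X-\Sigma_{\mathrm{lin}}\|_F=O(1)$, and two resolvent-perturbation steps together with a stability estimate $|\wc m(\Sigma_X,z)-\wc m(\Sigma_{\mathrm{lin}},z)|\lesssim |z|^{-1}n^{-1/2}$ transfer the equivalent from $\Sigma_X$ to $\Sigma_{\mathrm{lin}}$. For the Gram part the paper proves a separate extension via Schur-complement identities for $\wc G_{ii}$ and $\wc G_{ij}$ combined with Lipschitz concentration of $\langle A\wc G\rangle$ around its mean; your exchangeability remark identifies the scalar shape of the limit but does not on its own produce the anisotropic rate. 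Your Lindeberg route is plausible and has precedent in the universality literature, but it is exactly the laborious part you flag as the principal obstacle---and the paper's point is that this obstacle disappears once one invokes the concentration-based rather than the Gaussian-based version of the deterministic equivalent.
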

\begin{remark}
    The tracial version of~\Cref{prop 1layer} has appeared multiple times in the literature, e.g.~\cite{10.2307/24308489}. It implies that the spectrum \(\wh\mu_1\) of \(X_1^\top X_1/k_1\) is approximately given by the free multiplicative convolution
    \begin{equation}\label{eq hatmu1 rec}
        \begin{split}
            \wh\mu_1 &\approx \mu\Bigl((\kappa_1^1)^2\frac{X_0^\top X_0}{d}+(\kappa_\ast^1)^2I\Bigr) \boxtimes \mu_\mathrm{MP}^{n/k_1} \\
            & = \Bigl(\mu\Bigl((\kappa_1^1)^2\frac{X_0^\top X_0}{d}\Bigr) \boxplus \delta_{(\kappa_\ast^1)^2} \Bigr)\boxtimes \mu_{\mathrm{MP}}^{n/k_1}.
        \end{split}
    \end{equation}
    In case \(c\le 1\), i.e.\ when \(\mu_\mathrm{MP}^{c}\) has no atom at \(0\), it was shown in~\cite{MR2682261} that 
    \begin{equation}\label{distributive conv}
        \sqrt{\mu\boxtimes\mu_\mathrm{MP}^c}\boxplus_c\sqrt{\mu'\boxtimes \mu_\mathrm{MP}^c} = \sqrt{(\mu\boxplus\mu')\boxtimes\mu_\mathrm{MP}^c}
    \end{equation}
    which allows to simplify~\eqref{eq hatmu1 rec}. Here \(\boxplus_c\) is the \emph{rectangular free convolution} which models the distribution of singular values of the addition of two free rectangular random matrices, and the square-root is to be understood as the push-forward of the square-root map. Applying~\eqref{distributive conv} to~\eqref{eq hatmu1 rec} yields 
    \begin{equation}
        \sqrt{\wh\mu_1} \approx \Bigl(\kappa_1^1 \otimes\sqrt{\wh\mu_0\boxtimes \mu_\mathrm{MP}^{n/k_1}}\Bigr) \boxplus_{n/k_1} \kappa_\ast^1 \otimes \sqrt{\mu_\mathrm{MP}^{n/k_1}},
    \end{equation}
    suggesting that the non-zero singular values of \(X_1/\sqrt{k}\) can be modeled by the non-zero singular values of the \emph{Gaussian equivalent model}:
    \begin{equation}
        c' W' X_0 + c'' W''
    \end{equation}
    for some suitably chosen constants \(c',c''\) and independent Gaussian matrices \(W,W'\). 
\end{remark}

The last assertion of~\Cref{prop 1layer} allows to iterate over an arbitrary (but finite) number of layers. Indeed, after one layer we have 
\begin{equation}
    \begin{split}
        \Bigl(\frac{X_1^\top X_1}{k_1}-z_1\Bigr)^{-1} &\approx \Bigl(-\wc m(z_1)z_1\Sigma_\mathrm{lin}-z_1\Bigr)^{-1}\\
        &= c_1 \Bigl(\frac{X_0^\top X_0}{k_0}-z_0\Bigr)^{-1},
    \end{split}
\end{equation}
using the definitions from~\Cref{prop:mult_layers} for \(c_1,z_0\) below.
\begin{theorem}[Deterministic equivalent for dRF]
\label{prop:mult_layers}
For any deterministic \(A\) and Lipschitz-continious activation functions \(\sigma_1, \ldots, \sigma_\ell\) satisfying \(\E_{\xi\sim\mathcal{N}(0,1)} \sigma_m(\sqrt{r_m}\xi)=0\), under the Assumption~\eqref{X0 assump} above, we have that for any \(z_\ell\in \mathbf C\setminus\R_+\)
\begin{equation*}
\begin{aligned}
&\left| \left\langle A \Bigl(\frac{X_\ell^\top X_\ell}{k_\ell}-z_\ell\Bigr)^{-1}\right\rangle 
- c_1\cdots c_\ell \wc m_0\langle A\rangle \right|\prec \frac{\langle A A^\ast\rangle^{1/2}}{\delta^9\sqrt{n}}
\end{aligned}
\end{equation*} 
and that 
\begin{equation*}
    \left | \left\langle A \Bigl(\frac{X_\ell X_\ell^\top}{k_\ell} - z_\ell\Bigr)^{-1}\right\rangle - \wc m_\ell\langle A\rangle \right | \prec \frac{\langle A A^\ast\rangle^{1/2}}{\delta^9\sqrt{n}},
\end{equation*}
where \(\delta:=\dist(z_\ell,\R_+)\), and we recursively define 
\begin{equation}
\begin{split}
\label{eq:companion_stieltjes}
\Sigma_\mathrm{lin}^{\ell-1} &\coloneqq (\kappa_1^{\ell})^2\frac{X_{\ell-1}^\top X_{\ell-1}}{k_{\ell-1}} + (\kappa_\ast^{\ell})^2 I,\\
\wc m_\ell &\coloneqq \frac{n-k_\ell}{nz_\ell}+ \frac{n}{k_\ell} m_{\mu(\Sigma_\mathrm{lin}^{\ell-1})\boxtimes \mu_\mathrm{MP}^{n/k_\ell}}(z_\ell)\\
-\frac{1}{c_\ell} &\coloneqq \wc m_\ell z_\ell (\kappa_1^{\ell})^2, \quad z_{\ell-1} \coloneqq c_\ell z_\ell-\Bigl(\frac{\kappa_\ast^\ell}{\kappa_1^\ell}\Bigr)^2
\end{split}
\end{equation}
for \(\ell\ge 1\) and finally 
\begin{equation}
    \wc m_0 \coloneqq \frac{d-n}{nz_0} + \frac{d^2}{n^2}m_{\mu(\Omega_0)\boxtimes \mu_\mathrm{MP}^{d/n}}\Bigl(\frac{d}{n}z_0\Bigr). 
\end{equation}
\end{theorem}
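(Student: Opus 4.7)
The natural strategy is induction on the layer index $\ell$, with Proposition~\ref{prop:one_layer} as the one-step workhorse. The sample-covariance bound (second display) is immediate: the closing sentence of Proposition~\ref{prop:one_layer} propagates Assumption~\eqref{X0 assump} from $X_{\ell-1}$ to $X_\ell$, so by iterating we may apply the proposition's second conclusion with $X_{\ell-1}$ in place of $X_0$, which yields $\langle A(X_\ell X_\ell^\top/k_\ell - z_\ell)^{-1}\rangle \approx \wc m_\ell\langle A\rangle$ directly.

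For the Gram-side bound, the heart of the argument is a purely algebraic manipulation of the single-layer deterministic equivalent. The identity
\begin{equation*}
\wc m_\ell \Sigma_{\mathrm{lin}}^{\ell-1} + I \;=\; \wc m_\ell(\kappa_1^\ell)^2 \Bigl[\frac{X_{\ell-1}^\top X_{\ell-1}}{k_{\ell-1}} - z_{\ell-1}\Bigr],
\end{equation*}
with $z_{\ell-1}$ precisely as in~\eqref{eq:companion_stieltjes}, turns the output of Proposition~\ref{prop:one_layer} into
\begin{equation*}
\wh M_\ell(z_\ell) \;=\; c_\ell \Bigl(\frac{X_{\ell-1}^\top X_{\ell-1}}{k_{\ell-1}} - z_{\ell-1}\Bigr)^{-1}.
\end{equation*}
Hence, anisotropically, the layer-$\ell$ resolvent at $z_\ell$ is a scalar multiple of the layer-$(\ell-1)$ resolvent at the shifted parameter $z_{\ell-1}$. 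Iterating this identity $\ell$ times produces the prefactor $c_1 \cdots c_\ell$ and reduces the problem to the base-layer quantity $\langle A(X_0^\top X_0/d - z_0)^{-1}\rangle$. This last object is handled by the classical anisotropic Marchenko--Pastur equivalent for sample Gram matrices, applicable under Assumption~\eqref{X0 assump}; the explicit expression for $\wc m_0$ then arises from the companion-transform relation combined with the rescaling $X_0^\top X_0/d = (n/d)\cdot X_0^\top X_0/n$.

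The main technical obstacle is error propagation across layers. Each invocation of Proposition~\ref{prop:one_layer} contributes an error of order $\langle AA^\ast\rangle^{1/2}/(\delta_\ell^9\sqrt n)$ with $\delta_\ell := \dist(z_\ell,\R_+)$, and the induction then invokes the hypothesis at the shifted parameter $z_{\ell-1}$. The crucial input is the last assertion of Proposition~\ref{prop:one_layer}, $\dist(-1/\wc m(z),\R_+)\ge \dist(z,\R_+)$: combined with the affine relation $z_{\ell-1} = -1/(\wc m_\ell(\kappa_1^\ell)^2) - (\kappa_*^\ell/\kappa_1^\ell)^2$ and positivity of $(\kappa_1^\ell)^2$, it guarantees that $\delta_{\ell-1}$ does not shrink below a fixed multiple of $\delta_\ell$. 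The scalar prefactors $c_\ell = -1/(\wc m_\ell z_\ell(\kappa_1^\ell)^2)$ are likewise bounded once $\delta_\ell$ and the activation constants $\kappa_1^\ell, \kappa_*^\ell$ are fixed. For any finite $L$, the compounded error can thus be reabsorbed into the single $\delta^{-9}/\sqrt n$ bound stated in the theorem.
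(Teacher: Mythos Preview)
Your proposal is correct and follows essentially the same route as the paper: iterate Proposition~\ref{prop:one_layer} layer by layer, use the algebraic identity $\wh M_\ell(z_\ell)=c_\ell(X_{\ell-1}^\top X_{\ell-1}/k_{\ell-1}-z_{\ell-1})^{-1}$ to descend to layer~$0$, and close with the Gram-matrix deterministic equivalent (the paper invokes Proposition~\ref{co resolvent} here rather than ``classical'' Marchenko--Pastur, and implicitly uses the i.i.d.\ column structure of $X_0$, not just~\eqref{X0 assump}). Your treatment of error propagation via the distance inequality $\dist(-1/\wc m,\R_+)\ge\dist(z,\R_+)$ is more explicit than the paper's brief proof, but the underlying argument is the same.
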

Proofs of~\Cref{prop:one_layer}~and~\Cref{prop:mult_layers} are given in App.~\ref{App:anisotropic}.
\begin{remark}
    The same iteration argument has appeared before in~\cite{Fan2020SpectraOT}. The main difference to our present work is the anisotropic nature of our estimate which allows to test both sample covariance, as well as Gram resolvent against arbitrary deterministic matrices. As we will discuss in the next section, this is crucial in order to provide closed-form asymptotics for the test error of the deep random features model. 
\end{remark}

\subsection{Closed-formed formula for the population covariance}
In~\Cref{prop 1layer,prop:mult_layers} we iteratively considered \(X_\ell^\top X_\ell/k_\ell\) as a sample-covariance matrix with population covariance 
\begin{equation*}
    \E_{W_\ell} \frac{X_\ell^\top X_\ell}{k_\ell} = \E_w \sigma_\ell\Bigl(\frac{X_{\ell-1}^\top w}{\sqrt{k_{\ell-1}}}\Bigr)\sigma_\ell\Bigl(\frac{w^\top X_{\ell-1}}{\sqrt{k_{\ell-1}}}\Bigr) \approx \Sigma_\mathrm{lin}^\ell
\end{equation*}
and from this obtained formulas for the deterministic equivalents for both \(X_\ell^\top X_\ell\) and \(X_\ell X_\ell^\top\). A more natural approach would be to consider \(X_\ell X_\ell^\top/n\) as a sample covariance matrix with population covariance 
\begin{align}
\label{eq:def_pop_cov}
    \Omega_\ell&:=\E_{X_0} \frac{X_\ell X_\ell^\top}{n},
\end{align}
noting that the matrix \(X_\ell\) conditioned on \(W_1,\ldots,W_\ell\) has independent columns. \Cref{thm:res_concentration,co resolvent} apply also in this setting, but lacking a rigorous expression for \(\Omega_\ell\) the resulting deterministic equivalent is less descriptive than the one from~\Cref{prop:mult_layers}.
\textcolor{black}{A heuristic closed-form formula for the population covariance which is conjectured to be exact was recently derived in \cite{Cui2023}. We now discuss this result, and for the sake of completeness provide a derivation in Appendix App.~\ref{App:population_heuristic}}.
Consider the sequence of matrices $\{\Omega^{\mathrm{lin}}_\ell\}_\ell$ defined by the recursion
\begin{equation}
\label{eq:Omega_lin_recursion}
\Omega^{\mathrm{lin}}_{\ell+1}=\kappa_1^{(\ell+1) 2}\frac{W_{\ell+1} \Omega^{\mathrm{lin}}_\ell W_{\ell+1}^\top}{k_\ell}+\kappa_*^{(\ell+1)2}I_{k_{\ell+1}}.
\end{equation}
with $\Omega^{\mathrm{lin}}_0\coloneqq \Omega_0$. Informally, $\Omega^{\mathrm{lin}}_\ell$ provides an asymptotic approximation of $\Omega_\ell$ in the sense that the normalized distance $\sfrac{||\Omega^{\mathrm{lin}}_\ell-\Omega_\ell||_F}{\sqrt{d}}$ is of order $\mathcal{O}(\sfrac{1}{\sqrt{d}})$. Besides, the recursion \eqref{eq:Omega_lin_recursion} implies that $\Omega^{\mathrm{lin}}_\ell$ can be expressed as a sum of products of Gaussian matrices (and transposes thereof), and affords a straightforward way to derive an analytical expression its asymptotic spectral distribution. This derivation is presented in App.\, \ref{App:population_heuristic}.\looseness=-1

It is an interesting question whether an approximate formula for the population covariance matrix like the one in~\cref{eq:Omega_lin_recursion} can be obtained indirectly via~\Cref{prop:mult_layers}. There is extensive literature on this \emph{inverse problem}, i.e.\ how to infer spectral properties of the population covariance spectrum from the sample covariance spectrum, e.g.~\cite{10.1214/07-AOS581} but we leave this avenue to future work.

\subsection{Consistency of~\Cref{prop:mult_layers} and the approximate population covariance}
What we can note, however, is that~\Cref{eq:Omega_lin_recursion} is \emph{consistent} with~\Cref{prop:mult_layers}. We demonstrate this in case of equal dimensions \(n=d=k_1=\cdots=k_\ell\) to avoid unnecessary technicalities due to the zero eigenvalues. We define 
\begin{equation}
    \wh\mu_\ell \coloneqq \mu\Bigl(\frac{X_\ell^\top X_\ell}{k_\ell}\Bigr)=\wc\mu_\ell \coloneqq \mu\Bigl(\frac{X_\ell X_\ell^\top}{n}\Bigr)
\end{equation}
and recall that~\Cref{prop 1layer} implies that 
\begin{equation}
    \wh\mu_\ell \approx ((\kappa_1^l)^2\otimes\wh\mu_{l-1}\oplus (\kappa_\ast^l)^2)\boxtimes \mu_\mathrm{MP}.
\end{equation}
On the other hand~\eqref{SC MP} applied to the sample covariance matrix \(X_\ell X_\ell^\top/n\) with population covariance \(\Omega_\ell\approx\Omega_\ell^\mathrm{lin}\) implies that 
\begin{equation}
    \begin{split}
        \wc\mu_\ell &\approx \mu(\Omega_\ell^\mathrm{lin}) \boxtimes\mu_\mathrm{MP}\\
        &=\mu\Bigl((\kappa_1^{\ell})^2\frac{W_{\ell} \Omega^{\mathrm{lin}}_{\ell-1} W_{\ell}^\top}{k_{\ell-1}}+(\kappa_*^{\ell})^2 I_{k_{\ell}}\Bigr)\boxtimes \mu_\mathrm{MP}\\
        &\approx\Bigl((\kappa_1^\ell)\otimes \mu(\Omega_{\ell-1}^\mathrm{lin})\boxtimes\mu_\mathrm{MP}\oplus(\kappa_\ast^\ell)^2\Bigr)\boxtimes\mu_\mathrm{MP}\\
        &\approx \Bigl((\kappa_1^\ell) \otimes \wc\mu_{\ell-1}\oplus(\kappa_\ast^\ell)^2\Bigr)\boxtimes\mu_\mathrm{MP},
    \end{split}
\end{equation}
demonstrating that both approaches lead to the same recursion. Here in the third step we applied~\eqref{SC MP} to the sample covariance matrix \(\sqrt{\Omega_{\ell-1}^\mathrm{lin}}W_\ell^\top\), and in the fourth step used the first approximation for \(\ell\) replaced by \(\ell-1\). 
\section{Gaussian universality of the test error}
\label{sec:Error_uni}
In the second part of this work, we discuss how the results on the asymptotic spectrum of the empirical and population covariances of the features can be used to provide sharp expressions for the test and training errors \eqref{eq:def:errors} when the labels are generated by a deep random neural network:
 \begin{align}
 \label{eq:target}
     f_{\star}(\x^{\mu})=\sigma^\star\left(\frac{\theta_\star^\top\varphi^\star
    (\x^{\mu})}{\sqrt{k^\star}}\right).
 \end{align}
The feature map $\varphi^\star$ denotes the composition $\varphi_{L^\star}^\star\circ...\circ \varphi_1^\star$ of the $L^{\star}+1$ layers:
\begin{align*}
    \varphi^{\star}_\ell(\x)=\sigma_\ell^\ast\left(
    \frac{1}{\sqrt{k_{\ell-1}^\star}}W_\ell^\star \cdot \x
    \right),
\end{align*}
\noindent and $\theta_\star\in\mathbb{R}^{k^\star}$ is the last layer weights. To alleviate notations, we denote $k^\star:=k^\star_L$. The weight matrices $\{W_{\ell}^{\star}\}_{\ell\in [L^{\star}]}$ have i.i.d Gaussian entries sampled from $\mathcal{N}(0,\Delta_\ell^\star)$. Note that we do not require the sequence of activations $\{\sigma_\ell^\star\}_\ell$ and widths $\{\gamma_\ell\coloneqq \sfrac{k_\ell^\star}{d}\}_\ell$ to match with those of the learner dRF \eqref{eq:definition_multilayer_RF}. We address in succession
\begin{itemize}
    \item The well-specified case where the target and learner networks share the same intermediate layers (i.e. same architecture, activations and weights) $\varphi^{\star}_{\ell}=\varphi_{\ell}$, $\ell\in[L]$ with $L^{\star}=L$, and the readout of the dRF is trained using ridge regression. This is equivalent to the interesting setting of ridge regression on a linear target, with features drawn from a non-Gaussian distribution, resulting from the propagation of Gaussian data through several non-linear layers.
    \item The general case where the target and learner possess generically distinct architectures, activations and weights, and a generic convex loss.
\end{itemize}
In both cases, we provide a sharp asymptotic characterization of the test error. Furthermore, we establish the equality of the latter with the test error of an equivalent learning problem on \textit{Gaussian samples} with matching population covariance, thereby showing the Gaussian universality of the test error. In the well-specified case, our results are rigorous, and make use of the deterministic equivalent provided by Theorem \ref{prop:mult_layers}. In the fully generic case, we formulate a conjecture, which we strongly support with finite-size numerical experiments.

\subsection{Well-specified case}
We first establish the Gaussian universality of the test error of dRFs in the matched setting $\varphi=\varphi^\star$, for a readout layer trained using a square loss. This corresponds to $\mathcal{Y}=\mathbb{R}$, $\ell(y,\hat{y}) = \sfrac{1}{2}(y-\hat{y})^2$. This case is particularly simple since the empirical risk minimization problem \eqref{eq:ERM} admits the following closed form solution:
\begin{align}
\label{eq:ridge:sol}
\hat{\theta} = \sfrac{1}{\sqrt{k}}(\lambda I_{k}+\sfrac{1}{k}X_{L}X_{L}^{\top})^{-1}X_{L}y
\end{align}
\noindent where we recall the reader $X_{L}\in\mathbb{R}^{k\times n}$ is the matrix obtained by stacking the last layer features column-wise and $y\in\mathbb{R}^{n}$ is the vector of labels. For a given target function, computing the test error boils down to a random matrix theory problem depending on variations of the trace of deterministic matrices times the resolvent of the features sample covariance matrices (c.f. App.\, \ref{App:error:ridge} for a derivation):
\begin{align}
\label{eq:ridge:rmt}
\epsilon_g(\hat{\theta}) &= \Delta\left(\left\langle\Omega_{L}\left(\lambda I_{k}+\sfrac{1}{k}X_{L}X_{L}\right)^{-1}\right\rangle+1\right)\notag\\
&\qquad-\lambda(\lambda-\Delta)\partial_{\lambda}\left\langle\Omega_{L}\left(\lambda I_{k}+\sfrac{1}{k}X_{L}X_{L}\right)^{-1}\right\rangle
\end{align}
Applying Theorem \ref{prop:mult_layers} yields the following corollary:
\begin{corollary}[Ridge universality of matched target] 
\label{prop:universality:ridge}
Let $\lambda > 0$. 
In the asymptotic limit $n, d, k_\ell\xrightarrow{}\infty$ with fixed $\mathcal{O}(1)$ ratios $\alpha = \sfrac{n}{d}$, $\gamma_\ell\coloneqq\sfrac{k_\ell}{d}$ and under the assumptions of Theorem \ref{prop:mult_layers}, the asymptotic test error of the ridge estimator \eqref{eq:ridge:sol} on the target \eqref{eq:target} with $L=L^{*}$ and $\varphi_{\ell}^{*} = \varphi_{\ell}$ and additive Gaussian noise with variance $\Delta>0$ is given by:
\begin{align}
\label{eq:error:ridge}
\epsilon_g(\hat{\theta}) \xrightarrow{k\to\infty} \epsilon_g^{\star} &= \Delta\left(\langle\Omega_{L}\rangle\wc m_{L}(-\lambda)+1\right)\notag\\
&\qquad-\lambda(\lambda-\Delta)\langle\Omega_{L}\rangle\partial_{\lambda}\wc m_{L}(-\lambda)
\end{align}
where $\wc m_{L}$ can be recursively computed from \eqref{eq:companion_stieltjes} respectively. In particular, this implies Gaussian universality of the asymptotic mean-squared error in this model, since \eqref{eq:error:ridge} exactly agrees with the asymptotic test error of ridge regression on Gaussian data $\x\sim\mathcal{N}(0_{d},\Omega_{L})$ derived in \cite{Dobriban2015HighDimensionalAO}.
\end{corollary}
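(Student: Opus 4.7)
The plan is to combine the exact bias--variance formula~\eqref{eq:ridge:rmt} with the anisotropic deterministic equivalent from~\Cref{prop:mult_layers}. The derivation of~\eqref{eq:ridge:rmt} from the closed-form ridge solution~\eqref{eq:ridge:sol} is a routine bias--variance decomposition of the ridge risk against a linear signal plus independent Gaussian noise, carried out in App.~\ref{App:error:ridge}; granting it, the corollary reduces to controlling two random scalars in the proportional limit: the trace $T(\lambda):=\langle\Omega_L(\lambda I_k + X_L X_L^\top/k)^{-1}\rangle$ and its $\lambda$-derivative.

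For the first, I would rewrite $\lambda I_k + X_L X_L^\top/k = X_L X_L^\top/k - z_L$ with $z_L = -\lambda \in \mathbf C\setminus\R_+$ and apply the Gram-matrix bound from~\Cref{prop:mult_layers} with deterministic test matrix $A = \Omega_L$. The factor $\langle\Omega_L^2\rangle^{1/2}$ appearing on the right-hand side of the estimate is $\mathcal O(1)$, since the operator-norm control on $X_L/\sqrt{k}$ provided by the iterated Assumption~\eqref{X0 assump} (itself established while proving~\Cref{prop:mult_layers}) upper bounds $\norm{\Omega_L}$. This yields
\begin{equation*}
T(\lambda) - \langle\Omega_L\rangle\,\wc m_L(-\lambda) \prec \frac{1}{\sqrt n}.
\end{equation*}
For the derivative term, $\partial_\lambda T(\lambda)$ is a squared-resolvent trace not directly covered by~\Cref{prop:mult_layers}; I would promote the convergence above to a convergence of its derivative through a Cauchy contour integral. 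Writing $F_n(z):=\langle\Omega_L(X_LX_L^\top/k -z)^{-1}\rangle$, one has $\partial_\lambda T(\lambda) = -\partial_z F_n(-\lambda)$, and this derivative is expressible as $(2\pi i)^{-1}\oint_\Gamma (w+\lambda)^{-2} F_n(w)\,\dif w$ over any fixed circle $\Gamma\subset\mathbf C\setminus\R_+$ around $-\lambda$ (take, e.g., radius $\lambda/2$). Since the prefactor $\dist(w,\R_+)^{-9}$ in~\Cref{prop:mult_layers} is uniformly bounded on $\Gamma$, and the stochastic-domination estimate can be promoted to hold simultaneously on $\Gamma$ by the usual net-plus-Lipschitz argument for resolvents, inserting the deterministic equivalent under the integral gives $\partial_\lambda T(\lambda) \to \langle\Omega_L\rangle\,\partial_\lambda \wc m_L(-\lambda)$. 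Substituting both asymptotics into~\eqref{eq:ridge:rmt} produces~\eqref{eq:error:ridge}.

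The Gaussian-universality interpretation then follows by comparing~\eqref{eq:error:ridge} with the ridge risk formula of~\cite{Dobriban2015HighDimensionalAO} for Gaussian data $x\sim\mathcal N(0_d,\Omega_L)$: both expressions depend on the covariance only through the companion Stieltjes transform of $\mu(\Omega_L)\boxtimes\mu_\mathrm{MP}^{k/n}$ evaluated at $-\lambda$ and its derivative, and the consistency calculation of~\Cref{sec:spectrum} identifies that Stieltjes transform with $\wc m_L(-\lambda)$ defined recursively by~\eqref{eq:companion_stieltjes}. The main obstacle, as flagged above, is transferring the single-resolvent bound to one on its $\lambda$-derivative; this relies essentially on the quantitative, $z$-uniform character of the error in~\Cref{prop:mult_layers}, without which the contour-integral step would not go through.
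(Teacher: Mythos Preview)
Your proposal is correct and follows essentially the same route as the paper: the bias--variance expansion yielding~\eqref{eq:ridge:rmt}, then~\Cref{prop:mult_layers} applied with $A=\Omega_L$ for the single-resolvent trace, and the Cauchy contour representation $G'(z)=(2\pi i)^{-1}\oint_\gamma (w-z)^{-2}G(w)\dif w$ over a circle of radius $\dist(z,\R_+)/2$ to transfer the estimate to the squared resolvent. The only cosmetic difference is that the paper does not spell out the net-plus-Lipschitz uniformisation on the contour, and your justification that $\langle\Omega_L^2\rangle^{1/2}=\mathcal O(1)$ via the operator-norm bound on $X_L/\sqrt{k}$ is slightly indirect (since $\Omega_L$ is a population quantity), but neither point affects the argument.
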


A detailed derivation of \eqref{eq:ridge:rmt} and Corollary \ref{prop:universality:ridge} is given in App.~\ref{App:error:ridge}, together with a discussion of possible extensions to deterministic last-layer weights and general targets. Note that, while it is not needed to establish the Gaussian equivalence of ridge dRF regression in the well-specified case, the trace of the population covariance $\langle\Omega_{L}\rangle$ can be explicitly computed from the closed-form formula \eqref{eq:Omega_lin_recursion}.

\subsection{General case}

Despite the major progress stemming from the application of the random matrix theory toolbox to learning problems, the application of the latter has been mostly limited to quadratic problems where a closed-form expression of the estimators, such as \eqref{eq:ridge:sol}, are available. Proving universality results akin to Corollary \ref{prop:universality:ridge} beyond quadratic problems is a challenging task, which has recently been the subject of intense investigation. In the context of generalized linear estimation \eqref{eq:ERM}, universality of the test error for the $L=1$ random features model under a generic convex loss function was heuristically studied in \cite{Gerace2020GeneralisationEI}, where the authors have shown that the asymptotic formula for the test error obtained under the Gaussian design assumption perfectly agreed with finite-size simulations with the true features. This Gaussian universality of the test error was later proven by \cite{Hu2020UniversalityLF} by combining a Lindeberg interpolation scheme with a generalized central limit theorem. Our goal in the following is to provide an analogous contribution as \cite{Gerace2020GeneralisationEI} to the case of multi-layer random features. This result builds on a rigorous, closed-form formula for the asymptotic test error of misspecified generalized linear estimation in the high-dimensional limit considered here, which was derived in \cite{Loureiro2021CapturingTL}. 

We show that in the high-dimensional limit the asymptotic test error for the model introduced in~\Cref{sec:setting} is in the \emph{Gaussian universality class}. More precisely, the test error of this model is asymptotically equivalent to the test error of an equivalent Gaussian covariate model (GCM) consisting of doing generalized linear estimation on a dataset $\check{\mathcal{D}}=\{v^\mu,\check{y}^\mu\}_{\mu\in[n]}$ with labels $\check{y}^\mu=f_\star(\sfrac{1}{\sqrt{k^\star}}\theta_\star^\top u^\mu)$ and jointly Gaussian covariates: 
\begin{equation}
\label{eq:g3m}
   (u,v)\sim \mathcal{N}\left(
\begin{array}{cc}
     \Psi_{L^\star}&\Phi_{L^\star L}  \\
     \Phi_{L^\star L} ^\top &\Omega_L 
\end{array}
\right) 
\end{equation}
where we recall $\Omega_L$ is the variance of the model features \eqref{eq:def_pop_cov} and $\Phi\in\mathbb{R}^{k^\star \times k}$ and $\Psi\in\mathbb{R}^{k^\star\times k^\star}$ are the covariances between the model and target features and the target variance respectively:
\begin{align}
\label{eq:Phi}
    \Phi_{L^\star L}\coloneqq\E\left[\varphi^{\star}(\x)\varphi(\x)^\top\right],~ \Psi_{L^\star }\coloneqq\E \left[\varphi^\star(\x) \varphi^\star(\x)^\top\right]
\end{align}
This result adds to a stream of recent universality results in high-dimensional linear estimation \cite{Loureiro2021CapturingTL, Montanari2022UniversalityOE, Gerace2022}, and generalizes the random features universality of \cite{Mei2021GeneralizationEO, Goldt2021TheGE, Hu2020UniversalityLF} to $L>1$. It can be summarized in the following conjecture:
\begin{figure}[t!]

    \centering
     \includegraphics[scale=.46]{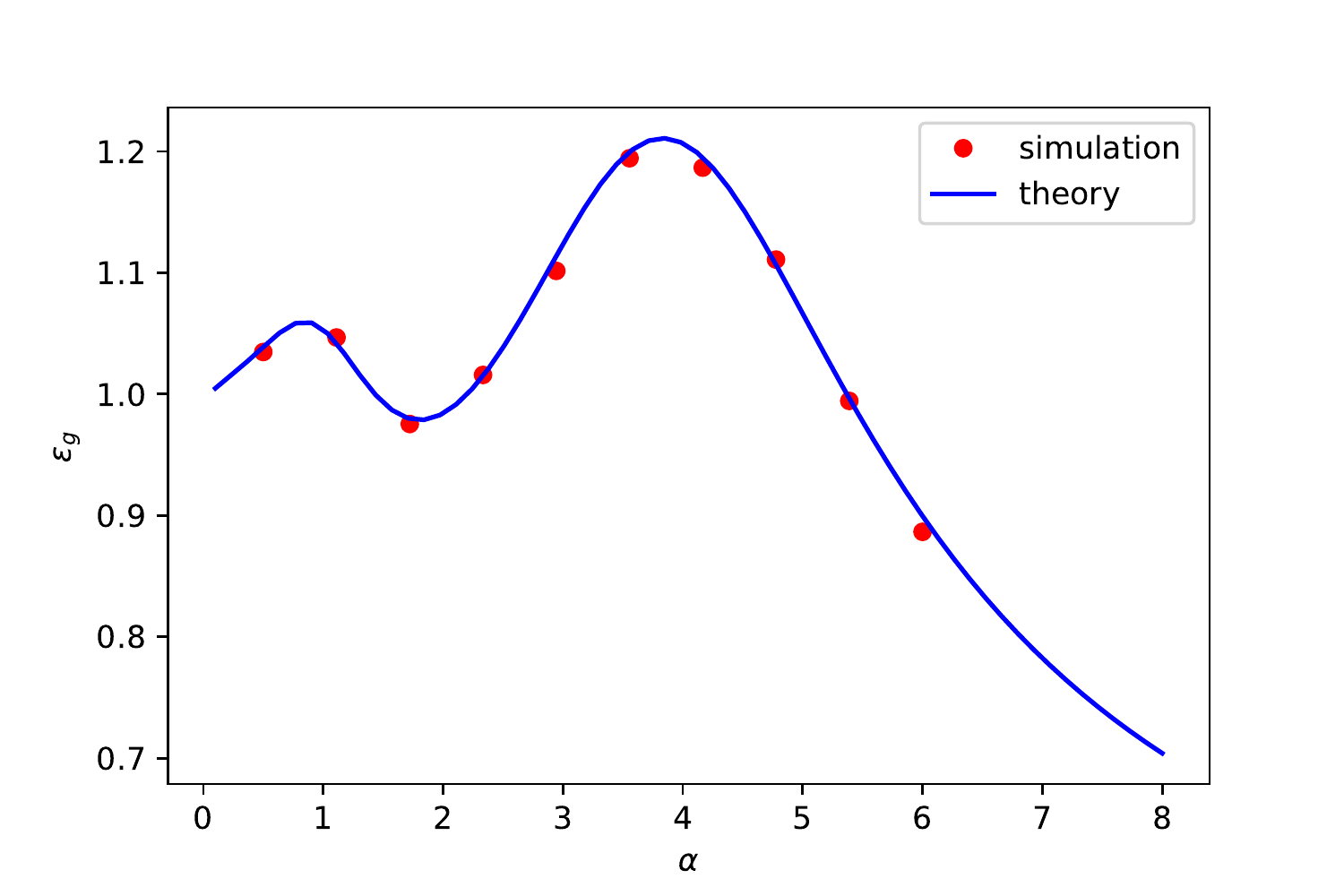}
    \includegraphics[scale=.46]{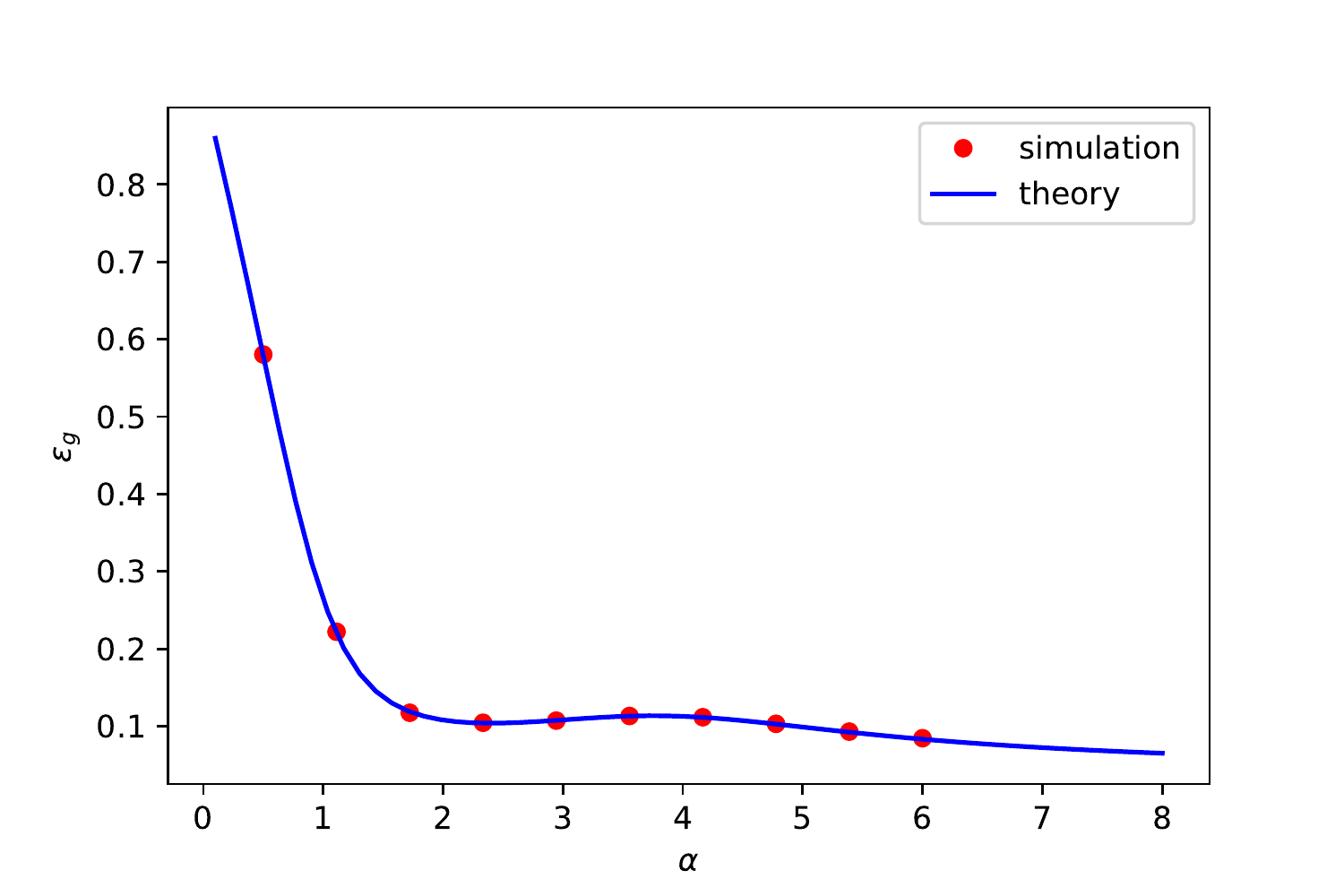}

    \caption{Learning curves $\epsilon_g(\alpha)$ for ridge regression ($\sigma_\star=id$,  $\ell(y,z)=\sfrac{1}{2}(y-z)^2$, and $g(y,\hat{y})=(y-\hat{y})^2$)
    . Red dots correspond to numerical simulations on the learning model \eqref{eq:definition_multilayer_RF} \eqref{eq:target}, averaged over $20$ runs. The solid line correspond to sharp asymptotic characterization provided by conjecture \ref{conj:error_uni_lin}, and detailed in App.\,\ref{App:error:general}. (left) 2-layers target ($L^\star=1$,$\sigma^\star_1=\mathrm{sign}$), (right) single-layer target ($L^\star=0$). Both are learnt with a $2-$hidden layers RF \eqref{eq:definition_multilayer_RF} with $\sigma_{1,2}(x)=\tanh(2x)$ activation and regularization $\lambda=0.001$.}
    \label{fig:regression}
\end{figure}

\begin{figure}

    \centering
    \includegraphics[scale=.46]{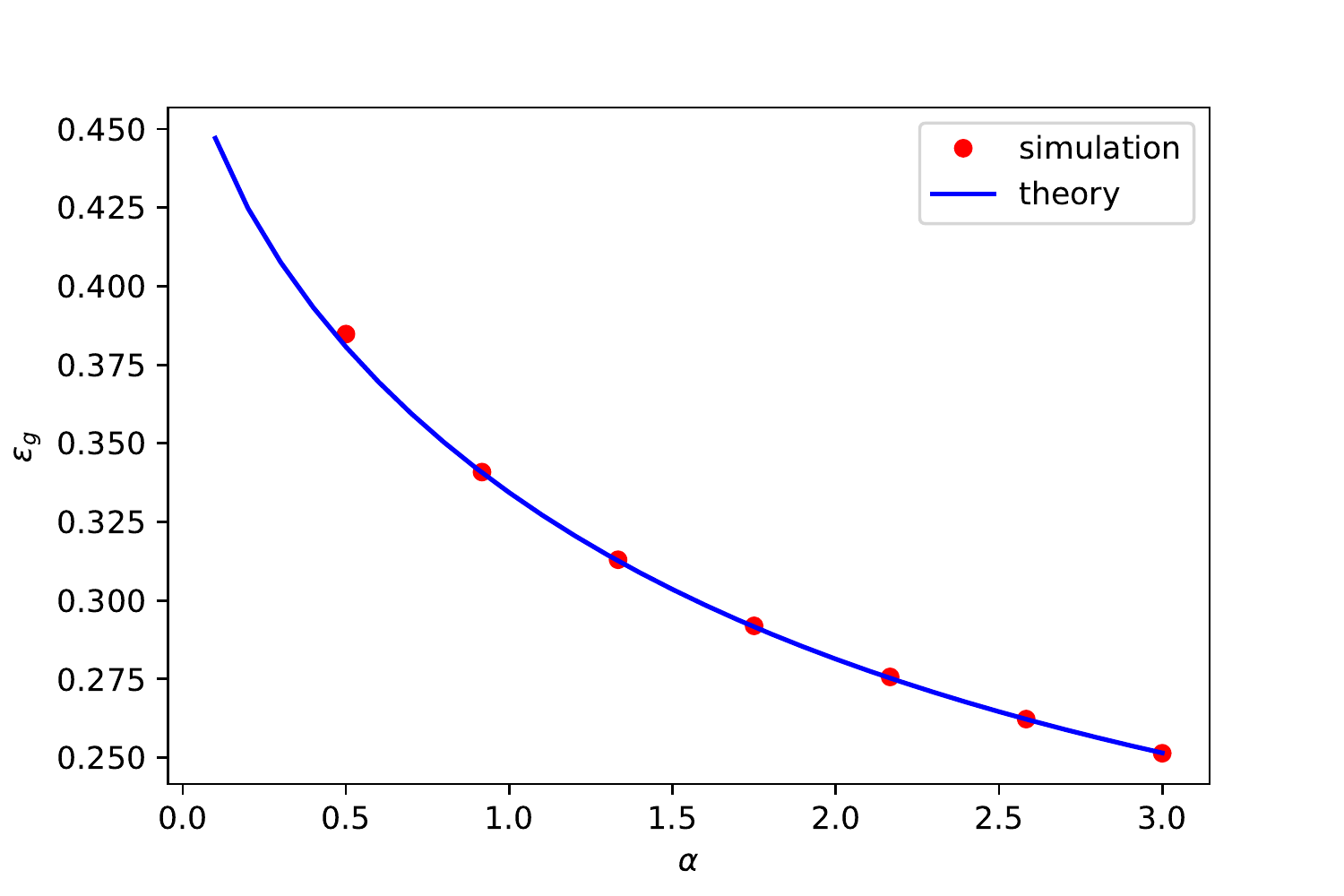}
    \includegraphics[scale=.46]{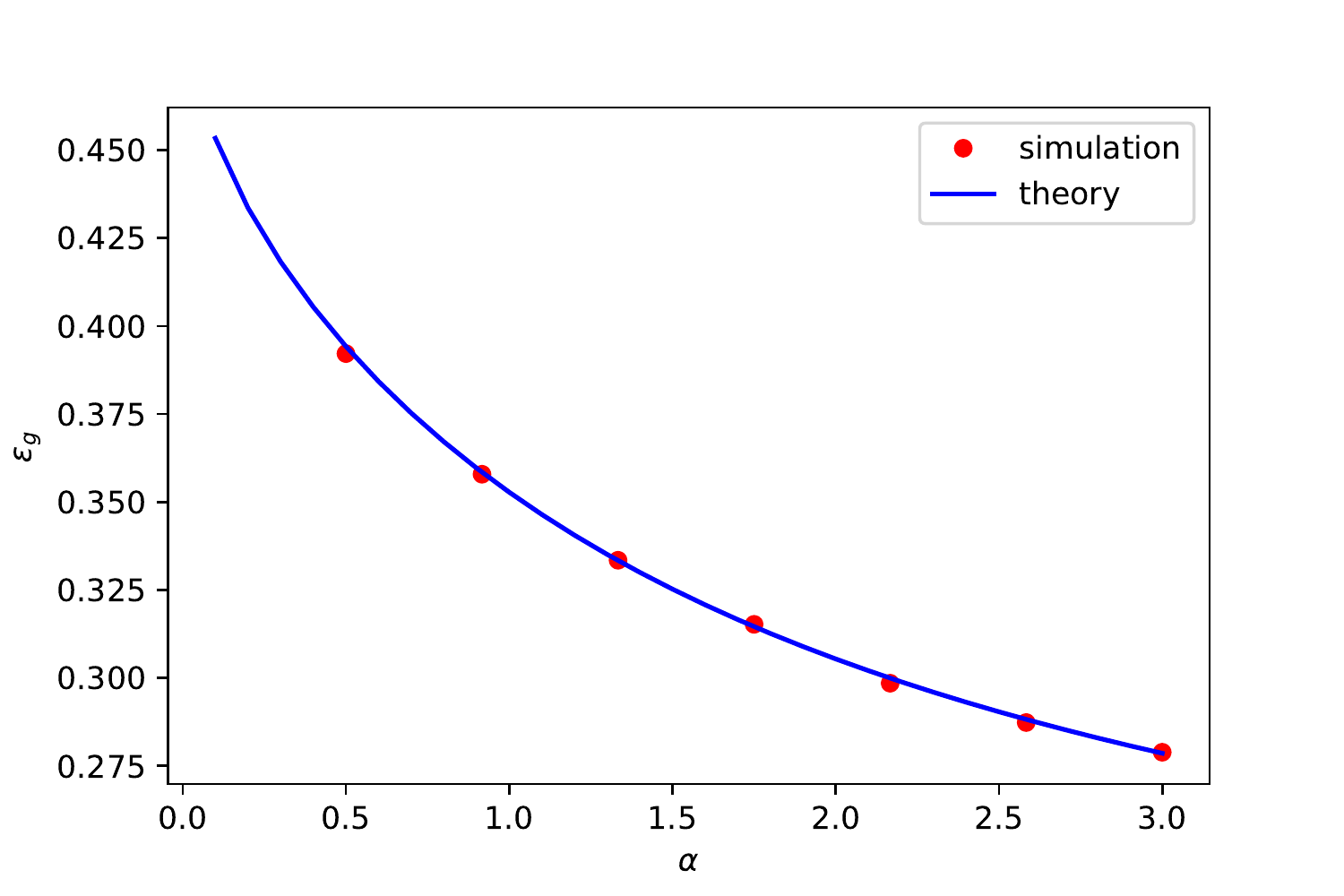}

    \caption{Learning curves $\epsilon_g(\alpha)$ for logistic regression ($\sigma_\star=\mathrm{sign}$, $\ell(y,z)=\ln(1+e^{-yz})$ and metric $g(y,\hat{y})=1-\Theta(y\hat{y})$). Red dots correspond to numerical simulations on the learning model \eqref{eq:definition_multilayer_RF} \eqref{eq:target}, averaged over $20$ runs. The solid line correspond to sharp asymptotic characterization provided by conjecture \ref{conj:error_uni_lin}, and detailed in App.\,\ref{App:error:general}. (left) single-layer target ($L^\star=0$), (right) two-layer target ($L^\star=1$, $\sigma^\star_1=\mathrm{erf}$) \eqref{eq:target} hidden sign layer. Both are learnt with a depth $L=2$ dRF \eqref{eq:definition_multilayer_RF} with activation $\sigma_{1,2}(x)=\tanh(2x)$ 
 and regularization $\lambda=0.05$ (top) and $\sigma_{1,2}(x)=\erf(x)$ and $\lambda=0.1$ (bottom).}
    \label{fig:logistic}
\end{figure}

\begin{conjecture}
\label{thm:gep}
In the high-dimensional limit $n,d, k_\ell\xrightarrow{}\infty$ at fixed $\mathcal{O}(1)$ ratios $\alpha\coloneqq\sfrac{n}{d}$ and $\gamma_\ell\coloneqq\sfrac{k_\ell}{d}$, the test error of the empirical risk minimizer \eqref{eq:ERM} trained on $\mathcal{D}=\{(\x^{\mu}, y^{\mu})\}_{\mu\in[n]}$ with covariates $\x^{\mu}\sim\mathcal{N}(0_{d},\Omega_{0})$ and labels from \eqref{eq:target} is equal to the one of a Gaussian covariate model \eqref{eq:g3m} with matching second moments $\Psi, \Phi, \Omega$ as defined in \eqref{eq:def_pop_cov} and \eqref{eq:Phi}.
\end{conjecture}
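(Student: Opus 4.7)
The plan is to combine an inductive, layer-by-layer Gaussian-equivalence swap with the convex Gaussian min-max framework and the sharp asymptotic characterization of convex ERM on Gaussian covariates from~\cite{Loureiro2021CapturingTL}. The latter already gives a closed-form expression for the test error of the GCM in~\eqref{eq:g3m} as a continuous functional of a handful of scalar order parameters (overlaps \(\hat\theta^\top\Omega_L\hat\theta/k\), \(\hat\theta^\top\Phi_{L^\star L}^\top\theta_\star/\sqrt{k k^\star}\), and the self-overlap of the target), which are in turn determined by the three asymptotic covariance matrices \(\Omega_L,\Phi_{L^\star L},\Psi_{L^\star}\). Thus the conjecture reduces to universality: the ERM minimizer \(\hat\theta\) in~\eqref{eq:ERM} with the true dRF covariates is characterized, in the proportional asymptotic limit, by the same fixed-point system as its Gaussian surrogate with matching second moments.

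To establish this, I would construct an interpolation that swaps one nonlinear layer at a time, from the last hidden layer down to the input, replacing each inner layer \(\sigma_\ell(W_\ell\,\cdot/\sqrt{k_{\ell-1}})\) by its Gaussian-equivalent linearization \(\kappa_1^\ell W_\ell\,\cdot/\sqrt{k_{\ell-1}}+\kappa_\ast^\ell\,\xi_\ell\), with \(\xi_\ell\) a fresh centered Gaussian independent of everything else. Conditionally on all earlier layers, the input to layer \(\ell\) satisfies Assumption~\eqref{X0 assump} --- this is exactly the content of the last assertion of~\Cref{prop:one_layer}, applied iteratively --- and a single-layer Gaussian equivalence applies via a Hermite expansion of \(\sigma_\ell\) together with concentration of quadratic forms in \(W_\ell\). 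The per-swap change in the regularized empirical risk, uniformly in the readout \(\theta\) over the \(\ell^2\)-ball dictated by \(\lambda\), is then transferred to the ERM minimizer and the test error either by a Lindeberg argument in the spirit of~\cite{Hu2020UniversalityLF} for smooth strongly convex losses, or by a convex Gaussian min-max argument in the spirit of~\cite{Montanari2022UniversalityOE} for general convex losses. Summing over the \(L\) swaps yields an \(o(1)\) total error; at the end of the interpolation the learner sees jointly Gaussian covariates \((u,v)\) with exactly the covariance structure of~\eqref{eq:g3m}, and invoking~\cite{Loureiro2021CapturingTL} closes the argument. All traces of deterministic matrices against sample-covariance and Gram resolvents that appear both in the fixed-point system and in the swap bounds are controlled by the anisotropic deterministic equivalent~\Cref{prop:mult_layers}, while the matrices \(\Omega_L,\Phi_{L^\star L},\Psi_{L^\star}\) entering~\eqref{eq:g3m} are identified through the recursion~\eqref{eq:Omega_lin_recursion} and its natural analogues for the target and cross covariances.

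The hard part, I expect, is the single-layer Gaussian-equivalence step carried out in the \emph{interior} of the network. The proofs of~\cite{Hu2020UniversalityLF,Goldt2021TheGE} are stated for near-isotropic, roughly sub-Gaussian inputs, whereas at an intermediate layer the input has anisotropic population covariance \(\Omega_{\ell-1}\) and is a highly non-Gaussian, non-separable function of the training data. Upgrading single-layer universality to Lipschitz-concentrated inputs with controlled operator and Frobenius norms --- quantitatively, with error estimates that are \(o(1)\) uniformly over both the readout \(\theta\) and over the resolvent parameter driving the companion fixed-point equations of~\cite{Loureiro2021CapturingTL} --- appears to be the genuine technical work, and is exactly where the anisotropic (rather than merely tracial, as in~\cite{Fan2020SpectraOT}) strength of~\Cref{prop:mult_layers} becomes indispensable.
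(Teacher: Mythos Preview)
The paper does not prove this statement; it is labeled a \emph{conjecture} precisely because no proof is offered. The paper's support for it consists entirely of numerical experiments (Figs.~\ref{fig:regression} and~\ref{fig:logistic}) together with the rigorous special case of Corollary~\ref{prop:universality:ridge} (matched architecture, ridge loss). In the text the authors explicitly position Conjecture~\ref{thm:gep} as the dRF analogue of the heuristic result of~\cite{Gerace2020GeneralisationEI}, whose rigorous counterpart~\cite{Hu2020UniversalityLF} exists only for \(L=1\). So there is no paper proof to compare your proposal against.

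That said, your outline is a coherent plan of attack and aligns with how the paper itself frames the open problem. The layer-by-layer swap, the appeal to~\cite{Loureiro2021CapturingTL} once covariates are Gaussian, and the use of the anisotropic deterministic equivalent (Theorem~\ref{prop:mult_layers}) for the resolvent traces are all natural. Your identification of the hard step is also exactly right, and it is the reason the statement remains a conjecture: the single-layer universality arguments of~\cite{Hu2020UniversalityLF,Goldt2021TheGE,Montanari2022UniversalityOE} do not currently handle inputs that are themselves non-Gaussian, anisotropic post-activations of earlier random layers, with control uniform in the readout \(\theta\) over an extensive ball. Making that step rigorous---in particular, extending the Lindeberg or CGMT comparison to Lipschitz-concentrated but non-separable inputs with only the guarantees of Assumption~\eqref{X0 assump}---is genuine new work that neither you nor the paper supplies. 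Until that is done, what you have written is a proof \emph{sketch}, not a proof, and the paper's choice to call this a conjecture is the honest one.
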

We go a step further and provide a sharp asymptotic expression for the test error. Construct recursively the sequence of matrices
\begin{align}
\label{eq:Psi_lin_recursion}
   \Psi^{\mathrm{lin}}_{\ell+1}=\left(\kappa_1^{\star(\ell+1) }\right)^{2}\frac{W_{\ell+1}^\star \Psi^{\mathrm{lin}}_\ell W_{\ell+1}^{\star\top}}{k_\ell^\star}+\left(\kappa_*^{\star(\ell+1)}\right)^{2}I_{k^\star_{\ell+1}}
\end{align}
with the initial condition $\Omega^{\mathrm{lin}}_0=\Psi^{\mathrm{lin}}_0\coloneqq\Omega_0$. Further define
   \begin{align}
\label{eq:Phi_lin_DRM}
    \Phi^{\mathrm{lin}}_{L^\star L}=
    \left(\prod\limits_{\ell=L^\star}^1 \frac{\kappa_1^{\star\ell}W_{\ell}^\star}{\sqrt{k_\ell^\star}}\right)\cdot \Omega_0 \cdot \left( \prod\limits_{\ell=1}^L\frac{\kappa_1^\ell W_\ell^\top}{\sqrt{k_\ell}}\right).
\end{align}
The sequence $\{\kappa_{1}^{\star\ell} \kappa_*^{\star\ell}\}_{\ell=1}^{L^\star}$ is define by \eqref{eq:kappa_multilayer} with $\sigma_\ell^\star,\Delta_\ell^\star$. In the special case $L^\star=0$, which correspond to a single-index target function, the first product in $\Phi^{\mathrm{lin}}_{L^\star L}$ should be replaced by $I_d$. This particular target architecture is also known, in the case $L=1$, as the \textit{hidden manifold model} \cite{Goldt2020ModellingTI,Gerace2020GeneralisationEI} and affords a stylized model for structured data. The present paper generalizes these studies to arbitrary depths $L$. One is then equipped to formulate the following, stronger, conjecture:\\
\begin{conjecture}
\label{conj:error_uni_lin}
In the same limit as in Conjecture \ref{thm:gep}, the test error of the empirical risk minimizer \eqref{eq:ERM} trained on $\mathcal{D}=\{(\x^{\mu}, y^{\mu})\}_{\mu\in[n]}$ with covariates $\x^{\mu}\sim\mathcal{N}(0_{d},\Omega_{0})$ and labels from \eqref{eq:target} is equal to the one of a Gaussian covariate model \eqref{eq:g3m} with the matrices $\Psi^{\mathrm{lin}}_{L^\star},\Omega^{\mathrm{lin}}_L,\Phi^{\mathrm{lin}}_{L^\star L}$ \eqref{eq:Omega_lin_recursion},\eqref{eq:Phi_lin_DRM}. 
\end{conjecture}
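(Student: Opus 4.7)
The plan is to establish \Cref{conj:error_uni_lin} in two independent stages. First, reduce to an equivalent Gaussian covariate model with the \emph{true} second-moment matrices $(\Omega_L,\Phi_{L^\star L},\Psi_{L^\star})$; this is exactly the statement of \Cref{thm:gep}. Second, replace each of these three population matrices by its linearized surrogate $(\Omega^{\mathrm{lin}}_L,\Phi^{\mathrm{lin}}_{L^\star L},\Psi^{\mathrm{lin}}_{L^\star})$ without changing the asymptotic test error, leveraging the anisotropic deterministic equivalent of \Cref{prop:mult_layers}.

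For the first stage, I would extend the Lindeberg interpolation scheme of \cite{Hu2020UniversalityLF} layer-by-layer. Fix an ordering of the $L$ nonlinear layers of the learner and the $L^\star$ layers of the target, and introduce a chain of interpolating datasets $\mathcal D^{(0)},\mathcal D^{(1)},\ldots$ in which at step $j$ the features produced by the first $j$ layers are replaced by their Gaussian equivalent sharing the leading Hermite coefficients $(\kappa_1^\ell,\kappa_*^\ell)$ (respectively $(\kappa_1^{\star\ell},\kappa_*^{\star\ell})$ on the target side). The Lindeberg step relies on a smoothing of the empirical risk through a free-energy representation as in \cite{Hu2020UniversalityLF}, reducing the comparison between consecutive interpolants to a bound on the second and third moments of the activation minus its Hermite-linear part. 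The recursive preservation of Assumption~\eqref{X0 assump} from $X_0$ to $X_\ell$ guaranteed by the closing sentence of \Cref{prop:one_layer} is precisely what enables iterating the single-layer interpolation through arbitrary depth.

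For the second stage, invoke the closed-form characterization of the asymptotic test error of the GCM from \cite{Loureiro2021CapturingTL}: this error is determined by a finite-dimensional fixed-point system whose only dependence on $(\Omega,\Phi,\Psi)$ enters through normalized traces of resolvent-rational expressions in these matrices tested against the order parameters. For the diagonal blocks, \Cref{prop:mult_layers} applied to the deterministic matrices arising in the fixed-point equations yields, after substituting $\Omega_L\mapsto\Omega^{\mathrm{lin}}_L$, the same limits; the same statement applied to the independent stack $\{W^\star_\ell\}$ handles $\Psi_{L^\star}\mapsto\Psi^{\mathrm{lin}}_{L^\star}$. The cross-term $\Phi_{L^\star L}$ is handled by a separate concentration estimate: conditioning on $\{W_\ell,W^\star_\ell\}$ and expanding both feature maps in their respective Hermite bases, the non-linear Hermite components of the two feature maps decorrelate and contribute only subleading fluctuations, so that $\Phi_{L^\star L}$ agrees with $\Phi^{\mathrm{lin}}_{L^\star L}$ once tested against the $\mathcal O(1)$-norm matrices dictated by the fixed-point system.

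The principal obstacle will be the \emph{joint} Lindeberg replacement in the first stage: because the deeper layers are nonlinear functions of shallower ones, swapping layer $j$ alters the input distribution of layer $j+1$ in a correlated fashion. Overcoming this requires careful bookkeeping that isolates the contribution of the Hermite-linear component (whose replacement is exact in second moments and amenable to Gaussian integration by parts) from the higher Hermite components, which must be shown to produce a vanishing macroscopic contribution to the test error despite not being Gaussian. The recursive preservation of \eqref{X0 assump} across layers in \Cref{prop:one_layer}, together with the anisotropic testing afforded by \Cref{prop:mult_layers}, is exactly what makes such a layer-by-layer argument tractable.
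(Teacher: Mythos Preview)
The paper does not prove this statement: \Cref{conj:error_uni_lin} is explicitly presented as a conjecture, and the only support offered is numerical (Figs.~\ref{fig:regression} and~\ref{fig:logistic}). There is no ``paper's own proof'' to compare against, so your proposal is an attempt to prove an open conjecture rather than a reconstruction of an existing argument.

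On the substance of your plan, two genuine gaps stand out. First, Stage~1 is literally \Cref{thm:gep}, which is itself an unproven conjecture in the paper; you have restated the problem, not reduced it. The layer-by-layer Lindeberg scheme you sketch is the natural strategy, but the single-layer proof of \cite{Hu2020UniversalityLF} rests on a one-dimensional CLT for the preactivations that does not iterate automatically: once you swap layer~$j$, the input to layer~$j{+}1$ is no longer a deterministic nonlinear image of a Gaussian, and the ``recursive preservation of~\eqref{X0 assump}'' from \Cref{prop:one_layer} that you invoke only propagates a max-norm bound on the Gram matrix, which is far weaker than what a Lindeberg replacement requires.

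Second, Stage~2 misapplies \Cref{prop:mult_layers}. That theorem controls traces of the resolvent of the \emph{sample} matrix $X_\ell X_\ell^\top/k_\ell$ (or its Gram counterpart), not of the \emph{population} covariance $\Omega_L$. The fixed-point system of \cite{Loureiro2021CapturingTL} is written in terms of resolvents of $\Omega_L$, and \Cref{prop:mult_layers} says nothing about replacing $\Omega_L$ by $\Omega_L^{\mathrm{lin}}$ inside those expressions; the approximation $\Omega_L\approx\Omega_L^{\mathrm{lin}}$ is derived only heuristically in Appendix~\ref{App:population_heuristic} and is itself attributed to \cite{Cui2023} as conjectural. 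The paper makes this limitation explicit: in the ``Possible extensions'' discussion of Appendix~\ref{App:error:ridge} it singles out the cross-term $\Phi_{L^\star L}$ as requiring ``a more refined deterministic equivalent than Thm.~\ref{prop:mult_layers} provides'' and leaves it to future work. Your ``separate concentration estimate'' for $\Phi$ is asserted but not supplied, and this is precisely the step the authors could not close.
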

Conjecture \ref{conj:error_uni_lin} allows to give a fully analytical sharp asymptotic characterization of the test error, which we detail in App.\, \ref{App:error:general}. Importantly, observe that it also affords compact closed-form formulae for the population covariances $\Omega_L, \Phi_{L^\star L},\Psi_{L^\star}$. In particular the spectrum of $\Psi^{\mathrm{lin}}_{L^\star},\Omega^{\mathrm{lin}}_L$ can be analytically computed and compares excellently with empirical numerical simulations. We report those results in detail in App.\, \ref{App:population_heuristic}. Figs.~\ref{fig:regression}  and \ref{fig:logistic} present the resulting theoretical curve and contrasts them to numerical simulations in dimensions $d=1000$, revealing an excellent agreement.

\begin{figure}[t!]

    \centering
    \includegraphics[scale=.53]{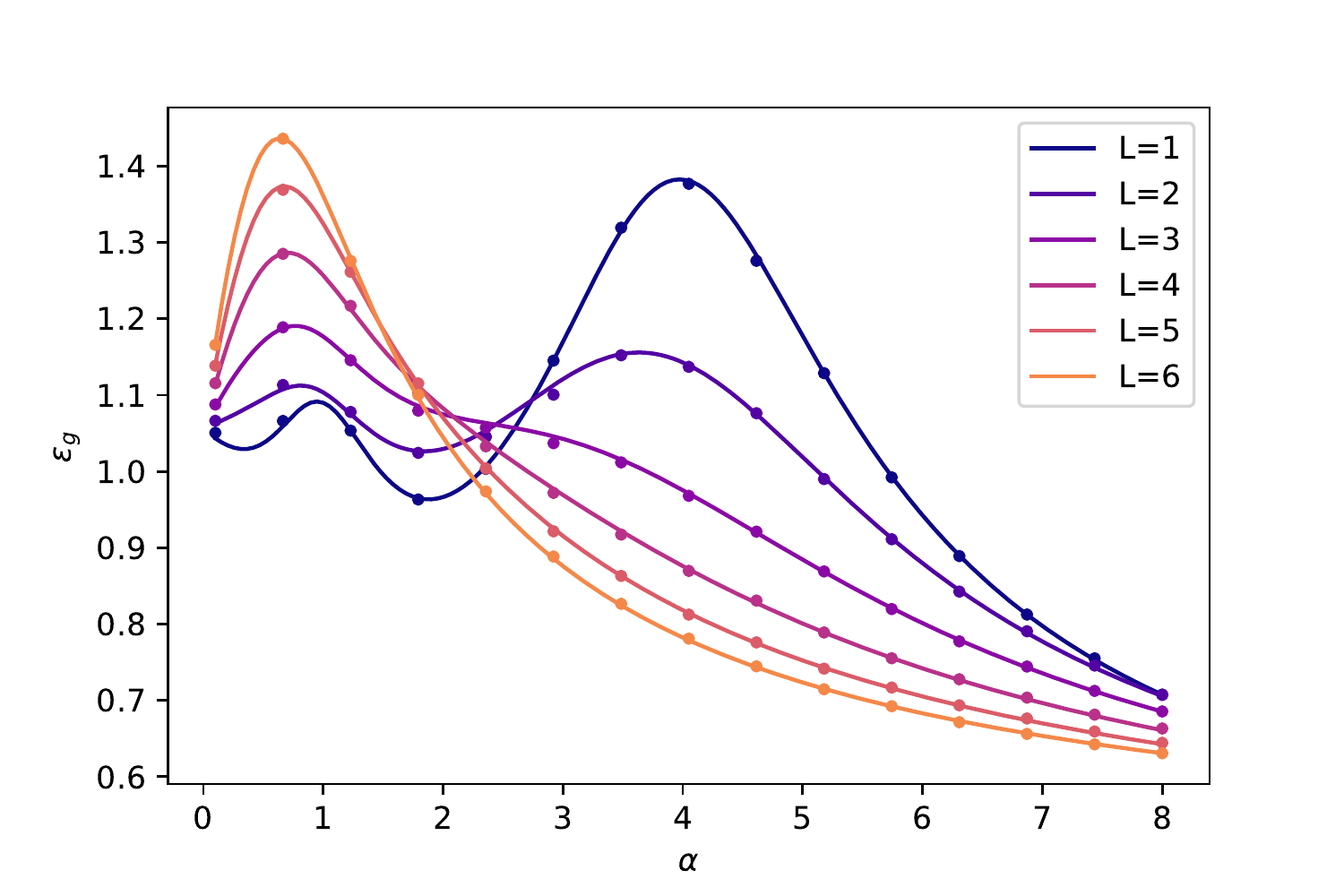}
    \includegraphics[scale=.53]{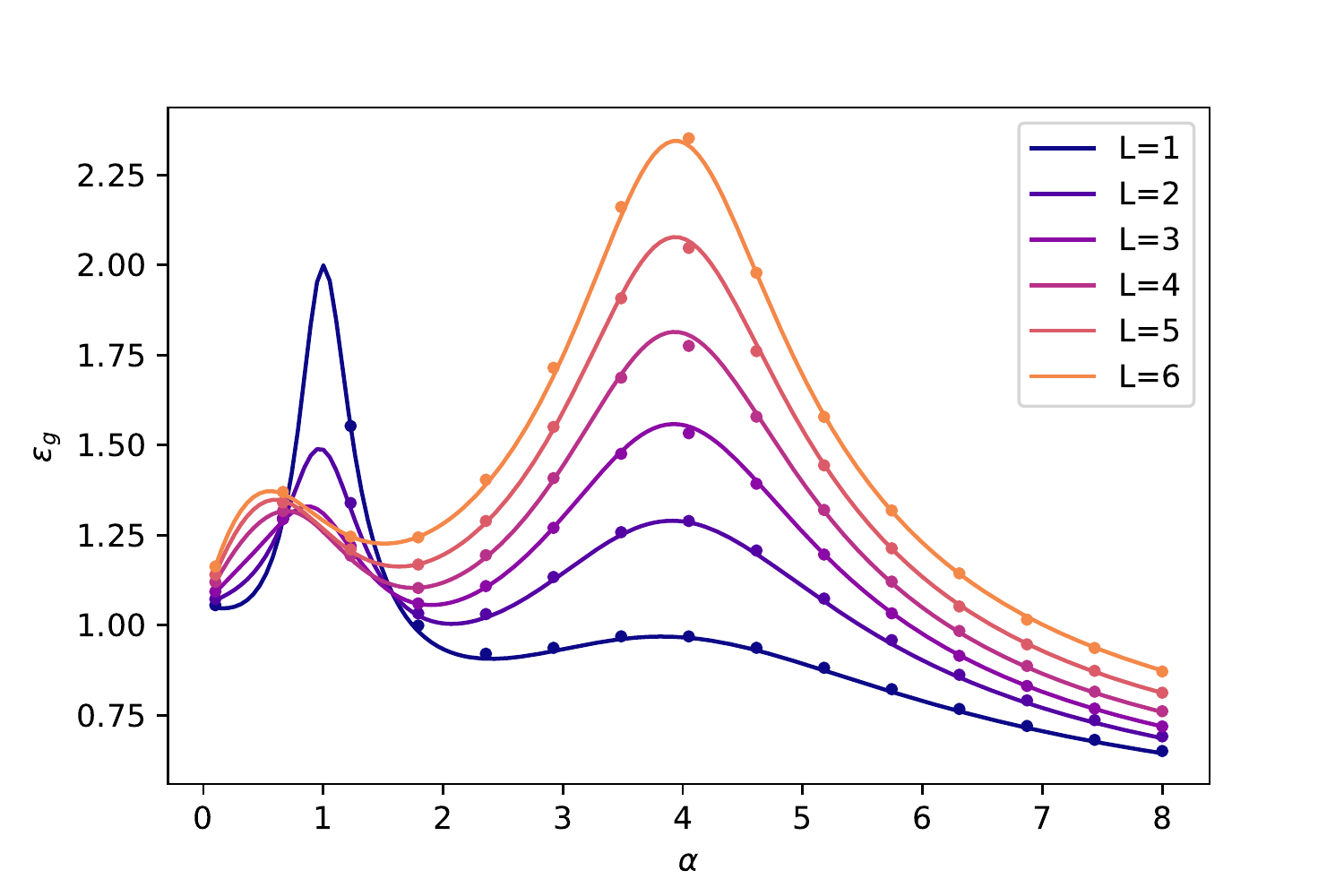}
    
    \caption{Learning curves for ridge regression on a $1$-hidden layer target function ($\gamma_1^\star=2$, $\sigma_1^\star=\mathrm{sign}$) using a $L-$hidden layers learner with widths  $\gamma_1=...=\gamma_L=4$ and $\sigma_{1,...,L}=\tanh$ activation (left) or $\sigma_{1,...,L}(x)=1.1\times \mathrm{sign}(x)\times\min(2,|x|)$ clipped linear activation (right), for depths $1\le L\le 6$. The regularization is $\lambda=0.001$. Solid lines represent theoretical curves evaluated from the sharp characterization of conjecture \ref{conj:error_uni_lin}, while numerical simulations, averaged over $50$ runs, are indicated by dots. The linear peak can be observed at $\alpha=1$, while the non-linear peak occurs for $\alpha=\gamma=4$ \cite{dAscoli2020TripleDA}. Despite sharing the same architecture, the use of different activations induces different implicit regularizations, leading to the linear (resp. non-linear) peak being further suppressed as the depth increases for the clipped linear activation (resp. tanh activation).}
    \label{fig:triple_descent}
\end{figure}
\section{Depth-induced implicit regularization}
\label{sec:architecture}
An informal yet extremely insightful takeaway from Conjecture~\ref{conj:error_uni_lin}, and in particular the closed-form expressions \eqref{eq:Omega_lin_recursion}, is that the activations in a deep non-linear dRF \eqref{eq:definition_multilayer_RF} share the same population statistics as the activations in a deep \textit{noisy} linear network, with layers 
\begin{equation}
    \label{eq:linearized_layer}
    \varphi^{\mathrm{lin}}_\ell(\x)=\kappa^\ell_1 \frac{W_\ell^\top \x}{\sqrt{k_{\ell-1}}}+\kappa_*^\ell \xi_\ell,
\end{equation}
where $\xi_\ell\sim \mathcal{N}(0_{k_\ell},I_{k_\ell})$ is a Gaussian noise term. It is immediate to see that \eqref{eq:linearized_layer} lead to the same recursion as \eqref{eq:Omega_lin_recursion}. This observation, which was made in the concomitant work \cite{Cui2023}, essentially allows to equivalently think of the problem of learning using a dRF \eqref{eq:definition_multilayer_RF} as one of learning with linear noisy network. Indeed, Conjecture \ref{conj:error_uni_lin} essentially suggests that the asymptotic test error depends on the second-order statistics of the last layer acrivations, shared between the dRF and the equivalent linear network. Finally, it is worthy to stress that, while the learner dRF is deterministic conditional on the weights $\{W_\ell\}$, the equivalent linear network \eqref{eq:linearized_layer} is intrinsically stochastic in nature due to the effective noise injection $\xi_\ell$ at each layer. Statistical common sense dictates that this effective noise injection has a regularizing effect, by introducing some randomness in the learning, and helps mitigating overfitting. Since the effective noise is a product of the propagation through a non-linear layer, this suggest that \textit{adding random non linear layers induces an implicit regularization}. We explore this intuition in this last section. 

Observe first that the equivalent noisy linear network \eqref{eq:linearized_layer} reduces to a simple shallow noisy linear model
\begin{align}
    \label{eq:linear_model}\hat{y}^{\mathrm{lin}}_\theta(\x)=\sigma\left(\frac{1}{\sqrt{k}}\theta^\top \left( A_L\cdot \x+\xi_L\right)\right)
\end{align}
where the effective weight matrix $A$ is 
$$
A_L\coloneqq\prod\limits_{\ell=1}^L\left(\kappa_1^\ell
\frac{W_\ell}{\sqrt{k_{\ell-1}}}
\right)
$$
and the effective noise $\xi_L$ is Gaussian with covariance $C_\xi^L$
$$
C_\xi^L=\sum\limits_{\ell_0=1}^{L-1}(\kappa_*^{\ell_0} )^2{\scriptstyle\left(\prod\limits_{\ell=\ell_0+1}^L \frac{\kappa_1^\ell W_\ell^\top}{\sqrt{k_{\ell-1}}}\right)^\top \left(\prod\limits_{\ell=\ell_0+1}^L \frac{\kappa_1^\ell W_\ell^\top}{\sqrt{k_{\ell-1}}}\right)}+(\kappa_*^L)^2I_{k}.
$$
The signal-plus-noise structure of the equivalent linear features \eqref{eq:linear_model} has profound consequences on the level of the learning curves of the model \eqref{eq:definition_multilayer_RF}:
\begin{itemize}
    \item When $\alpha=1$, there are as many training samples as the dimension of the data $d-$ dimensional submanifold $A_L \x$, resulting in a standard interpolation peak. The noise part $\xi_L$ induces an implicit regularization which helps mitigate the overfitting.
    \item As $\alpha=\gamma_L$, the number of training samples matches the dimension $k_L$ of the noise, and the \textit{noise} part is used to interpolate the training samples, resulting in another peak. This second peak is referred to as the non-linear peak by \cite{dAscoli2020TripleDA}.
\end{itemize}
Therefore, there exists an interplay between the two peaks, with higher noise $\xi_L$ both helping to mitigate the linear peak, and aggravating the non-linear peak. The depth of the network plays a role in that it modulates the amplitudes of the signal part and the noise part, depending on the activation through the recursions \eqref{eq:kappa_multilayer}.

We give two illustrations of the regularization effect of depth in Fig.~\ref{fig:triple_descent}. Two activations are considered : $\sigma_a=\tanh$ (for which the noise level, as measure by $\tr C_\xi^L$ decreases with depth) , and a very weakly non-linear activation $\sigma_b(x)=1.1\times \mathrm{sign}(x)\times\min(2,|x|)$, corresponding to a linear function clipped between $-2.2 $ and $2.2$ (for which $\tr C_\xi^L$ increases with depth). Note that, because for $\sigma_a$ the effective noise decreases with depth, the linear peak is aggravated for deeper networks, while the non-linear peak is simultaneously suppressed. Conversely, for $\sigma_b$, additional layers introduce more noise and cause a higher non-linear peak, while the induced implicit regularization mitigates the linear peak. Further discussion about the effect of architecture design on the generalization ability of dRFs \eqref{eq:definition_multilayer_RF} is provided in App.\, \ref{App:architecture}.
\section{Conclusion}
\label{sec:conclusion}
We study the problem of learning a deep random network target function by training the readout layer of a deep network, with frozen random hidden layers (deep Random Features).
We first prove an asymptotic deterministic equivalent for the conjugate kernel and sample covariance of the activations in a deep Gaussian random networks. This result is leveraged to establish a sharp asymptotic characterization of the test error in the specific case where the learner and teacher networks share the same intermediate layers, and the readout is learnt using a ridge loss. This proves the Gaussian universality of the test error of ridge regression on non-linear features corresponding to the last layer activations. In the fully generic case, we conjecture a sharp asymptotic formula for the test error, for fully general target/learner architectures and convex loss. The formulas suggest that the dRF behaves like a linear noisy network, characterized by an implicit regularization. We explore the consequences of this equivalence on the interplay between the architecture of the dRF and its generalization ability.
\section*{Acknowledgements}
We thank Gabriele Sicuro for discussion during the course of this project. BL acknowledges support from the \textit{Choose France - CNRS AI Rising Talents} program. DD is supported by ETH AI Center doctoral fellowship. DS is supported by SNSF Ambizione Grant \texttt{PZ00P2\_209089}. HC acknowledges support from the ERC under the European Union’s Horizon 2020 Research and Innovation Program Grant Agreement 714608-SMiLe.

\bibliographystyle{unsrt}
\bibliography{biblio}

\begin{thebibliography}{10}

\bibitem{Rahimi2007RandomFF}
Ali Rahimi and Benjamin Recht.
\newblock Random features for large-scale kernel machines.
\newblock In J.~Platt, D.~Koller, Y.~Singer, and S.~Roweis, editors, {\em
  Advances in Neural Information Processing Systems}, volume~20. Curran
  Associates, Inc., 2007.

\bibitem{Chizat2018OnLT}
L\'{e}na\"{\i}c Chizat, Edouard Oyallon, and Francis Bach.
\newblock On lazy training in differentiable programming.
\newblock In {\em Proceedings of the 33rd International Conference on Neural
  Information Processing Systems}, Red Hook, NY, USA, 2019. Curran Associates
  Inc.

\bibitem{Pennington2019NonlinearRM}
Jeffrey Pennington and Pratik Worah.
\newblock {Nonlinear} random matrix theory for deep learning.
\newblock {\em J. Stat. Mech. Theory Exp.}, 2019(12), 2019.

\bibitem{Mei2019TheGE}
Song Mei and Andrea Montanari.
\newblock {The} {Generalization} {Error} of {Random} {Features} {Regression}:
  {Precise} {Asymptotics} and the {Double} {Descent} {Curve}.
\newblock {\em Commun. Pure Appl. Math.}, 75(4):667--766, 2022.

\bibitem{Gerace2020GeneralisationEI}
Federica Gerace, Bruno Loureiro, Florent Krzakala, Marc Mezard, and Lenka
  Zdeborova.
\newblock Generalisation error in learning with random features and the hidden
  manifold model.
\newblock In Hal~Daumé III and Aarti Singh, editors, {\em Proceedings of the
  37th International Conference on Machine Learning}, volume 119 of {\em
  Proceedings of Machine Learning Research}, pages 3452--3462. PMLR, 13--18 Jul
  2020.

\bibitem{Lee2018DeepNN}
Jaehoon Lee, Yasaman Bahri, Roman Novak, Samuel~S. Schoenholz, Jeffrey
  Pennington, and Jascha Sohl{-}Dickstein.
\newblock Deep neural networks as gaussian processes.
\newblock In {\em 6th International Conference on Learning Representations,
  {ICLR} 2018, Vancouver, BC, Canada, April 30 - May 3, 2018, Conference Track
  Proceedings}. OpenReview.net, 2018.

\bibitem{g.2018gaussian}
Alexander~G. De~G.~Matthews, Jiri Hron, Mark Rowland, Richard~E. Turner, and
  Zoubin Ghahramani.
\newblock Gaussian process behaviour in wide deep neural networks.
\newblock In {\em International Conference on Learning Representations}, 2018.

\bibitem{Fan2020SpectraOT}
Zhou Fan and Zhichao Wang.
\newblock Spectra of the conjugate kernel and neural tangent kernel for
  linear-width neural networks.
\newblock In {\em Proceedings of the 34th International Conference on Neural
  Information Processing Systems}, NIPS'20, Red Hook, NY, USA, 2020. Curran
  Associates Inc.

\bibitem{NEURIPS2021_1baff70e}
Jacob Zavatone-Veth and Cengiz Pehlevan.
\newblock Exact marginal prior distributions of finite bayesian neural
  networks.
\newblock In M.~Ranzato, A.~Beygelzimer, Y.~Dauphin, P.S. Liang, and J.~Wortman
  Vaughan, editors, {\em Advances in Neural Information Processing Systems},
  volume~34, pages 3364--3375. Curran Associates, Inc., 2021.

\bibitem{NEURIPS2021_ae4503ec}
Lorenzo Noci, Gregor Bachmann, Kevin Roth, Sebastian Nowozin, and Thomas
  Hofmann.
\newblock Precise characterization of the prior predictive distribution of deep
  relu networks.
\newblock In M.~Ranzato, A.~Beygelzimer, Y.~Dauphin, P.S. Liang, and J.~Wortman
  Vaughan, editors, {\em Advances in Neural Information Processing Systems},
  volume~34, pages 20851--20862. Curran Associates, Inc., 2021.

\bibitem{Loureiro2021CapturingTL}
Bruno Loureiro, Cedric Gerbelot, Hugo Cui, Sebastian Goldt, Florent Krzakala,
  Marc M\'{e}zard, and Lenka Zdeborov\'{a}.
\newblock Learning curves of generic features maps for realistic datasets with
  a teacher-student model.
\newblock {\em J. Stat. Mech. Theory Exp.}, 2022(11):Paper No. 114001, 78,
  2022.

\bibitem{Cui2023}
Hugo Cui, Lenka Zdeborov{\'a}, and Florent Krzakala.
\newblock Optimal learning of deep random networks of extensive-width, 2023.
\newblock \textit{Private communication}.

\bibitem{Liao2018}
Zhenyu Liao and Romain Couillet.
\newblock On the spectrum of random features maps of high dimensional data.
\newblock In Jennifer Dy and Andreas Krause, editors, {\em Proceedings of the
  35th International Conference on Machine Learning}, volume~80 of {\em
  Proceedings of Machine Learning Research}, pages 3063--3071. PMLR, 10--15 Jul
  2018.

\bibitem{Benigni2021}
Lucas Benigni and Sandrine P\'{e}ch\'{e}.
\newblock Eigenvalue distribution of some nonlinear models of random matrices.
\newblock {\em Electron. J. Probab.}, 26:Paper No. 150, 37, 2021.

\bibitem{Mei2021GeneralizationEO}
Song Mei, Theodor Misiakiewicz, and Andrea Montanari.
\newblock Generalization error of random feature and kernel methods:
  hypercontractivity and kernel matrix concentration.
\newblock {\em Appl. Comput. Harmon. Anal.}, 59:3--84, 2022.

\bibitem{Dhifallah2020}
Oussama Dhifallah and Yue~M. Lu.
\newblock A precise performance analysis of learning with random features.
\newblock {\em arXiv:2008.11904}, 2020.

\bibitem{Sur2020}
Tengyuan Liang and Pragya Sur.
\newblock A precise high-dimensional asymptotic theory for boosting and
  minimum-{$\ell$}1-norm interpolated classifiers.
\newblock {\em Ann. Statist.}, 50(3):1669--1695, 2022.

\bibitem{Bosch2022}
David Bosch, Ashkan Panahi, Ayca Özcelikkale, and Devdatt Dubhash.
\newblock Double descent in random feature models: Precise asymptotic analysis
  for general convex regularization.
\newblock {\em arXiv:2204.02678}, 2022.

\bibitem{Jacot2020}
Arthur Jacot, Berfin Simsek, Francesco Spadaro, Clement Hongler, and Franck
  Gabriel.
\newblock Implicit regularization of random feature models.
\newblock In Hal~Daumé III and Aarti Singh, editors, {\em Proceedings of the
  37th International Conference on Machine Learning}, volume 119 of {\em
  Proceedings of Machine Learning Research}, pages 4631--4640. PMLR, 13--18 Jul
  2020.

\bibitem{Clarte2022ASO}
Lucas Clart\'e, Bruno Loureiro, Florent Krzakala, and Lenka Zdeborov\'a.
\newblock A study of uncertainty quantification in overparametrized
  high-dimensional models.
\newblock {\em arXiv:2210.12760}, 2022.

\bibitem{Ascoli20a}
St{\'e}phane D'Ascoli, Maria Refinetti, Giulio Biroli, and Florent Krzakala.
\newblock Double trouble in double descent: Bias and variance(s) in the lazy
  regime.
\newblock In Hal~Daumé III and Aarti Singh, editors, {\em Proceedings of the
  37th International Conference on Machine Learning}, volume 119 of {\em
  Proceedings of Machine Learning Research}, pages 2280--2290. PMLR, 13--18 Jul
  2020.

\bibitem{Loureiro2022}
Bruno Loureiro, Cedric Gerbelot, Maria Refinetti, Gabriele Sicuro, and Florent
  Krzakala.
\newblock Fluctuations, bias, variance \& ensemble of learners: Exact
  asymptotics for convex losses in high-dimension.
\newblock In Kamalika Chaudhuri, Stefanie Jegelka, Le~Song, Csaba Szepesvari,
  Gang Niu, and Sivan Sabato, editors, {\em Proceedings of the 39th
  International Conference on Machine Learning}, volume 162 of {\em Proceedings
  of Machine Learning Research}, pages 14283--14314. PMLR, 17--23 Jul 2022.

\bibitem{NEURIPS2021_b4f8e5c5}
Antoine Bodin and Nicolas Macris.
\newblock Model, sample, and epoch-wise descents: exact solution of gradient
  flow in the random feature model.
\newblock In M.~Ranzato, A.~Beygelzimer, Y.~Dauphin, P.S. Liang, and J.~Wortman
  Vaughan, editors, {\em Advances in Neural Information Processing Systems},
  volume~34, pages 21605--21617. Curran Associates, Inc., 2021.

\bibitem{bordelon2022learning}
Blake Bordelon and Cengiz Pehlevan.
\newblock Learning curves for {SGD} on structured features.
\newblock In {\em International Conference on Learning Representations}, 2022.

\bibitem{Ghorbani2019}
Behrooz Ghorbani, Song Mei, Theodor Misiakiewicz, and Andrea Montanari.
\newblock Limitations of lazy training of two-layers neural network.
\newblock In H.~Wallach, H.~Larochelle, A.~Beygelzimer, F.~d\textquotesingle
  Alch\'{e}-Buc, E.~Fox, and R.~Garnett, editors, {\em Advances in Neural
  Information Processing Systems}, volume~32. Curran Associates, Inc., 2019.

\bibitem{Ghorbani2020WhenDN}
Behrooz Ghorbani, Song Mei, Theodor Misiakiewicz, and Andrea Montanari.
\newblock When do neural networks outperform kernel methods?
\newblock {\em J. Stat. Mech. Theory Exp.}, 2021(12):Paper No. 124009, 110,
  2021.

\bibitem{NEURIPS2019_5481b2f3}
Gilad Yehudai and Ohad Shamir.
\newblock On the power and limitations of random features for understanding
  neural networks.
\newblock In H.~Wallach, H.~Larochelle, A.~Beygelzimer, F.~d\textquotesingle
  Alch\'{e}-Buc, E.~Fox, and R.~Garnett, editors, {\em Advances in Neural
  Information Processing Systems}, volume~32. Curran Associates, Inc., 2019.

\bibitem{pmlr-v139-refinetti21b}
Maria Refinetti, Sebastian Goldt, Florent Krzakala, and Lenka Zdeborova.
\newblock Classifying high-dimensional gaussian mixtures: Where kernel methods
  fail and neural networks succeed.
\newblock In Marina Meila and Tong Zhang, editors, {\em Proceedings of the 38th
  International Conference on Machine Learning}, volume 139 of {\em Proceedings
  of Machine Learning Research}, pages 8936--8947. PMLR, 18--24 Jul 2021.

\bibitem{8006899}
Andre Manoel, Florent Krzakala, Marc M\'ezard, and Lenka Zdeborov\'a.
\newblock Multi-layer generalized linear estimation.
\newblock In {\em 2017 IEEE International Symposium on Information Theory
  (ISIT)}, pages 2098--2102, 2017.

\bibitem{NEURIPS2018_6d0f8463}
Marylou Gabri\'{e}, Andre Manoel, Cl\'{e}ment Luneau, jean barbier, Nicolas
  Macris, Florent Krzakala, and Lenka Zdeborov\'{a}.
\newblock Entropy and mutual information in models of deep neural networks.
\newblock In S.~Bengio, H.~Wallach, H.~Larochelle, K.~Grauman, N.~Cesa-Bianchi,
  and R.~Garnett, editors, {\em Advances in Neural Information Processing
  Systems}, volume~31. Curran Associates, Inc., 2018.

\bibitem{NEURIPS2019_2f3c6a4c}
Benjamin Aubin, Bruno Loureiro, Antoine Maillard, Florent Krzakala, and Lenka
  Zdeborov\'{a}.
\newblock The spiked matrix model with generative priors.
\newblock In H.~Wallach, H.~Larochelle, A.~Beygelzimer, F.~d\textquotesingle
  Alch\'{e}-Buc, E.~Fox, and R.~Garnett, editors, {\em Advances in Neural
  Information Processing Systems}, volume~32. Curran Associates, Inc., 2019.

\bibitem{NEURIPS2018_1bc2029a}
Paul Hand, Oscar Leong, and Vlad Voroninski.
\newblock Phase retrieval under a generative prior.
\newblock In S.~Bengio, H.~Wallach, H.~Larochelle, K.~Grauman, N.~Cesa-Bianchi,
  and R.~Garnett, editors, {\em Advances in Neural Information Processing
  Systems}, volume~31. Curran Associates, Inc., 2018.

\bibitem{pmlr-v107-aubin20a}
Benjamin Aubin, Bruno Loureiro, Antoine Baker, Florent Krzakala, and Lenka
  Zdeborov\'a.
\newblock Exact asymptotics for phase retrieval and compressed sensing with
  random generative priors.
\newblock In Jianfeng Lu and Rachel Ward, editors, {\em Proceedings of The
  First Mathematical and Scientific Machine Learning Conference}, volume 107 of
  {\em Proceedings of Machine Learning Research}, pages 55--73. PMLR, 20--24
  Jul 2020.

\bibitem{Li2021}
Qianyi Li and Haim Sompolinsky.
\newblock Statistical mechanics of deep linear neural networks: The
  backpropagating kernel renormalization.
\newblock {\em Phys. Rev. X}, 11:031059, 2021.

\bibitem{Ariosto2022}
S.~Ariosto, R.~Pacelli, M.~Pastore, F.~Ginelli, M.~Gherardi, and P.~Rotondo.
\newblock Statistical mechanics of deep learning beyond the infinite-width
  limit.
\newblock {\em arXiv:2209.04882}, 2022.

\bibitem{Goldt2021TheGE}
Sebastian Goldt, Bruno Loureiro, Galen Reeves, Florent Krzakala, Marc Mezard,
  and Lenka Zdeborová.
\newblock The {Gaussian} equivalence of generative models for learning with
  shallow neural networks.
\newblock In {\em Proceedings of the 2nd Mathematical and Scientific Machine
  Learning Conference}, Proceedings of Machine Learning Research. 145, pages
  426--471, 2021.

\bibitem{Hu2020UniversalityLF}
Hong Hu and Yue~M. Lu.
\newblock {Universality} {Laws} for {High}-{Dimensional} {Learning} with
  {Random} {Features}.
\newblock {\em IEEE Trans. Inf. Theory}, 2022.

\bibitem{Montanari2022UniversalityOE}
Andrea Montanari and Basil~N. Saeed.
\newblock Universality of empirical risk minimization.
\newblock In Po-Ling Loh and Maxim Raginsky, editors, {\em Proceedings of
  Thirty Fifth Conference on Learning Theory}, volume 178 of {\em Proceedings
  of Machine Learning Research}, pages 4310--4312. PMLR, 02--05 Jul 2022.

\bibitem{Cui2019LargeDF}
Hugo Cui, Luca Saglietti, and Lenka Zdeborov'a.
\newblock Large deviations for the perceptron model and consequences for active
  learning.
\newblock {\em Mach. Learn. Sci. Technol.}, 2:45001, 2019.

\bibitem{Cui2022ErrorRF}
Hugo Cui, Bruno Loureiro, Florent Krzakala, and Lenka Zdeborov{\'a}.
\newblock Error rates for kernel classification under source and capacity
  conditions.
\newblock {\em ArXiv}, abs/2201.12655, 2022.

\bibitem{10.1016/j.physa.2004.05.048}
Z.~Burda, A.~G\"{o}rlich, A.~Jarosz, and J.~Jurkiewicz.
\newblock Signal and noise in correlation matrix.
\newblock {\em Phys. A}, 343(1-4):295--310, 2004.

\bibitem{10.1007/s00440-016-0730-4}
Antti Knowles and Jun Yin.
\newblock Anisotropic local laws for random matrices.
\newblock {\em Probab. Theory Related Fields}, 169(1-2):257--352, 2017.

\bibitem{10.1070/SM1967v001n04ABEH001994}
V~A Mar\v{c}enko and L~A Pastur.
\newblock Distribution of eigenvalues for some sets of random matrices.
\newblock {\em Mathematics of the {USSR}-Sbornik}, 1(4):457--483, 1967.

\bibitem{10.2307/24308489}
Zhidong Bai and Wang Zhou.
\newblock Large sample covariance matrices without independence structures in
  columns.
\newblock {\em Statist. Sinica}, 18(2):425--442, 2008.

\bibitem{louart2018concentration}
Cosme Louart and Romain Couillet.
\newblock Concentration of measure and large random matrices with an
  application to sample covariance matrices.
\newblock {\em arXiv:1805.08295}, 2018.

\bibitem{chouard2022quantitative}
Cl{\'e}ment Chouard.
\newblock Quantitative deterministic equivalent of sample covariance matrices
  with a general dependence structure.
\newblock {\em arXiv:2211.13044}, 2022.

\bibitem{10.1093/biomet/20A.1-2.32}
John Wishart.
\newblock {The generalised product moment distribution in samples from a normal
  multivariate population}.
\newblock {\em Biometrika}, 20A(1-2):32--52, 1928.

\bibitem{MR2682261}
Florent Benaych-Georges.
\newblock On a surprising relation between the {M}archenko-{P}astur law,
  rectangular and square free convolutions.
\newblock {\em Ann. Inst. Henri Poincar\'{e} Probab. Stat.}, 46(3):644--652,
  2010.

\bibitem{10.1214/07-AOS581}
Noureddine El~Karoui.
\newblock Spectrum estimation for large dimensional covariance matrices using
  random matrix theory.
\newblock {\em Ann. Statist.}, 36(6):2757--2790, 2008.

\bibitem{Dobriban2015HighDimensionalAO}
Edgar Dobriban and Stefan Wager.
\newblock {High}-dimensional asymptotics of prediction: {Ridge} regression and
  classification.
\newblock {\em Ann. Stat.}, 46(1):247--279, 2018.

\bibitem{Gerace2022}
Federica Gerace, Florent Krzakala, Bruno Loureiro, Ludovic Stephan, and Lenka
  Zdeborová.
\newblock Gaussian universality of linear classifiers with random labels in
  high-dimension, 2022.

\bibitem{Goldt2020ModellingTI}
Sebastian Goldt, Marc M\'ezard, Florent Krzakala, and Lenka Zdeborov\'a.
\newblock Modeling the influence of data structure on learning in neural
  networks: The hidden manifold model.
\newblock {\em Phys. Rev. X}, 10:041044, 2020.

\bibitem{dAscoli2020TripleDA}
St\'{e}phane D'Ascoli, Levent Sagun, and Giulio Biroli.
\newblock Triple descent and the two kinds of overfitting: where and why do
  they appear?
\newblock {\em J. Stat. Mech. Theory Exp.}, 2021(12):Paper No. 124002, 21,
  2021.

\bibitem{adamczak2015note}
Rados{\l}aw Adamczak.
\newblock {A} note on the {Hanson}-{Wright} inequality for random vectors with
  dependencies.
\newblock {\em Electron. Commun. Prob.}, 20, 2015.

\bibitem{MR3837109}
Roman Vershynin.
\newblock {\em High-dimensional probability}, volume~47 of {\em Cambridge
  Series in Statistical and Probabilistic Mathematics}.
\newblock Cambridge University Press, Cambridge, 2018.
\newblock An introduction with applications in data science, With a foreword by
  Sara van de Geer.

\bibitem{dAscoli2021OnTI}
St\'{e}phane D'Ascoli, Marylou Gabri\'{e}, Levent Sagun, and Giulio Biroli.
\newblock On the interplay between data structure and loss function in
  classification problems.
\newblock In M.~Ranzato, A.~Beygelzimer, Y.~Dauphin, P.S. Liang, and J.~Wortman
  Vaughan, editors, {\em Advances in Neural Information Processing Systems},
  volume~34, pages 8506--8517. Curran Associates, Inc., 2021.

\bibitem{Cui2021GeneralizationER}
Hugo Cui, Bruno Loureiro, Florent Krzakala, and Lenka Zdeborov\'{a}.
\newblock Generalization error rates in kernel regression: The crossover from
  the noiseless to noisy regime.
\newblock In M.~Ranzato, A.~Beygelzimer, Y.~Dauphin, P.S. Liang, and J.~Wortman
  Vaughan, editors, {\em Advances in Neural Information Processing Systems},
  volume~34, pages 10131--10143. Curran Associates, Inc., 2021.

\bibitem{Misiakiewicz2022SpectrumOI}
Theodor Misiakiewicz.
\newblock Spectrum of inner-product kernel matrices in the polynomial regime
  and multiple descent phenomenon in kernel ridge regression.
\newblock {\em arXiv:2204.10425}, 2022.

\bibitem{Hu2022SharpAO}
Hong Hu and Yue~M. Lu.
\newblock Sharp asymptotics of kernel ridge regression beyond the linear
  regime.
\newblock {\em arXiv:2205.06798}, 2022.

\bibitem{Karoui2013}
Noureddine~El Karoui, Derek Bean, Peter~J. Bickel, Chinghway Lim, and Bin Yu.
\newblock On robust regression with high-dimensional predictors.
\newblock {\em Proceedings of the National Academy of Sciences},
  110(36):14557--14562, 2013.

\bibitem{Wu2020OnTO}
Denny Wu and Ji~Xu.
\newblock On the optimal weighted $\ell_2$ regularization in overparameterized
  linear regression.
\newblock In H.~Larochelle, M.~Ranzato, R.~Hadsell, M.F. Balcan, and H.~Lin,
  editors, {\em Advances in Neural Information Processing Systems}, volume~33,
  pages 10112--10123. Curran Associates, Inc., 2020.

\bibitem{Hastie2019SurprisesIH}
Trevor Hastie, Andrea Montanari, Saharon Rosset, and Ryan~J. Tibshirani.
\newblock Surprises in high-dimensional ridgeless least squares interpolation.
\newblock {\em Ann. Stat.}, 50(2):949--986, 2022.

\bibitem{Wei2022}
Alexander Wei, Wei Hu, and Jacob Steinhardt.
\newblock More than a toy: Random matrix models predict how real-world neural
  representations generalize.
\newblock In Kamalika Chaudhuri, Stefanie Jegelka, Le~Song, Csaba Szepesvari,
  Gang Niu, and Sivan Sabato, editors, {\em Proceedings of the 39th
  International Conference on Machine Learning}, volume 162 of {\em Proceedings
  of Machine Learning Research}, pages 23549--23588. PMLR, 17--23 Jul 2022.

\end{thebibliography}
\newpage
\appendix
\section{Anisotropic deterministic equivalent}
\label{App:anisotropic}
\newcommand{\cX}{\mathcal X}

\newcommand{\C}{\mathbb{C}}
\providecommand\given{}

\subsection{Sample covariance matrices}
Consider a random vector \(\vx\in\R^d\) with \(\E \vx = 0\) and \(\E \vx \vx^\top=\Sigma\) and for \(n\in\N\) construct \(\cX=(\vx_1,\ldots,\vx_n)\in\R^{d\times n}\) using \(n\) independent copies \(\vx_1,\ldots,\vx_n\) of \(\vx\). We are interested in the sample covariance and Gram matrices 
\begin{equation}
    \wh\Sigma := \frac{\cX\cX^\top}{n}=\frac{1}{n}\sum_{i=1}^n\vx_i\vx_i^\top\in\R^{d\times d} \qquad \text{and} \qquad \wc\Sigma := \frac{\cX^\top\cX}{n}=\Bigl(\frac{\vx_i^\top \vx_j}{n}\Bigr)_{i,j=1}^n \in \R^{n\times n}
\end{equation}
and their resolvents 
\begin{equation}
    \wh G(z):=(\wh\Sigma-z)^{-1} \in \C^{d\times d} \qquad \text{and} \qquad \wc G(z):=(\wc\Sigma-z)^{-1} \in \C^{n\times n}.
\end{equation}
The expectations of the sample covariance and Gram matrices are
\begin{equation}
    \E \wh\Sigma = \Sigma, \quad \E \wc\Sigma = \frac{d}{n}\langle{\Sigma}\rangle I_n,
\end{equation}
where we introduced the averaged trace \(\langle{A}\rangle:=m^{-1}\Tr A\) for \(A\in\R^{m\times m}\).

 Note that while the two resolvents behave differently as matrices, their traces are related due to the fact that the non-zero eigenvalues of \(\wh\Sigma\) and \(\wc\Sigma\) agree, whence
\begin{equation}
    \langle{\wh G(z)}\rangle = \frac{n}{d}\langle{\wc G(z)}\rangle + \frac{n-d}{pz}.
\end{equation}
The classical result on normalised traces of sample covariance and Gram resolvents is the following variance estimate under essentially optimal conditions.
\begin{theorem}[Tracial convergence of sample covariance matrices with general population~\cite{10.2307/24308489}]\label{thm samp cov tr}
    Assume that \(\norm{\Sigma}\lesssim 1\), \(d/n\sim 1\) and that 
    \begin{equation}\label{quad form assump var}
        \E\left|{\frac{\vx^\top A \vx}{d} - \E \frac{\vx^\top A \vx}{d} }\right|^2 = \E\left|{\frac{\vx^\top A \vx}{d} - \langle{\Sigma A}\rangle}\right|^2 = o(\norm{A})
    \end{equation}
    for all deterministic matrices \(A\). Then it holds that 
\begin{equation}\label{resolvent conv scov}
    \E\abs{\langle{(\wh\Sigma-z)^{-1}}\rangle- \wh m(z)}^2 = o(1), \quad \E\abs{\langle{(\wc\Sigma-z)^{-1}}\rangle -\wc m(z)}^2 = o(1), \quad \text{as} \quad n,d\to\infty,
\end{equation}
for all fixed \(z\in\C\setminus\R_+\), where
\(\wc m=\wc m(z)\) is the unique solution to the scalar equation 
\begin{equation}\label{kappa eq}
    1 - \frac{d}{n} + z\wc m =
    -\frac{d}{n}\langle{(\Sigma\wc m+ 1)^{-1}}\rangle.
\end{equation}
and 
\begin{equation}
    \wh m(z) := \frac{n}{d} \wc m(z) + \frac{d-n}{d}\frac{1}{-z} 
\end{equation}
\end{theorem}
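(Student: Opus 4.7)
I would follow the standard Silverstein--Bai fixed-point approach for sample covariance matrices, adapted to the moment hypothesis~\eqref{quad form assump var}. Since $\langle\wh G(z)\rangle$ and $\langle\wc G(z)\rangle$ differ only by a deterministic scalar (reflecting the $\lvert n-d\rvert$ extra zero eigenvalues of the longer matrix), it suffices to establish $\E|\langle\wc G\rangle-\wc m|^2=o(1)$. My plan is then to derive an approximate scalar self-consistent equation for $\wc m_n(z):=\langle\wc G(z)\rangle$ matching~\eqref{kappa eq} up to $o(1)$ errors, and to close the argument by uniqueness of the Stieltjes transform solving this equation on $\C\setminus\R_+$.

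\textbf{Schur complement step.} The first computation is the block-matrix Schur complement formula for the diagonal entries of $\wc G(z)$. Partitioning row/column $i$ and using the push-through identity $\cX^{(i)}(\cX^{(i)\top}\cX^{(i)}-nzI)^{-1}\cX^{(i)\top}=\cX^{(i)}\cX^{(i)\top}(\cX^{(i)}\cX^{(i)\top}-nzI)^{-1}$, where $\cX^{(i)}$ denotes $\cX$ with its $i$-th column removed, a short manipulation yields
\begin{equation*}
    \wc G_{ii}(z) = \frac{-1}{z\bigl(1 + n^{-1}\vx_i^\top \wh G^{(i)}(z)\vx_i\bigr)},
\end{equation*}
where $\wh G^{(i)}(z):=(\cX^{(i)}\cX^{(i)\top}/n-zI)^{-1}$ is independent of $\vx_i$. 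Applying~\eqref{quad form assump var} conditionally on $\wh G^{(i)}$, together with the deterministic bound $\norm{\wh G^{(i)}}\le 1/\dist(z,\R_+)$, gives $n^{-1}\vx_i^\top\wh G^{(i)}\vx_i = (d/n)\langle\Sigma\wh G^{(i)}\rangle+o(1)$ in $L^2$. A rank-one perturbation estimate (Sherman--Morrison) further shows $|\langle\Sigma\wh G\rangle-\langle\Sigma\wh G^{(i)}\rangle|\lesssim 1/n$, so averaging over $i$ one obtains
\begin{equation*}
    \wc m_n(z) = -\bigl[z\bigl(1+\tfrac{d}{n}\langle\Sigma\wh G(z)\rangle\bigr)\bigr]^{-1}+o(1).
\end{equation*}

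\textbf{Closing the equation.} A second identity is needed to eliminate $\langle\Sigma\wh G\rangle$ in favour of $\wc m$. Expanding $\wh\Sigma\wh G=I+z\wh G$ as $n^{-1}\sum_i\vx_i\vx_i^\top\wh G$ and applying Sherman--Morrison to each $\wh G\vx_i$, the same leave-one-out reasoning combined with~\eqref{quad form assump var} yields the approximation $\wh G\approx -z^{-1}(\Sigma\wc m+1)^{-1}$ when tested against $\Sigma$ in normalized trace, so that $(d/n)\langle\Sigma\wh G\rangle \approx -(d/(nz))\langle\Sigma(\Sigma\wc m+1)^{-1}\rangle$. Substituting this into the formula for $\wc m_n$ and using the elementary identity $\wc m\langle\Sigma(\Sigma\wc m+1)^{-1}\rangle = 1-\langle(\Sigma\wc m+1)^{-1}\rangle$ rearranges exactly to~\eqref{kappa eq}. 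Uniqueness of the Nevanlinna solution on $\C\setminus\R_+$ (by the standard contraction argument using monotonicity of Stieltjes transforms on the upper half-plane) then promotes the approximate identity to the claimed $L^2$ convergence, and the statement for $\wh G$ follows from the deterministic scalar relation between the two normalized traces.

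\textbf{Main obstacle.} The delicate point is not the algebra but the propagation of errors: the scalar equation for $\wc m$ feeds into the equation for $\langle\Sigma\wh G\rangle$ and vice versa, so one has to close a small-gain estimate showing that the right-hand side of~\eqref{kappa eq} is a contraction near its fixed point uniformly in $n$. A related subtlety is that~\eqref{quad form assump var} is phrased for \emph{deterministic} $A$, whereas I need it for the random matrix $A=\wh G^{(i)}$; the leave-one-out construction makes $A$ independent of $\vx_i$, and the deterministic operator-norm bound $\norm{\wh G^{(i)}}\le 1/\dist(z,\R_+)$ allows invoking~\eqref{quad form assump var} at a uniform scale, which is where the hypotheses $d/n\sim 1$ and $\norm{\Sigma}\lesssim 1$ become essential.
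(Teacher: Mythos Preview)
The paper does not supply its own proof of this theorem: it is stated as a background result and attributed to~\cite{10.2307/24308489}, so there is no in-paper argument to compare against. Your sketch follows the standard Bai--Silverstein leave-one-out/Schur complement route, which is precisely the method of the cited reference, and the outline you give (diagonal Schur formula for $\wc G_{ii}$, quadratic-form concentration via~\eqref{quad form assump var} applied to the independent $\wh G^{(i)}$, rank-one interlacing to remove the $^{(i)}$, and stability of the fixed-point equation) is correct in substance.
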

Here \(\wh m\) is the solution to the Marchenko-Pastur equation~\eqref{SC MP} and the correspodning measure is the free multiplicative convolution of the empirical spectral measure \(\mu(\Sigma):=d^{-1}\sum_{\lambda\in\Spec(\Sigma)}\delta_\lambda\) of \(\Sigma\) and a Marchenko-Pastur distribution \(\mu_\mathrm{MP}^c\) of aspect ratio \(c=d/n\).
Thus, by Stieltjes inversion the result of~\Cref{thm samp cov tr} can be phrased as
\begin{equation}
    \mu\Bigl(\frac{\cX^\top\cX}{n}\Bigr)=\mu(\wc\Sigma) \approx \frac{d}{n}\mu(\Sigma)\boxtimes\mu_\mathrm{MP}^{d/n} + \frac{n-d}{n}\delta_0, \quad \mu\Bigl(\frac{\cX\cX^\top}{n}\Bigr)=\mu(\wh\Sigma) \approx \mu(\Sigma)\boxtimes\mu_\mathrm{MP}^{d/n} 
\end{equation}
in a weak and global sense. 
Note that we have the limits 
\begin{equation}
    \lim_{c\to\infty} \mu(\Sigma)\boxtimes\mu_\mathrm{MP}^c = \delta_0, \quad \lim_{c\to 0} \mu(\Sigma)\boxtimes\mu_\mathrm{MP}^c = \mu(\Sigma)
\end{equation}
which are precisely the expected behaviour since for large \(c=d/n\) the rank \(n\) of \(\cX\cX^\top\) grows much smaller than \(d\) and therefore the empirical measure \(\mu(\wh\Sigma)\) is concentrated on the origin, while for small \(c=d/n\) by the law of large numbers \(\cX\cX^\top/n\approx \E \cX\cX^\top/n=\Sigma\).

\subsection{Anisotropic deterministic equivalents}
The tracial result from~\Cref{thm samp cov tr} only allows to control the eigenvalues of \(\wh\Sigma,\wc\Sigma\) but not the eigenvectors. There has been extensive work on non-tracial deterministic equivalents of \(\wh\Sigma,\wc\Sigma\), either in the form of entrywise asymptotics \(\wc G_{ij}\approx \cdots\), isotropic asymptotics \(\vx^\top\wh G\vy\approx \cdots\) for deterministic vectors \(\vx,\vy\) or functional tracial asymptotics \(\langle{A\wh G}\rangle\approx \cdots\) for deterministic matrices \(A\). Any of these results contain non-trivial information on how \(\wh G,\wc G\) behave as matrices in the asymptotic limit and can be used to infer information on eigenvectors. 

For separable correlations an optimal local law in isotropic and tracial form has been obtained in~\cite{10.1007/s00440-016-0730-4}:
\begin{theorem}[\cite{10.1007/s00440-016-0730-4}, Theorem 3.6]\label{knowles}
    If \(\cX=\Sigma^{1/2}X\) for some matrix \(X\) with independent identically distributed entries\footnote{with finite moments of all orders} with mean \(0\) and variance \(1\), and the spectral density \(\mu(\Sigma)\boxtimes \mu_\mathrm{MP}^{d/n}\) is \emph{regular}\footnote{See Definition 2.7 in~\cite{10.1007/s00440-016-0730-4}}, then it holds that 
    \begin{equation}
        \abs{\langle{(\wh\Sigma-z)^{-1}}\rangle-\wh m(z)}+\abs{\langle{(\wc\Sigma-z)^{-1}}\rangle-\wc m(z)}\prec \frac{1}{n\Im z}, 
    \end{equation}
    in tracial sense, and for any deterministic vectors \(\vx,\vy\) 
    \begin{equation}
        \abs{\vx^\top\Bigl[(\wh\Sigma-z)^{-1} -(-\Sigma\wc m(z)z-z)^{-1}\Bigr]\vy} + \abs{\vx^\top(\wc\Sigma-z)^{-1}\vy -\wc m(z) \vx^\top\vy} \prec \frac{\norm{x}\norm{y}}{\sqrt{n\Im z}}
    \end{equation}
    in isotropic sense. 
\end{theorem}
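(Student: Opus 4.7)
The plan is to follow the moment/self-consistent-equation framework developed by Erdős--Knowles--Yau--Yin for isotropic local laws. The first step is to reduce to a problem with diagonal population covariance. Writing \(\cX=\Sigma^{1/2}X\) and using the resolvent identity,
\begin{equation*}
    \wh G(z) = \Sigma^{-1/2}\Bigl(\frac{XX^\top}{n}-z\Sigma^{-1}\Bigr)^{-1}\Sigma^{-1/2}, \qquad \wc G(z) = -\frac{1}{z}+\frac{1}{z}\frac{X^\top}{\sqrt n}\wh G(z)\frac{X}{\sqrt n},
\end{equation*}
so up to conjugation and a rank-one style identity it is enough to estimate the auxiliary resolvent \(H(z):=(XX^\top/n-z\Sigma^{-1})^{-1}\) against its deterministic equivalent. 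The second equation lets one transfer estimates from the sample-covariance side to the Gram side. Diagonalising \(\Sigma\) further reduces the problem to the case of a diagonal population covariance.

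The second step is to derive and analyse a self-consistent equation. A Schur-complement expansion of \(H_{ii}(z)\) with respect to the \(i\)th row/column of \(XX^\top/n\), combined with a Hanson--Wright-type large deviation bound applied to quadratic forms in the iid entries of \(X\), yields
\begin{equation*}
    H_{ii}(z) = \frac{1}{-z(\Sigma^{-1})_{ii}+\langle{\wc G^{(i)}(z)}\rangle + \mathcal E_i(z)},
\end{equation*}
where \(\wc G^{(i)}\) is a minor and \(\mathcal E_i\) is small in probability. Comparing this to the equation~\eqref{kappa eq} satisfied by \(\wc m(z)\) and using that \(\langle{\wc G^{(i)}}\rangle\) is close to \(\langle{\wc G}\rangle\) by a rank-one perturbation argument, one obtains the entrywise bound \(\abs{H_{ij}-M_{ij}}\prec(n\Im z)^{-1/2}\) with \(M(z):=(-z\Sigma^{-1}+\wc m(z)I)^{-1}\). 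Translating back yields the stated approximation of \(\wh G\) by \((-\Sigma\wc m(z)z-z)^{-1}\). The stability of~\eqref{kappa eq} in the regular region, quantified via the derivative of the map defined by the right-hand side of~\eqref{kappa eq}, is what converts approximate solvability into bounds on \(H-M\); the \emph{regularity} hypothesis on \(\mu(\Sigma)\boxtimes\mu_\mathrm{MP}^{d/n}\) is precisely what guarantees a uniform lower bound on this stability constant away from the spectral edges.

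The third step is to upgrade entrywise control to isotropic control. For deterministic unit vectors \(\vx,\vy\) one estimates arbitrarily high moments
\begin{equation*}
    \E\abs{\vx^\top(H-M)\vy}^{2p}
\end{equation*}
by expanding each factor of \(H\) via Schur's complement and grouping the resulting sums according to an index partition. Each new expansion produces either a small prefactor or a martingale increment that is controlled by Burkholder's inequality; summing over \(i,j\) with the weights \(\overline{\vx_i}\vy_j\) and using Cauchy--Schwarz reproduces the factor \(\norm{\vx}\norm{\vy}\), and Markov's inequality together with stochastic domination bookkeeping gives the isotropic estimate at scale \((n\Im z)^{-1/2}\). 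For the tracial improvement to \((n\Im z)^{-1}\) one adds a \emph{fluctuation averaging} argument: the averaged sum \(n^{-1}\sum_i \mathcal E_i\) exhibits square-root cancellation thanks to weak correlations between the error terms, which is extracted through a careful high-moment calculation with an involved combinatorial graph of index identifications.

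The main obstacle, in my view, is the fluctuation averaging step in the presence of a non-trivial \(\Sigma\): one needs to control moments of linear functionals of \((H-M)\)-entries weighted by deterministic matrices, and the graphical expansion responsible for the cancellation becomes significantly more delicate than in the Wigner or null (\(\Sigma=I\)) case. A secondary difficulty is uniformity in \(z\) all the way down to \(\Im z\gtrsim n^{-1+\epsilon}\); here the regularity assumption must be leveraged quantitatively to rule out any possible instability of~\eqref{kappa eq} as \(z\) approaches the real axis inside the bulk, and to keep the stability constant of size order one.
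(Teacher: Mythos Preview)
The theorem you are addressing is not proved in this paper; it is quoted verbatim as Theorem~3.6 of Knowles--Yin and serves only as background for the subsequent discussion. The paper's own contributions (Theorem~\ref{thm:res_concentration}, Proposition~\ref{co resolvent}, Proposition~\ref{prop 1layer}, Theorem~\ref{prop:mult_layers}) take a different route based on Lipschitz concentration rather than the moment/Schur-complement machinery you outline, and in particular do not attempt to reach the optimal \((n\Im z)^{-1}\) or \((n\Im z)^{-1/2}\) rates. So there is no ``paper's own proof'' to compare against here.

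That said, your sketch is a fair high-level summary of the Knowles--Yin strategy. A couple of inaccuracies worth flagging: your first displayed identity for \(\wh G\) is not correct as written (the resolvent of \(\Sigma^{1/2}XX^\top\Sigma^{1/2}/n-z\) is not simply a conjugation of \((XX^\top/n-z\Sigma^{-1})^{-1}\) by \(\Sigma^{-1/2}\); the correct linearisation in Knowles--Yin embeds the problem into a block matrix rather than the substitution you wrote). Also, the actual paper of Knowles--Yin upgrades from entrywise to isotropic not by the direct moment expansion you describe but via a polynomialisation/self-improving scheme; the fluctuation-averaging step is indeed the technically heaviest part, and your identification of it as the main obstacle is accurate.
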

Note that in particular, matrix \(\wh G(z)\) asymptotically is equal to a resolvent 
\begin{equation}
    \wh M(z):= \Bigl(-\Sigma \wc m(z) z-z\Bigr)^{-1}
\end{equation}
of the population covariance \(\Sigma\), while \(\wc G\) asymptotically is a scalar multiple of the identity. 

More recently a functional tracial local law (albeit with very much suboptimal dependence on the spectral parameter) for \(\wh G\) has been obtained in~\cite{chouard2022quantitative}:

\begin{theorem}[\cite{chouard2022quantitative}, Proposition 2.4]
    \label{thm:res_concentration}
    If \(\norm{\Sigma}\le C\) and \(\cX\) satisfies, for some positive constants \(c, C, \sigma\)
    \begin{equation}
    P(|f(\cX) - \E f(\cX)| \geq t) \leq Ce^{-c(t/\sigma)^2}\quad \forall \text{ 1-Lipschitz } f: (\R^{d \times n}, \norm{\cdot}_F) \to (\R, |\cdot|),
    \end{equation}
    we have that for all deterministic matrices $A$ and $|z| \lesssim 1$ with high probability\footnote{The statement in~\cite{chouard2022quantitative} literally gives \(\Im z\) rather than \(\dist(z,\R_+)\) but the proof verbatim gives the stronger bound since \(\Im z\) is merely used as a lower bound on the smallest singular value of a matrix of the type \(AA^\ast-z\)}.
    \begin{equation}
    \label{eq:res_conc_chouard}
    \left | \langle{ A (\wh{\Sigma} - z)^{-1} -  A (-\wc m(z)z\Sigma - z)^{-1}}\rangle \right | \le \frac{\sqrt{\langle A A^{\ast} \rangle \log n }}{n\dist(z,\R_+)^9},
    \end{equation}
    where \(\wc m=\wc m(z)\) is the unique solution to the scalar equation 
\begin{equation}\label{eq:m_check_chouard}
    1 - \frac{d}{n} + z\wc m =
    -\frac{d}{n}\langle{(\Sigma\wc m+ 1)^{-1}}\rangle.
\end{equation}
\end{theorem}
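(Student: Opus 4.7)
The plan is to prove~\Cref{thm:res_concentration} by a three-step procedure: first show that $\langle A\wh G(z)\rangle$ concentrates around its expectation at the claimed rate, then derive an approximate self-consistent equation for that expectation, and finally invoke stability of the scalar Marchenko-Pastur fixed-point equation~\eqref{eq:m_check_chouard} to identify the expectation with the deterministic equivalent $\langle A(-\wc m(z)z\Sigma-z)^{-1}\rangle$.

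For the concentration step, I would observe that the map $\mathcal X\mapsto \langle A(\mathcal X\mathcal X^\top/n - z)^{-1}\rangle$ is Lipschitz with respect to the Frobenius norm, with Lipschitz constant of order $\sqrt{\langle AA^\ast\rangle}\,\|\mathcal X\|/(n\,\dist(z,\R_+)^2)$, obtained by direct differentiation together with the identity $\|A\|_\mathrm{F}^2=d\langle AA^\ast\rangle$ and the operator bound $\|\wh G(z)\|\le \dist(z,\R_+)^{-1}$. A short truncation restricts to the overwhelmingly likely event $\|\mathcal X\|\lesssim\sqrt n$ (itself a $1$-Lipschitz function of $\mathcal X$ and so concentrated by hypothesis). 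The sub-Gaussian concentration assumption then delivers $|\langle A\wh G(z)\rangle - \E\langle A\wh G(z)\rangle|\lesssim \sqrt{\langle AA^\ast\rangle\log n}/(n\,\dist(z,\R_+)^2)$ with high probability, which already matches the advertised rate modulo the residual powers of $\dist(z,\R_+)^{-1}$.

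For the self-consistent equation, I would use the trivial identity $\wh\Sigma\wh G(z)=I+z\wh G(z)$ combined with the leave-one-out Sherman-Morrison expansion
\begin{equation*}
x_\mu^\top \wh G(z) = \frac{x_\mu^\top \wh G^{(\mu)}(z)}{1+n^{-1}x_\mu^\top \wh G^{(\mu)}(z)\,x_\mu},
\end{equation*}
where $\wh G^{(\mu)}$ denotes the resolvent with the $\mu$-th sample removed. Lipschitz concentration of $\mathcal X$ implies Hanson-Wright-type bounds on the quadratic forms $n^{-1}x_\mu^\top B x_\mu$ around $\langle\Sigma B\rangle$, so the denominator concentrates around $1+\langle\Sigma \wh G^{(\mu)}(z)\rangle$ and a parallel estimate handles the numerator paired with $A$. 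Averaging over $\mu\in[n]$, and replacing $\wh G^{(\mu)}$ by $\wh G$ at a cost controlled by rank-one interlacing, yields an approximate equation $\E\langle A\wh G(z)\rangle \approx -z^{-1}\langle A(m_\star(z)\Sigma+I)^{-1}\rangle$ with $m_\star(z)\approx\E\langle\Sigma \wh G(z)\rangle$ up to lower-order corrections.

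Stability of~\eqref{eq:m_check_chouard} then forces $m_\star(z)$ to be close to its unique solution $\wc m(z)$, with a quantitative loss of at most $\dist(z,\R_+)^{-C}$ for some fixed $C$. The main obstacle, and the source of the suboptimal exponent $9$ in~\eqref{eq:res_conc_chouard}, is tracking these losses: each Sherman-Morrison denominator inversion, each resolvent perturbation identity used to close the equation, and each application of the scalar stability estimate absorbs a factor of $\dist(z,\R_+)^{-1}$, and these accumulate multiplicatively rather than reflecting any delicate edge-of-spectrum phenomenon. Obtaining the sharp $\dist(z,\R_+)^{-1}$ dependence would require a more refined multi-scale bootstrap, but for our applications to deep random features in~\Cref{prop:one_layer} we only evaluate at fixed $z\in\mathbf C\setminus\R_+$ bounded away from the spectrum, so the loss is harmless.
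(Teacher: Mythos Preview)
This statement is not proved in the paper at all: it is quoted verbatim as Proposition~2.4 of~\cite{chouard2022quantitative} and used as a black-box input to the paper's own results (\Cref{prop:one_layer}, \Cref{co resolvent}, \Cref{prop:mult_layers}). There is therefore no ``paper's own proof'' to compare against; the paper simply cites the result and moves on.

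That said, your sketch is a faithful outline of the strategy that~\cite{chouard2022quantitative} actually uses: Lipschitz concentration of the resolvent trace functional (their Section~3), a leave-one-out Sherman--Morrison expansion to derive an approximate self-consistent equation for $\E\wh G$, and a stability analysis of the Marchenko--Pastur fixed point to identify the limit. Your explanation of why the exponent $9$ on $\dist(z,\R_+)$ arises---as an accumulation of crude resolvent bounds rather than anything intrinsic---is also correct and matches the discussion around~\Cref{conj opt local law} in the present paper. So nothing is wrong with your proposal as a proof sketch; it is just that you are sketching a proof of a cited theorem rather than of something the paper itself establishes.
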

Note that the functional tracial formulation with convergence rate \(1/n\) and error in terms of the Frobenius norm of \(A\) automatically includes an isotropic local law as a special case. Indeed, for \(A=\vx\vy^\top\) it follows that
\begin{equation}\label{iso local}
    \vy^\top\Bigl((\hat\Sigma-z)^{-1}-(-\wc m z\Sigma-z)^{-1}\Bigr)\vx \prec \frac{\norm{\vx}\norm{\vy}}{\sqrt{n}\delta^9},
\end{equation}
where we denote here and in the future \(\delta \equiv \delta(z) := \mathrm{dist}(z, \R_+)\).
In this work we extend the functional tracial local law from~\cite{chouard2022quantitative} to the case of \(\wc G\) and obtain the following result:
\begin{proposition}[Functional local law for Gram matrices]\label{co resolvent}
    Under the assumptions of~\Cref{thm:res_concentration} we have that 
    \begin{equation}
        \abs{\langle{A(\wc\Sigma-z)^{-1}}\rangle  - \wc m(z)\langle{A}\rangle } \prec \frac{\langle{AA^\ast}\rangle^{1/2}}{\delta^9\sqrt{n}}. 
    \end{equation}
\end{proposition}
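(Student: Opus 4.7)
The plan is to reduce the proposition to~\Cref{thm:res_concentration} via the algebraic identity
\begin{equation*}
    \wc G(z) = -\frac{I_n}{z} + \frac{\cX^\top \wh G(z)\cX}{nz},
\end{equation*}
which follows from $\wc G\wc\Sigma = I_n + z\wc G$ together with the intertwining relation $\wh G\cX = \cX\wc G$. Testing against the deterministic matrix $A$ gives
\begin{equation*}
    \langle A\wc G\rangle = -\frac{\langle A\rangle}{z} + \frac{1}{n^2 z}\sum_{i,j=1}^n A_{ij}\,\vx_j^\top\wh G\vx_i,
\end{equation*}
so everything reduces to controlling the bilinear forms $\vx_j^\top\wh G\vx_i$. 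A naive substitution $\wh G\to\wh M$ combined with $\E[\cX A\cX^\top] = \tr(A)\,\Sigma$ fails, because $\wh G$ and $\cX A\cX^\top$ are strongly correlated and their covariance contributes to the leading order; I instead estimate each bilinear form separately by the usual leave-one-out technique.

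For the \emph{diagonal terms} $i=j$, a single Sherman--Morrison step in terms of the leave-one-out resolvent $\wh G^{(i)}$ of $\wh\Sigma - \vx_i\vx_i^\top/n$ gives
\begin{equation*}
    \vx_i^\top\wh G\vx_i = \frac{\vx_i^\top\wh G^{(i)}\vx_i}{1 + \vx_i^\top\wh G^{(i)}\vx_i/n}.
\end{equation*}
Since $\vx_i$ is independent of $\wh G^{(i)}$, a Hanson--Wright-type quadratic-form concentration (available from Lipschitz concentration of $\cX$) together with~\Cref{thm:res_concentration} applied with the deterministic test matrix $\Sigma$ gives $\vx_i^\top\wh G^{(i)}\vx_i/n \approx (d/n)\langle\Sigma\wh M\rangle$. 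A direct manipulation of the Marchenko--Pastur equation~\eqref{eq:m_check_chouard} rewrites this quantity as $-(1+z\wc m)/(z\wc m)$, and plugging into the Sherman--Morrison formula yields $\vx_i^\top\wh G\vx_i/n \approx 1+z\wc m$ uniformly in $i$. Summing, the diagonal contribution to $\langle A\wc G\rangle$ equals $(1+z\wc m)\langle A\rangle/z$, which combines with $-\langle A\rangle/z$ to produce the desired main term $\wc m\langle A\rangle$.

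For the \emph{off-diagonal terms} $i\ne j$, a double Sherman--Morrison expansion on the leave-two-out resolvent $\wh G^{(i,j)}$ produces the compact expression
\begin{equation*}
    \vx_j^\top\wh G\vx_i = \frac{n\beta_{ij}}{\det(I_2 + U_{ij}^\top\wh G^{(i,j)}U_{ij}/n)}, \quad \beta_{ij} := \frac{\vx_i^\top\wh G^{(i,j)}\vx_j}{n},\ U_{ij} := (\vx_i,\vx_j).
\end{equation*}
With high probability the determinant concentrates around $(z\wc m)^{-2}$, whereas $\beta_{ij}$ is a bilinear form in two independent Gaussian vectors against a resolvent independent of them, and so by Hanson--Wright it has mean zero and conditional variance $O(1/(n\delta^2))$. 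A conditional second-moment computation---cross-correlations across different pairs $(i,j)$ largely decouple, since the leading randomness in $\beta_{ij}$ comes from $\vx_i,\vx_j$ alone---yields $\E|\sum_{i\ne j}A_{ij}\beta_{ij}|^2 \lesssim \langle AA^\ast\rangle/\delta^2$, so the off-diagonal contribution to $\langle A\wc G\rangle$ is controlled by $\langle AA^\ast\rangle^{1/2}/(n\,\delta^{O(1)})$, which is already \emph{smaller} than the target $n^{-1/2}$ bound.

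The \emph{main obstacle}, and the step that actually dictates the $n^{-1/2}$ rate in the proposition, is the coherent summation of the Chouard approximation error across the diagonal:~\Cref{thm:res_concentration} approximates $\langle\Sigma\wh G\rangle$ by $\langle\Sigma\wh M\rangle$ with an error of order $\delta^{-9}/\sqrt{n}$ that is the \emph{same} for every $i$; multiplying by $\tr(A)$ and dividing by $nz$ gives a contribution of order $\langle A\rangle/(\sqrt{n}\delta^9)$, which is controlled by $\langle AA^\ast\rangle^{1/2}/(\sqrt{n}\delta^9)$ via the Cauchy--Schwarz bound $|\langle A\rangle|\le \langle AA^\ast\rangle^{1/2}$. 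The remaining bookkeeping consists of (i)~tracking the $\delta$-dependence through each application of~\Cref{thm:res_concentration} and each Hanson--Wright estimate, (ii)~ensuring the Sherman--Morrison prefactors $1+\vx_i^\top\wh G^{(i)}\vx_i/n$ and $\det(I_2+U_{ij}^\top\wh G^{(i,j)}U_{ij}/n)$ stay bounded away from zero, which is guaranteed precisely by the stability estimate $\dist(-1/\wc m,\R_+)\ge \dist(z,\R_+)$ from~\Cref{prop:one_layer}, and (iii)~promoting the off-diagonal second-moment bound to a high-probability statement via a standard concentration argument applied to $\cX\mapsto \langle A\wc G\rangle$, which is Lipschitz in Frobenius norm with constant $\lesssim \langle AA^\ast\rangle^{1/2}/(\sqrt{n}\delta^2)$.
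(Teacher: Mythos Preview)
Your proposal is correct and follows essentially the same route as the paper: both use leave-one/two-out (Schur/Sherman--Morrison) expansions to approximate the diagonal and off-diagonal entries of $\wc G$, invoke~\Cref{thm:res_concentration} with test matrix $\Sigma$ for the main term, and finish with Lipschitz concentration of $\cX\mapsto\langle A\wc G\rangle$ to control the fluctuation. The paper's presentation is slightly more economical in that it bounds $\|\E\wc G - \wc m I\|_F$ directly (using only $\E\wc G_{ij}=O(n^{-1})$ for $i\ne j$ from mean-zero independence) and then applies Cauchy--Schwarz, which makes your conditional second-moment computation on $\sum_{i\ne j}A_{ij}\beta_{ij}$ unnecessary.
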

Note that the bound in~\Cref{co resolvent} is weaker than the bound in~\Cref{thm:res_concentration}, and both results are very much weaker than~\Cref{knowles} in the dependence on the spectral parameter. In light of related results it is natural to conjecture the following:  
\begin{conjecture}
    \label{conj opt local law}
    Assume that quadratic forms of \(\vx\) concentrate as 
    \begin{equation}
        \abs{\frac{\vx^\top A \vx}{d} -\langle{\Sigma A}\rangle} \prec \frac{\langle{AA^\ast}\rangle^{1/2}}{\sqrt{d}}
    \end{equation}
    for any deterministic matrix \(A\), and that \(\norm{\Sigma}\lesssim 1\). Then we have the functional tracial estimates 
    \begin{equation}
        \begin{split}
            \abs{\langle{zA(\wh\Sigma-z)^{-1}}\rangle - \langle{A(-\wc m(z)\Sigma-I)^{-1}} }&\prec \frac{\langle{AA^\ast}\rangle^{1/2}}{n\delta}\\
            \abs{\langle{A(\wc\Sigma-z)^{-1}}\rangle - \wc m(z)\langle{A}\rangle}&\prec \frac{\langle{AA^\ast}\rangle^{1/2}}{n\delta}.
        \end{split}
    \end{equation} 
\end{conjecture}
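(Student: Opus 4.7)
My plan is to establish the optimal $(n\delta)^{-1}$ rate by a Schur-complement/cavity analysis combined with fluctuation averaging, an approach that has been successful for other sample-covariance-type ensembles under sharp hypotheses (e.g.~\cite{10.1007/s00440-016-0730-4} for separable correlations) and that would here be adapted to the essentially optimal quadratic-form input assumed in the conjecture. The same argument delivers both the $\wh G$ and the $\wc G$ estimates simultaneously, since the two resolvents are tied together through the rank-one/Schur identities used in each cavity step.

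First I would set up the cavity machinery. Writing $\cX^{(i)}$ for the data matrix with the $i$-th column removed, $\wh\Sigma^{(i)}:=\cX^{(i)}(\cX^{(i)})^\top/n$ and $\wh G^{(i)}:=(\wh\Sigma^{(i)}-z)^{-1}$, the Schur complement and rank-one update identities yield
\begin{equation*}
(\wc G)_{ii} = -\frac{1}{z\bigl(1+\tfrac{1}{n}\vx_i^\top \wh G^{(i)}\vx_i\bigr)}, \qquad \wh G = \wh G^{(i)} - \frac{\wh G^{(i)}\vx_i\vx_i^\top\wh G^{(i)}/n}{1+\tfrac{1}{n}\vx_i^\top \wh G^{(i)}\vx_i}.
\end{equation*}
Since $\vx_i$ is independent of $\wh G^{(i)}$ and $\norm{\wh G^{(i)}}\le 1/\delta$, the hypothesis applied to $B\wh G^{(i)}$ for any deterministic $B$ gives a per-site quadratic-form concentration of order $n^{-1/2}\delta^{-1}\norm{B}$. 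Summing the rank-one identity over $i$ and substituting the concentration produces an approximate matrix identity of the schematic form $\wh G = \wh M + \wh M\mathcal E\wh G$ for a random $\mathcal E=n^{-1}\sum_i \mathcal E_i$ whose summands are centered functionals of the respective $\vx_i$; the analogous expansion for the Gram resolvent $\wc G$ is obtained by averaging the diagonal Schur identity.

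The crucial step is then to upgrade the resulting naive bound $\abs{\langle A(\wh G-\wh M)\rangle}\prec n^{-1/2}\delta^{-1}\langle AA^\ast\rangle^{1/2}$ (which only matches the already-known~\Cref{thm:res_concentration,co resolvent}) to the optimal $n^{-1}\delta^{-1}\langle AA^\ast\rangle^{1/2}$. This extra $\sqrt n$ factor is the content of \emph{fluctuation averaging}: the $\mathcal E_i$, viewed after conditioning on $\{\vx_j\}_{j\neq i}$, are centered, so the sum $\sum_i \langle A\mathcal E_i\rangle$ has variance of order $n$ rather than $n^2$. Quantitatively, I would expand $\E\abs{\langle A\mathcal E\rangle}^{2p}$ into pair-partition-type sums over the $n$ sites; using the quadratic-form hypothesis recursively together with the independence of distinct columns, only fully-paired diagrams contribute and each pairing supplies an additional $n^{-1/2}$ factor. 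The test matrix $A$ enters linearly throughout and is naturally controlled in the Frobenius-type norm $\langle AA^\ast\rangle^{1/2}$ by Cauchy--Schwarz. The a priori input of~\Cref{thm:res_concentration,co resolvent} would serve as the starting point of a bootstrap that is iterated until the optimal exponent in $n$ is reached.

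The main obstacles are threefold. First, the weak $L^2$-type hypothesis is just barely strong enough for the cavity method: stochastic domination supplies polynomial moments up to $n^\epsilon$ losses, but chaining many such concentrations requires careful bookkeeping to avoid accumulating powers of $n^\epsilon$ that would spoil the final rate. Second, achieving the optimal $\delta^{-1}$ dependence of the Marchenko--Pastur self-consistent equation~\eqref{eq:m_check_chouard} uniformly for $z\in\C\setminus\R_+$ requires an adaptation of the stability analysis of~\cite{10.1007/s00440-016-0730-4}, exploiting that the linearization $I+\wc m(z)\Sigma$ is invertible with norm controlled by $1/\delta$ outside the spectrum; this is what replaces the wasteful $\delta^{-9}$ of~\cite{chouard2022quantitative}. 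Third, and most severely, the absence of Gaussian structure or Lipschitz concentration forces one into the genuine high-moment cavity method under the minimal quadratic-form input, and verifying that every error term produced in the pair-partition expansion genuinely satisfies the required Frobenius-norm bound on $A$ is the technically most demanding step; this is precisely why the result is at present only a conjecture.
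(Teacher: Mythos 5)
This statement is given in the paper as a \emph{conjecture}: the authors explicitly present it without proof, observing only that ``in light of related results it is natural to conjecture the following,'' and offer the weaker Theorem~\ref{thm:res_concentration} (with a $\delta^{-9}$ loss and an $n^{-1/2}$ rate via Proposition~\ref{co resolvent}) as what is currently provable under the stronger Lipschitz-concentration hypothesis. So there is no proof in the paper against which your attempt can be checked; the baseline is simply ``no proof exists.''

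Your write-up is a research plan rather than a proof, and you say so yourself in the final paragraph. As a plan it is well informed: the Schur-complement cavity identities you quote are correct, fluctuation averaging (expanding $\E\abs{\langle A\mathcal E\rangle}^{2p}$ into pairings over columns, exploiting that the $\mathcal E_i$ are centered after conditioning on the complementary columns) is indeed the standard mechanism for upgrading an $n^{-1/2}$ local law to an $n^{-1}$ local law, and the three obstacles you flag are the genuine ones. But none of these steps is actually executed, and each one hides a substantive open problem. Concretely: (i) you need to convert the $\prec$-type quadratic-form input into the $p$-th moment bounds that the high-moment cavity expansion consumes, and stochastic domination only yields such moments up to $n^{\epsilon}$ losses \emph{given an a priori deterministic bound}, which near the spectral edge (small $\delta$) is exactly where the argument is strained; (ii) the optimal $\delta^{-1}$ uniformly on $\C\setminus\R_+$ requires a stability analysis of the self-consistent equation~\eqref{eq:m_check_chouard} for the solution map $\Sigma\mapsto\wc m$ away from the real axis that you invoke by analogy to~\cite{10.1007/s00440-016-0730-4} but do not set up — the separable structure assumed there is precisely what is missing here; (iii) you assert that ``only fully-paired diagrams contribute'' and that each pairing yields $n^{-1/2}$ with a Frobenius-norm control on $A$, but verifying that every unpaired/self-paired term in the expansion is genuinely subleading under only the minimal quadratic-form assumption, with the error measured in $\langle AA^\ast\rangle^{1/2}$ rather than $\norm{A}$, is the heart of what would make this a theorem and is left open. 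In short, your diagnosis of why this is hard is sound, and your sketch is the approach one would expect to eventually close the conjecture, but the proposal does not constitute a proof and cannot currently be upgraded to one without resolving exactly the issues you list.
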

Note that the Lipschitz concentration required in Theorem~\ref{thm:res_concentration} is much stronger than the quadratic form concentration of Conjecture~\ref{conj opt local law} because it implies that the column vectors \(\vx\) of \(\cX\) satisfy 
\begin{equation}
    P(\abs{f( x)-\E f( x)} \geq t) \leq C \exp\Bigl(-\frac{t^2}{C\lambda_f^2}\Bigr)
\end{equation}
for all \(\lambda_f\text{-Lipschitz }f\colon\R^{d}\to\R\). Therefore by Hanson-Wright~(\cite{adamczak2015note}, Thm. 2.4)
\begin{equation}
    P\Bigl(\abs{\frac{\vx^\top A \vx}{d} -\langle{\Sigma A}\rangle} \geq \frac{t\langle{AA^\ast}\rangle^{1/2}}{\sqrt{d}}+\frac{t\norm{A}}{d}\Bigr) \leq C e^{-\min\{t^2,t\}/C}
\end{equation}
and, since also \(\norm{A}\le \sqrt{d}\langle{AA^\ast}\rangle^{1/2}\), we have that with high probability 
\begin{equation}\label{high prob quad form}
    \abs{\frac{\vx^\top A \vx}{d} -\langle{\Sigma A}\rangle} \le \log d\Bigl(\frac{\langle{AA^\ast}\rangle^{1/2}}{\sqrt{d}}+\frac{\norm{A}}{d} \Bigr)\lesssim \log d \frac{\langle{AA^\ast}\rangle^{1/2}}{\sqrt{d}}.
\end{equation}
Let us now turn to the proof of Proposition~\ref{co resolvent}. 
We will need the following result of Lipschitzness of the resolvent function, see e.g.~\cite{chouard2022quantitative}
\begin{lemma}
\label{lemma:gram_lipschitz}
The map \(\wc G: \cX \to \left(\cX^\top \cX / n 
 - z\right)^{-1}\) is \((3 \delta^{-2} |z|^{1/2} n^{-1/2})\)-Lipschitz with respect to Frobenius norm.
\end{lemma}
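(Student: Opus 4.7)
The plan is to combine the first resolvent identity with a uniform operator-norm bound on $\cX \wc G(\cX)$ coming from calculus on singular values.

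First I would set $E := \cX' - \cX$ and expand
\begin{equation*}
\cX'^\top \cX' - \cX^\top \cX = \cX'^\top(\cX'-\cX) + (\cX'-\cX)^\top \cX = \cX'^\top E + E^\top \cX.
\end{equation*}
Applying the first resolvent identity $A^{-1} - B^{-1} = A^{-1}(B-A)B^{-1}$ to $A=\wc\Sigma'-z$ and $B=\wc\Sigma-z$ then yields
\begin{equation*}
\wc G(\cX') - \wc G(\cX) = -\frac{1}{n}\wc G(\cX')\bigl(\cX'^\top E + E^\top \cX\bigr)\wc G(\cX).
\end{equation*}
By the submultiplicativity $\|ABC\|_F \le \|A\|\,\|B\|_F\,\|C\|$ applied to each of the two summands,
\begin{equation*}
\|\wc G(\cX') - \wc G(\cX)\|_F \le \frac{1}{n}\Bigl[\|\wc G(\cX')\cX'^\top\|\,\|E\|_F\,\|\wc G(\cX)\| + \|\wc G(\cX')\|\,\|E\|_F\,\|\cX\,\wc G(\cX)\|\Bigr].
\end{equation*}

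Next I would use the trivial a-priori bound $\|\wc G(\cX)\|\le \delta^{-1}$ (which holds because $\wc\Sigma$ is positive semi-definite and $\delta = \dist(z,\R_+)$), and establish the main auxiliary estimate
\begin{equation*}
\|\cX\,\wc G(\cX)\| \le \frac{\sqrt{n|z|}}{\delta},
\end{equation*}
uniformly in $\cX$. Via the SVD $\cX = U S V^\top$ with singular values $s_i\ge 0$, one has $\cX\wc G(\cX) = U\,S(S^2/n-z)^{-1}V^\top$, so
\begin{equation*}
\|\cX\,\wc G(\cX)\|^2 = \max_i \frac{s_i^2}{|s_i^2/n-z|^2} = n\cdot \sup_{t\ge 0}\frac{t}{|t-z|^2}.
\end{equation*}
A short calculus argument shows the supremum is attained at $t=|z|$ and equals $\tfrac{1}{2(|z|-\Re z)}$. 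A case split finishes the estimate: if $\Re z\ge 0$ then $\delta = |\Im z|$ and $|z|-\Re z = \delta^2/(|z|+\Re z)$, so the supremum is bounded by $|z|/\delta^2$; if $\Re z<0$ then $\delta = |z|$, and $|z|-\Re z\ge |z|$ again gives the bound $|z|/\delta^2$.

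Plugging these estimates back in, and noting that the same bound applies to $\|\wc G(\cX')\cX'^\top\| = \|\cX'\wc G(\cX')\|$, gives
\begin{equation*}
\|\wc G(\cX')-\wc G(\cX)\|_F \le \frac{2}{n}\cdot\frac{1}{\delta}\cdot\frac{\sqrt{n|z|}}{\delta}\,\|E\|_F = \frac{2|z|^{1/2}}{\delta^2\sqrt{n}}\|\cX'-\cX\|_F,
\end{equation*}
which is stronger than the claimed Lipschitz constant $3\delta^{-2}|z|^{1/2}n^{-1/2}$. The main obstacle is the auxiliary bound on $\|\cX\wc G(\cX)\|$: this must hold uniformly in $\cX$ even though $\|\cX\|$ itself is unbounded, and the fix is precisely the scalar supremum bound $\sup_{t\ge 0} t/|t-z|^2 \le |z|/\delta^2$ which encodes how the resolvent tames large singular values.
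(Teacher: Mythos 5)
Your proof is correct and in fact yields the slightly sharper constant $2$ in place of $3$. The paper does not prove this lemma itself but cites~\cite{chouard2022quantitative}, and your argument (resolvent identity, SVD reduction to the scalar supremum of $t/\lvert t-z\rvert^2$ over $t\ge 0$, and the case split on the sign of $\Re z$ to compare against $\dist(z,\R_+)$) is essentially the standard one used there.
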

\begin{proof}[Proof of~\Cref{co resolvent}]
Denote \(\wc m \equiv \wc m(z)\). By the Schur complement formula we have
\begin{equation}\label{check Gii}
    \begin{split}
        \wc G_{ii} &= - \Bigl(z+z \frac{\vx_i^\top \hat G^{(i)}\vx_i}{n} \Bigr)^{-1} = -\Bigl(z+z c \langle{\Sigma \hat G^{(i)}}\rangle \Bigr)^{-1} + O\Bigl(\frac{1}{\sqrt{n}\delta^9}\Bigr)= \wc m + O\Bigl(\frac{1}{\sqrt{n}\delta^9}\Bigr),
    \end{split}
\end{equation}
using 
\begin{equation}
    \langle{\Sigma \hat G^{(i)}}\rangle = \langle{\Sigma\wc G}\rangle + \frac{1}{n}\left\langle{\Sigma \frac{ \hat G^{(i)}\vx_i\vx_i^\top  \hat G^{(i)}}{1+\vx_i  \hat G^{(i)}\vx_i/n}}\right\rangle = 
    -\frac{1}{z}\langle{\Sigma(\wc m\Sigma+I)^{-1}}\rangle + O\Bigl(\frac{1}{n\delta^9}\Bigr)
\end{equation}
and
\begin{equation}
    z-c \langle{\Sigma (\wc m \Sigma + I)^{-1}}\rangle=z - \frac{c}{\wc m} + \frac{c}{\wc m^2} \langle{(\wc m \Sigma + I)^{-1}}\rangle = z -\frac{c}{\wc m} - \frac{1}{\wc m}\Bigl(1-c+z \wc m\Bigr)=- \frac{1}{\wc m}
\end{equation}
in the last step. 
Next, for off-diagonal elements we have, again by Schur-complement, that
\begin{equation}
    \begin{split}
        \wc G_{ij} = z \wc G_{ii} \wc G^{(i)}_{jj} \frac{\vx_i^\top \hat G^{(ij)}\vx_j}{n} =  z \wc G_{ii} \Bigl(\wc G_{jj} -\frac{\wc G_{ij}\wc G_{ji}}{\wc G_{jj}}\Bigr) \frac{\vx_i^\top \hat G^{(ij)}\vx_j}{n}.
    \end{split}
\end{equation}
Here from the first equality already a bound size \(n^{-1/2}\delta^{-4}\) follows. Thus, together with~\cref{check Gii} it follows that 
\begin{equation}
    \wc G_{ij} = \wc m^2 z \frac{\vx_i^\top \hat G^{(ij)}\vx_j}{n} + O\Bigl(\frac{1}{n\delta^8}\Bigr),
\end{equation}
and therefore by mean-zero assumption that \(\E \wc G_{ij}=O(1/n\delta^8)\). This together with~\cref{check Gii} implies that 
\begin{equation}
\label{eq:exp_gram_bound}
    \norm{\E \wc G - \wc m(z)I}_\mathrm{F} = O\Bigl( \frac{1}{\delta^9}  \Bigr).
\end{equation}
We write
\begin{equation}
\label{eq:triangle_gram}
\abs{\langle{A\wc G}\rangle  - \wc m(z)\langle{A}\rangle } \leq \abs{\langle A \wc G \rangle - \E \langle A \wc G \rangle} + \abs{\E \langle A \wc G \rangle - \wc m(z) \langle A \rangle}.
\end{equation}
Note that from Lemma~\ref{lemma:gram_lipschitz} and Cauchy-Schwarz inequality,
\begin{equation}
\text{the map } \mathcal{X} \to \left\langle A \left(\frac{\cX^\top \cX}{n} - z \right) \right\rangle \quad \text{is} \quad \frac{3|z|^{1/2}\langle A A^{\ast} \rangle^{1/2}}{n\delta^2}\text{-Lipschitz},
\end{equation}
therefore,
\begin{equation}
\label{eq:gram_to_exp_gram}
\left| \langle A \wc G \rangle - \E \langle A \wc G \rangle \right| \prec \frac{|z|^{1/2}\langle A A^{\ast} \rangle^{1/2}}{n \delta^2}
\end{equation}
Also, from~\eqref{eq:exp_gram_bound}, we have
\begin{equation}
\label{eq:exp_gram_to_det}
        \abs{\E \langle A\wc G \rangle  - \wc m(z)\langle A \rangle } \leq \frac{1}{\sqrt{n}} {\langle A A^{\ast} \rangle}^{1/2}\norm{\E \wc G - \wc m(z)I}_F \prec \frac{{\langle A A^{\ast} \rangle}^{1/2}}{\delta^9\sqrt{n}}.
\end{equation}
The statement of the Proposition follows from~\eqref{eq:triangle_gram},~\eqref{eq:gram_to_exp_gram}~and~\eqref{eq:exp_gram_to_det}.
\end{proof}

\subsection{Random feature model}
We consider a one-layer random feature model, with a scalar function \(\sigma_1(x)\) applied entrywise.
\begin{equation}
    \sigma_1\Bigl(\frac{W_1X_0}{\sqrt{d}}\Bigr), \quad X_0\in\R^{d\times n},\quad W_1\in\R^{k_1\times d}.
\end{equation}
We require the following assumptions.
\begin{assumption}[Gaussian weight]
\label{asmpt:gaussian_w}
Entries of \(W_1\) are iid.\ \(\mathcal N(0,1)\) elements.
\end{assumption}
\begin{assumption}[Orthogonal and bounded data]
\label{asmpt:norm_bounds}
For a positive constant \(r_1\), \(X_0\) satisfies
\begin{equation}
\begin{aligned}
\norm{\frac{X_0^\top X_0}{d} - r_1 I}_{\max} \prec \frac{1}{\sqrt{n}}, \quad \norm{\frac{X_0}{\sqrt{d}}}_{\text{op}} \prec 1.
\end{aligned}
\end{equation}
\end{assumption}
\begin{assumption}[Nonlinearity]
\label{asmpt:nonlin}
The scalar function \(\sigma_1\) is \(\lambda_{\sigma}\)-Lipschitz and satisfies \(\langle{\sigma_1}\rangle_{\mathcal N(r_1)}=0\), where
\begin{equation}
\langle{f}\rangle_{\mathcal N(\sigma^2)} := \frac{1}{\sqrt{2\pi}\sigma}\int_\R f(x) \exp\Bigl(-\frac{x^2}{2\sigma^2}\Bigr)\dif x.
\end{equation}
\end{assumption}
\begin{assumption}[Proportional regime]
\label{asmpt:high_dim}
For some constants \(c_1, c_2\),
\begin{equation}
    c_1 n\le \min\set{d,k_1}\le\max\set{d, k_1}\le c_2 n, \quad 0<c_1<c_2<\infty.
\end{equation}
\end{assumption}
For simplicity, we set the variance of the weight matrix to be equal to 1, although the results can be easily extended to arbitrary variance $\Delta$, by scaling the function $\sigma_1$.

Let \(\tilde{w}_i\) denote the \(i\)th row of \(W_1\). We define 
\begin{equation}
    X_1 := \sigma_1\Bigl(\frac{W_1X_0}{\sqrt{d}}\Bigr)  = \biggl(\sigma_1\Bigl(\frac{X^\top {\tilde{w}}_1}{\sqrt{d}}\Bigr)\cdots \sigma_1\Bigl(\frac{X^\top {\tilde{w}}_{k_1}}{\sqrt{d}}\Bigr)\biggr)^\top\in\R^{k_1 \times n}
\end{equation}
as a matrix with independent identically distributed rows and corresponding sample covariance matrix
\begin{equation}
    \wh \Sigma:= \frac{X_1^\top X_1}{k_1}  = \frac{1}{k_1} \sigma_1\Bigl(\frac{X_0^\top W_1^\top}{\sqrt{d}}\Bigr) \sigma_1\Bigl(\frac{W_1X_0}{\sqrt{d}}\Bigr).
\end{equation}
We have
\((X_1)_{ij} = \sigma_1(\xi_{ij})\), for \(\xi_{ij} :=  {\tilde{w}}_i^\top \vx_j \sim \mathcal{N}\left(0, \norm{\vx_j}^2/ d \right)\), where \(\vx_j\) is the \(j\)th column of \(X_0\).
In order to analyze functions of Gaussian variables, we use the following decomposition.
\begin{lemma}[Hermite decomposition]
For any Lipschitz-continuous $f$ and any $\sigma>0$ we have the \(\sigma\)-Hermite expansion\footnote{Note that despite the appearance of the derivative smoothness is not required as by integration by parts the derivative can be transferred to the smooth Gaussian weight.},
\begin{equation}\label{eq:hermite_exp}
    \begin{split}
        f(x) & =  \sum_{k\ge 0} \frac{\sigma^k}{k!} \He_k\Bigl(\frac{x}{\sigma}\Bigr)  \langle{f^{(k)}}\rangle_{\mathcal N(\sigma^2)} 
    \end{split}
\end{equation} 
where 
\begin{equation}
    \begin{split}
        \He_k(x) &:= (-1)^k \exp\Bigl(\frac{x^2}{2}\Bigr) \frac{\dif^k}{\dif x^k} \exp\Bigl(-\frac{x^2}{2}\Bigr)
    \end{split}
\end{equation}
with \(\He_k(x)\) being the standard Hermite polynomials \(\He_0(x)=1\), \(\He_1(x)=x\), \(\He_2(x)=x^2-1\), etc.
\end{lemma}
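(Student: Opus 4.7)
The plan is to leverage the classical fact that the rescaled Hermite polynomials $\{\He_k(\cdot/\sigma)\}_{k\ge 0}$ form a complete orthogonal system of $L^2(\R,\mathcal{N}(0,\sigma^2))$ with squared norms $k!$. I would first recall this from the generating-function identity $\sum_k \He_k(y) t^k/k! = \exp(ty - t^2/2)$ after the change of variables $y=x/\sigma$, with completeness a standard consequence of, e.g., the determinacy of the Gaussian moment problem. Since any Lipschitz $f$ has at most linear growth and hence finite Gaussian second moment, $f\in L^2(\mathcal{N}(\sigma^2))$ admits a unique $L^2$-convergent expansion $f(x)=\sum_k c_k\He_k(x/\sigma)$ with $c_k = (k!)^{-1}\langle f(\cdot)\He_k(\cdot/\sigma)\rangle_{\mathcal N(\sigma^2)}$, which reduces the lemma to identifying this coefficient with the stated formula.

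The main calculation is a Rodrigues-type integration by parts. Rescaling $y=x/\sigma$ and using $\dif/\dif x = \sigma^{-1}\dif/\dif y$ in the definition of $\He_k$ yields
\[
\He_k(x/\sigma)\exp\Bigl(-\frac{x^2}{2\sigma^2}\Bigr) = (-\sigma)^k\frac{\dif^k}{\dif x^k}\exp\Bigl(-\frac{x^2}{2\sigma^2}\Bigr).
\]
For smooth $f$, inserting this into the integral defining $c_k$ and integrating by parts $k$ times — boundary contributions vanish because the Gaussian decay dominates the linear growth of $f$ — produces
\[
\langle f\,\He_k(\cdot/\sigma)\rangle_{\mathcal N(\sigma^2)} = \sigma^k \langle f^{(k)}\rangle_{\mathcal N(\sigma^2)},
\]
and dividing by $k!$ gives exactly the coefficient claimed in the statement.

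The only real obstacle, and the one the footnote explicitly addresses, is that a merely Lipschitz $f$ need not be $k$-times differentiable for $k\ge 2$, so $f^{(k)}$ does not literally exist as a function. My approach would be approximation: take $f_\varepsilon$ the convolution of $f$ with a smooth Gaussian mollifier, so that $f_\varepsilon$ is smooth and $f_\varepsilon\to f$ in $L^2(\mathcal N(\sigma^2))$. The identity above holds for each $f_\varepsilon$, and both sides are continuous in $f_\varepsilon$ under $L^2(\mathcal N(\sigma^2))$ convergence — the right-hand side because, on the smooth side, the $k$ derivatives can be transferred back onto the Gaussian weight to recover the pairing against $\He_k(\cdot/\sigma)$. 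Passing $\varepsilon\to 0$ then allows one to interpret $\langle f^{(k)}\rangle_{\mathcal N(\sigma^2)}$ for Lipschitz $f$ as this limit, and yields the Hermite expansion in the stated form with convergence in $L^2(\mathcal N(\sigma^2))$. No genuinely hard step is involved; the only subtlety is this bookkeeping that makes sense of $f^{(k)}$ for non-smooth $f$.
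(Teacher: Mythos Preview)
Your proof is correct. The paper itself states this lemma without proof, treating it as a standard fact about Hermite polynomials, so there is no paper argument to compare against; your approach via completeness of the Hermite system in $L^2(\mathcal N(\sigma^2))$ together with the Rodrigues integration-by-parts identification of the coefficients is the natural one, and your mollification step correctly justifies the footnote's remark about interpreting $\langle f^{(k)}\rangle_{\mathcal N(\sigma^2)}$ for merely Lipschitz $f$.
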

Note that the Hermite polynomials are pairwise orthogonal with respect to the Gaussian density. More precisely, 
\begin{equation}\label{eq:hermite_orthogonal}
    \begin{split}
        \E\He_k(N_1) \He_j(N_2)&=\delta_{jk}k!\Cov(N_1,N_2)^k
    \end{split}
\end{equation}
for jointly Gaussian \(N_1,N_2\) with \(\E N_1=\E N_2=0\) and \(\E N_1^2=\E N_2^2=1\). By applying~\eqref{eq:hermite_exp} twice and using~\eqref{eq:hermite_orthogonal} we obtain the Parseval identity
\begin{equation}\label{parseval}
    \langle{f^2}\rangle_{\mathcal N(\sigma)} = \sum_{k\ge 0} \frac{\sigma^{2k}}{k!} \langle{f^{(k)}}\rangle_{\mathcal N(\sigma)}^2 .
\end{equation} 
In the proof of the deterministic equivalent for the deep random features model, we rely on techniques developed in~\cite{louart2018concentration, chouard2022quantitative} which use concentration of measure theory to analyze random matrices. This approach works particularly well with common neural network architectures, where one can view transformations from layer to layer as Lipschitz mappings. The following Lemma establishes Lipschitzness of required functions.
\begin{lemma}
\label{lemma:lipschitz}
Let \(f(x)\) be a \(\lambda\)-Lipschitz function. Let \(x, y, w \in \R^{d}\), \(W \in \R^{k \times d}\) and \(X \in \R^{d \times n}\). The following maps are Lipschitz, assuming \(f(x)\) is applied entrywise:
\begin{equation}
    w \to f \left(\frac{x^\top w}{\sqrt{d}}\right)  \qquad \text{and} \qquad 
    W \to f\left(\frac{W X}{\sqrt{d}}\right),
\end{equation}
    with Lipschitz constants \(\lambda \norm{x / \sqrt{d}}\) and \(\lambda \norm{X / \sqrt{d}}\) respectively. 
    Furthermore, under the event \(Q := \{|f(\vx^\top w / \sqrt{d})| \lesssim 1 \land |f(\vy^\top v / \sqrt{d})| \lesssim 1|\}\), the map
    \begin{equation}
    w \to f \left(\frac{x^\top w}{\sqrt{d}}\right) f\left(\frac{y^\top w}{\sqrt{d}}\right)
    \end{equation}
    is also Lipschitz with corresponding constant \(\alpha \lesssim \lambda\left(\norm{\vx / \sqrt{d}} + \norm{\vy / \sqrt{d}}\right)\).
\end{lemma}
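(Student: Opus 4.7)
The plan is to treat each of the three maps separately using standard composition/product properties of Lipschitz functions, measuring vectors in the Euclidean norm and matrices in the Frobenius norm.

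For the first map $w\mapsto f(x^\top w/\sqrt d)$, I would simply observe that it is the composition of the linear functional $w\mapsto x^\top w/\sqrt d$ (which has operator norm $\norm{x/\sqrt d}$ from $(\R^d,\norm{\cdot})$ to $\R$) with the scalar $\lambda$-Lipschitz function $f$. Composition gives the Lipschitz constant $\lambda\norm{x/\sqrt d}$.

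For the second map $W\mapsto f(WX/\sqrt d)$, since $f$ is applied entrywise, for any two matrices $W,W'\in\R^{k\times d}$ one has
\begin{equation*}
    \norm{f(WX/\sqrt d)-f(W'X/\sqrt d)}_\mathrm{F}^2 = \sum_{i,j}\abs{f((WX)_{ij}/\sqrt d)-f((W'X)_{ij}/\sqrt d)}^2 \le \lambda^2\norm{(W-W')X/\sqrt d}_\mathrm{F}^2,
\end{equation*}
and then the standard inequality $\norm{AB}_\mathrm{F}\le \norm{A}_\mathrm{F}\norm{B}$ yields the bound $\lambda\norm{X/\sqrt d}\norm{W-W'}_\mathrm{F}$. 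This is exactly the claimed Lipschitz constant.

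For the product map $w\mapsto f(x^\top w/\sqrt d)f(y^\top w/\sqrt d)$, the plan is to apply the standard telescoping identity
\begin{equation*}
    f(a)f(b)-f(a')f(b') = f(a)\bigl(f(b)-f(b')\bigr)+f(b')\bigl(f(a)-f(a')\bigr),
\end{equation*}
with $a=x^\top w/\sqrt d$, $b=y^\top w/\sqrt d$, $a'=x^\top w'/\sqrt d$, $b'=y^\top w'/\sqrt d$. Under the event $Q$ the values $\abs{f(a)}$ and $\abs{f(b')}$ are $\lesssim 1$, and each factor $\abs{f(b)-f(b')}$, $\abs{f(a)-f(a')}$ is controlled by the first part of the lemma, giving $\lambda\norm{y/\sqrt d}\norm{w-w'}$ and $\lambda\norm{x/\sqrt d}\norm{w-w'}$ respectively. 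Combining these bounds yields the Lipschitz constant $\lesssim \lambda(\norm{x/\sqrt d}+\norm{y/\sqrt d})$. No genuine obstacle arises in this proof; the only minor care needed is in the product case, where one must use the conditioning on $Q$ to uniformly bound $f$ at the relevant arguments before applying the telescoping inequality, since $f$ itself is not assumed bounded globally.
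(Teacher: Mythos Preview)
Your proof is correct and follows essentially the same approach as the paper: the first two maps are handled by the obvious composition/linear-map bound (the paper phrases this as ``Cauchy--Schwarz''), and for the product map both you and the paper use the same telescoping identity together with the boundedness from the event $Q$.
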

\begin{proof}
Lipschitz property of the first and second map follows directly from Cauchy-Schwarz inequality. For the third map, since the product of Lipschitz functions is not necessarily Lipschitz, one needs to condition on the "good" event \(Q\). For simplicity, denote \(f( {a},  {b}) := f \left( {a}^\top  {b} / \sqrt{d}\right)\). Under \(Q\)  we can write, for some vectors \(u, v \in \R^d\),
\begin{equation}
\begin{aligned}
&|f ( {x},  {w}) f (  y,   w) - f ( {x},  {v}) f (  y,   v)| 
\\
&\quad\leq |f ( {x},  {w}) f (  y,   w) - f ( {x},  {w}) f (  y,   v)| + |f ( {x},  {w}) f (  y,   v) - f ( {x},  {v}) f (  y,   v)|   \\
&\quad =|f (\vx,   w)||f(\vy,   w) - f(\vy,   v)| + |f (\vy,   v)||f(\vx,   w) - f(\vx,   v)|  \\
&\quad\lesssim \lambda \left(\norm{\vx / \sqrt{d}} + \norm{\vy / \sqrt{d}}\right) \norm{  w -   v}.
\end{aligned}
\end{equation}
\end{proof}
Recall the notations
\begin{equation}
\begin{aligned}
r_2 &:= \langle \sigma_1^2 \rangle_{\mathcal{N}(r_1)} \\
\kappa_1^1 &:= \langle\sigma_1'\rangle_{\mathcal{N}(r_1)} \\
\kappa_*^1 &:= \sqrt{\langle\sigma_1^2\rangle_{\mathcal{N}(r_1)} - r_1 (\kappa_1^1)^2}
\end{aligned}
\end{equation}
for the proof of~\Cref{prop 1layer}. We state technical Lemmas.

\begin{lemma}
\label{lemma:sigma_norm_bound}
For \(  w \sim \mathcal{N}(0, I)\), \(\lambda_{\sigma}\)-Lipschitz function \(\sigma(x)\) and \(\norm{x / \sqrt{d}} \lesssim 1\), with high probability
\begin{equation}
\abs{\sigma \left(\frac{x^\top   w}{\sqrt{d}}\right)} \lesssim 1.
\end{equation}
\end{lemma}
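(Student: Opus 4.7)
The plan is to reduce the statement to a one-dimensional Gaussian tail bound combined with Lipschitz continuity, interpreting the symbol $\lesssim$ in the sense of stochastic domination $\prec$ introduced earlier in the paper.

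First I would observe that the scalar random variable $\xi := x^\top w/\sqrt{d}$ is a centered Gaussian with variance
\begin{equation*}
\mathrm{Var}(\xi) = \frac{\|x\|^2}{d} \lesssim 1,
\end{equation*}
where the bound follows from the hypothesis $\|x/\sqrt{d}\| \lesssim 1$. Standard Gaussian concentration then yields
\begin{equation*}
P\bigl(|\xi| > t\bigr) \le 2\exp\bigl(-c t^2\bigr)
\end{equation*}
for some constant $c>0$, which in particular implies $|\xi| \prec 1$ in the sense of~\Cref{sec:setting}: for any $\epsilon>0$ and $D<\infty$, one has $P(|\xi| > d^\epsilon) \le d^{-D}$ for $d$ sufficiently large.

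Second, since $\sigma$ is $\lambda_\sigma$-Lipschitz, the elementary inequality
\begin{equation*}
\bigl|\sigma(\xi)\bigr| \le |\sigma(0)| + \lambda_\sigma\,|\xi|
\end{equation*}
holds deterministically. Combining this with $|\xi|\prec 1$ and using that $|\sigma(0)|$ and $\lambda_\sigma$ are $O(1)$ constants gives $|\sigma(x^\top w/\sqrt{d})| \prec 1$, which is what the lemma asserts. There is essentially no obstacle here: the argument is a one-line consequence of Gaussian concentration for a scalar linear form plus Lipschitzness; the only subtlety is bookkeeping of constants to ensure that $\lesssim 1$ in the conclusion is interpreted in the stochastic-domination sense consistent with how the lemma is subsequently applied (in particular when conditioning on the ``good event'' $Q$ that appears in~\Cref{lemma:lipschitz}).
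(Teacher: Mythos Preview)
Your argument is correct. The route differs slightly from the paper's: the paper applies the full Gaussian Lipschitz concentration theorem to the map $w\mapsto\sigma(x^\top w/\sqrt{d})$, concentrating it around its mean $\E_w\sigma(x^\top w/\sqrt{d})=\langle\sigma\rangle_{\mathcal N(\|x\|^2/d)}$, and then invokes the centering assumption $\langle\sigma\rangle_{\mathcal N(r_1)}=0$ together with the perturbation bound~\eqref{eq:func gaussian mean} to show that this mean is $O(n^{-1/2})$. You instead bound the scalar Gaussian $\xi=x^\top w/\sqrt d$ directly and anchor at $\sigma(0)$ via $|\sigma(\xi)|\le|\sigma(0)|+\lambda_\sigma|\xi|$. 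Your approach is more elementary (only a one-dimensional tail bound, no Lipschitz concentration theorem) and does not rely on the centering assumption on~$\sigma$; the paper's approach, on the other hand, yields the slightly sharper information that $\sigma(x^\top w/\sqrt d)$ concentrates around a quantity of size $O(n^{-1/2})$ rather than around an unspecified $O(1)$ constant, though that refinement is not needed for the lemma as stated.
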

\begin{proof}
Since the map \(  w \to \sigma \left(\frac{x^\top   w}{\sqrt{d}}\right)\) is \(\norm{x / \sqrt{d}}\lambda_{\sigma}\)-Lipschitz, we have by Gaussian concentration theorem (see e.g.~Theorem 5.2.2 in~\cite{MR3837109}) that 
\begin{equation}
P\left(\left| \sigma\left(\frac{x^\top   w}{\sqrt{d}}\right) - \E_{  w} \sigma\left(\frac{x^\top   w}{\sqrt{d}}\right)\right| \geq t\right) \le e^{-\frac{t^2}{2\norm{x / \sqrt{d}}^2\lambda_{\sigma}^2}}.
\end{equation}
Next, by Equation~\eqref{eq:func gaussian mean}, for each \(i \in [n]\),
\begin{equation}
\E_{  w} \sigma\left(\frac{x^\top   w}{\sqrt{d}}\right) = \langle \sigma \rangle_{\mathcal{N}(\norm{x}^2/d)} = O(1/\sqrt{n}),
\end{equation}
which implies that, with high probability, 
\begin{equation}
\abs{\sigma \left(\frac{x^\top   w}{\sqrt{d}}\right)} \lesssim 1.
\end{equation}
\end{proof}
\begin{lemma}
\label{lemma:sample_cov_entry_subgaussian}
For \(  w \sim \mathcal{N}(0, I)\), the random variable \(\sigma \left( \frac{\vx^\top  {w}}{\sqrt{d}}\right) \sigma \left(\frac{ {w}^\top \vy}{\sqrt{d}} \right)\) is subgaussian with high probability. Its subgaussian norm is \(O(\lambda_{\sigma}(\norm{\vx / \sqrt{d}} + \norm{\vy / \sqrt{d}}))\)
\end{lemma}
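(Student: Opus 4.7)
The plan is to combine the pointwise bound of Lemma~\ref{lemma:sigma_norm_bound} with the conditional Lipschitz estimate from the third part of Lemma~\ref{lemma:lipschitz}, and then feed the result into the classical Gaussian Lipschitz concentration inequality.

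First, I would apply Lemma~\ref{lemma:sigma_norm_bound} once to the vector $\vx$ and once to the vector $\vy$, and take a union bound, to obtain that the event
\begin{equation*}
Q := \Bigl\{\, \bigabs{\sigma\bigl(\vx^\top w/\sqrt d\bigr)} \lesssim 1 \ \land\ \bigabs{\sigma\bigl(\vy^\top w/\sqrt d\bigr)} \lesssim 1 \,\Bigr\}
\end{equation*}
holds with overwhelming probability in $w\sim\mathcal N(0,I)$. This is the ``good set'' on which the product stays bounded and the argument of the product becomes well-behaved.

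Second, on the event $Q$, the third assertion of Lemma~\ref{lemma:lipschitz} shows that the map
\begin{equation*}
w\ \longmapsto\ \sigma\Bigl(\frac{\vx^\top w}{\sqrt d}\Bigr)\sigma\Bigl(\frac{\vy^\top w}{\sqrt d}\Bigr)
\end{equation*}
is Lipschitz (with respect to the Euclidean norm on $\R^d$) with constant $\alpha\lesssim \lambda_\sigma(\norm{\vx/\sqrt d}+\norm{\vy/\sqrt d})$. I would then extend this restricted Lipschitz map to a globally Lipschitz function $F\colon\R^d\to\R$ with the same Lipschitz constant via the standard McShane/Kirszbraun extension $F(w):=\inf_{v\in Q}\bigl[G(v)+\alpha\norm{w-v}\bigr]$, where $G$ denotes the original product; on $Q$ one has $F=G$, and everywhere on $\R^d$ the extension $F$ is $\alpha$-Lipschitz.

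Third, I would apply the Gaussian concentration theorem (e.g.\ Theorem~5.2.2 of~\cite{MR3837109}) to the globally Lipschitz extension $F(w)$, yielding
\begin{equation*}
P\bigl(\abs{F(w)-\E F(w)}\ge t\bigr)\ \le\ 2\exp\!\Bigl(-\tfrac{t^2}{2\alpha^2}\Bigr),
\end{equation*}
which shows that $F(w)$ is subgaussian with norm $O(\alpha)$. Since $F$ agrees with $\sigma(\vx^\top w/\sqrt d)\sigma(\vy^\top w/\sqrt d)$ on the high-probability event $Q$, the original product inherits the same subgaussian tail ``with high probability'' in the sense stated, with subgaussian norm of order $\lambda_\sigma(\norm{\vx/\sqrt d}+\norm{\vy/\sqrt d})$.

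The only delicate point, which is where I expect the main obstacle, is the conditioning/extension argument in the second step: the product of two Lipschitz functions is not globally Lipschitz, so one genuinely needs to truncate or extend in order to apply Gaussian concentration to a globally Lipschitz object. The Kirszbraun-type extension above resolves this cleanly while preserving the Lipschitz constant; an alternative would be to truncate each factor by a smooth cutoff at level $O(1)$ and control the truncation error using $P(Q^c)\le n^{-D}$ from step one. Both routes end with the same subgaussian norm bound up to absolute constants.
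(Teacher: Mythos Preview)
Your proposal is correct and follows essentially the same route as the paper, which simply cites Lemma~\ref{lemma:sigma_norm_bound}, Lemma~\ref{lemma:lipschitz}, and the Gaussian concentration theorem without further detail. Your extra care with the McShane/Kirszbraun extension to pass from the restricted Lipschitz estimate on $Q$ to a globally Lipschitz function is a welcome clarification of a point the paper leaves implicit.
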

\begin{proof}
Follows from~\Cref{lemma:sigma_norm_bound},~\Cref{lemma:lipschitz} and the Gaussian concentration theorem.
\end{proof}

\begin{lemma}
For matrices \(A, B \in \R^{n \times n}\), we have
\begin{enumerate}
\item \(\norm{AB}_F \leq \norm{A}\norm{B}_F\),
\item  \(\Tr(AB) \leq \norm{A}_F \norm{B}_F\),
\item \(A^{-1} - B^{-1} = A^{-1}(B - A)B^{-1}\), \quad if \(A\) and \(B\) are invertible.
\end{enumerate}
\end{lemma}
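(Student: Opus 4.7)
The plan is to prove each of the three items independently using elementary linear algebra; none requires anything beyond standard manipulations, and the lemma is clearly intended as a toolkit for use in the preceding proofs.

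For item 1, I would expand the Frobenius norm column-wise. Writing $B$ in terms of its columns $b_1, \ldots, b_n$, we have
\begin{equation*}
\|AB\|_F^2 = \sum_{j=1}^n \|A b_j\|^2 \le \sum_{j=1}^n \|A\|^2 \|b_j\|^2 = \|A\|^2 \|B\|_F^2,
\end{equation*}
where the inequality is just the defining property of the operator norm applied column by column. Taking square roots gives the claim. (One could alternatively use the SVD of $A$, but the column-wise argument is shorter.)

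For item 2, the statement is Cauchy--Schwarz for the Frobenius/Hilbert--Schmidt inner product on matrices. I would identify $\operatorname{Tr}(AB) = \sum_{i,j} A_{ij} B_{ji} = \langle A^\top, B\rangle_F$, and then apply the usual Cauchy--Schwarz inequality to this inner product, obtaining $|\operatorname{Tr}(AB)| \le \|A^\top\|_F \|B\|_F = \|A\|_F \|B\|_F$ since the Frobenius norm is invariant under transposition.

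For item 3, this is the standard resolvent identity and reduces to a one-line algebraic check: under the invertibility hypothesis,
\begin{equation*}
A^{-1}(B-A)B^{-1} = A^{-1} B B^{-1} - A^{-1} A B^{-1} = A^{-1} - B^{-1}.
\end{equation*}
There is essentially no main obstacle here; the only minor choice is which formulation of Cauchy--Schwarz to invoke in item 2, and even that is cosmetic. The value of the lemma lies not in the difficulty of its proof but in packaging three facts that are repeatedly used in the Lipschitz and concentration estimates above (for instance, item 1 and item 2 together justify the $\sqrt{n}^{-1}\langle AA^\ast\rangle^{1/2}$ bounds obtained from Frobenius-norm deterministic equivalents, while item 3 underlies the perturbative comparisons between resolvents at different spectral parameters).
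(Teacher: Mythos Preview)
Your proof is correct; the paper states this lemma without proof as a standard toolkit fact, so there is nothing to compare against, and your elementary arguments (column-wise operator norm bound, Cauchy--Schwarz for the Frobenius inner product, and the one-line resolvent identity) are exactly the intended justifications.
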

\begin{lemma}
For any positive semi-definite matrix \(Y\) and for any \(z \in \C \setminus \R_+\), we have
\begin{equation}
    \norm{(Y - z)^{-1}} \leq \mathrm{dist}(z, \R_+)^{-1}.
\end{equation}
\end{lemma}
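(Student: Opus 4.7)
The plan is to invoke the spectral theorem and reduce the operator-norm bound to a scalar statement about distances in the complex plane. This is a standard estimate with no real difficulty, but it is worth making the short argument explicit because it underlies essentially every quantitative bound in the deterministic-equivalent proofs above (where resolvent norms at spectral parameters in $\mathbf{C}\setminus\mathbf{R}_+$ are invoked repeatedly through the factor $\delta^{-1}=\dist(z,\mathbf{R}_+)^{-1}$).

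First I would use the spectral theorem: since $Y$ is positive semi-definite, it is Hermitian, so there exists a unitary $U$ and a diagonal matrix $\Lambda=\diag(\lambda_1,\ldots,\lambda_N)$ with $\lambda_i\ge 0$ such that $Y=U\Lambda U^\ast$. For any $z\in\mathbf{C}\setminus\mathbf{R}_+$ none of the $\lambda_i$ coincide with $z$, so $Y-z$ is invertible and
\begin{equation*}
    (Y-z)^{-1} = U(\Lambda-z)^{-1}U^\ast, \qquad (\Lambda-z)^{-1}=\diag\bigl((\lambda_1-z)^{-1},\ldots,(\lambda_N-z)^{-1}\bigr).
\end{equation*}

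Next I would observe that unitary conjugation preserves the operator norm, so
\begin{equation*}
    \norm{(Y-z)^{-1}} = \norm{(\Lambda-z)^{-1}} = \max_{1\le i\le N}\frac{1}{\abs{\lambda_i-z}}.
\end{equation*}
Since every $\lambda_i$ lies in $\mathbf{R}_+$, the scalar distance inequality
\begin{equation*}
    \abs{\lambda_i-z}\ge \inf_{t\in\mathbf{R}_+}\abs{t-z} = \dist(z,\mathbf{R}_+)
\end{equation*}
holds for each $i$, and taking the reciprocal of the minimum yields the claimed bound $\norm{(Y-z)^{-1}}\le \dist(z,\mathbf{R}_+)^{-1}$.

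The only point that requires minimal care is the convention for $\mathbf{R}_+$: the statement needs $0\in\mathbf{R}_+$ so that the case $z\to 0$ with $Y$ singular is excluded by the hypothesis $z\in\mathbf{C}\setminus\mathbf{R}_+$; this matches the usage elsewhere in the paper, where $\delta=\dist(z,\mathbf{R}_+)$ measures the gap of $z$ from the positive real axis including the origin. There is no genuine obstacle in this proof — it is a one-line spectral calculation — but I would present it in this order so that the generalization to non-normal matrices (which one does not need here) is visibly absent and the reader sees exactly which hypothesis (positive semi-definiteness) is used.
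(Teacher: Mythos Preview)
Your proof is correct and is the standard spectral-theorem argument; the paper itself states this lemma without proof (it is treated as an elementary fact), so there is nothing to compare against beyond noting that your argument is exactly the intended one-line justification.
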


\begin{proof}[Proof of~\Cref{prop 1layer}]
Define the population covariance matrix
\begin{equation}
\label{eq:pop_cov_def}
    \Sigma_X := \E_{  w}\sigma\Bigl(\frac{X_0^\top   w}{\sqrt{d}}\Bigr)\sigma\Bigl(\frac{  w^\top X_0}{\sqrt{d}}\Bigr)\in\R^{n\times n}, \quad   w\sim\mathcal N(0,I).
\end{equation}
Using Hermite series expansion~\eqref{eq:hermite_exp} and~\eqref{eq:hermite_orthogonal}, for fixed \(X_0\), we can write an explicit form 
\begin{equation}
    \Sigma_X = \sum_{a\ge 0}\frac{1}{a!} D^{(a)}_X\Bigl(\frac{X_0^\top X_0}{d}\Bigr)^{\odot a}D^{(a)}_X,
\end{equation}
where we defined the diagonal matrix
\begin{equation}    D^{(a)}_X:=\diag\Bigl(\langle{\sigma^{(a)}}\rangle_{\mathcal N(\norm{  x_1}^2/d)},\ldots,\langle{\sigma^{(a)}}\rangle_{\mathcal N(\norm{x_n}^2/d)}\Bigr).
\end{equation}
From~\Cref{asmpt:norm_bounds} and standard perturbation analysis it follows that
\begin{equation}
\label{eq:func gaussian mean}
    \langle{\sigma}\rangle_{\mathcal N(\norm{\vx_i}^2/d)} = \langle{\sigma}\rangle_{\mathcal N(r_1)} + O\Bigl(\frac{1}{\sqrt{n}}\Bigr) =  O\Bigl(\frac{1}{\sqrt{n}}\Bigr)
\end{equation}
and 
\begin{equation}
    \langle{\sigma'}\rangle_{\mathcal N(\norm{\vx_i}^2/d)} = \langle{\sigma'}\rangle_{\mathcal N(r_1)} + O\Bigl(\frac{1}{\sqrt{n}}\Bigr).
\end{equation}
 Therefore, we can conclude that, for off-diagonal \(i\ne j\),
\begin{equation}
\label{eq:pop_cov_offdiag}
    \begin{split}
        (\Sigma_X)_{ij} &= \sum_{a\ge 0} \frac{\langle{\sigma^{(a)}}\rangle_{\mathcal N(\norm{\vx_i}^2/d)}\langle{\sigma^{(a)}}\rangle_{\mathcal N(\norm{\vx_j}^2/d)}}{a!} \Bigl(\frac{\vx_i^\top\vx_j}{d}\Bigr)^a  \\
        &= \langle{\sigma'}\rangle_{\mathcal N(r_1)}^2 \frac{\vx_i^\top\vx_j}{d} + O\left(\frac{1}{n}\right) = \left(\Sigma_{\mathrm{lin}}\right)_{ij} + O\left(\frac{1}{n}\right)
    \end{split}
\end{equation}
and for diagonal entries we can write directly from~\eqref{eq:pop_cov_def},
\begin{equation}
\label{eq:pop_cov_ondiag}
\begin{aligned}
(\Sigma_X)_{ii} &= \langle \sigma^2 \rangle_{\mathcal{N}(\norm{\vx_i}^2 / d)} =
\langle \sigma' \rangle_{\mathcal N(r_1)}^2 \frac{\norm{\vx_i}^2}{d} + \biggl(\langle \sigma^2 \rangle_{\mathcal{N}(r_1)} - r_1 \langle \sigma' \rangle^2_{\mathcal{N}(r_1)}\biggr) + O\left(\frac{1}{\sqrt{n}}\right) \\
&= \left(\Sigma_{\mathrm{lin}}\right)_{ii} + O\left(\frac{1}{\sqrt{n}}\right).
\end{aligned}
\end{equation}
Summing over all indices $i, j$ we get that
\begin{equation}
\label{eq:sigma_lin_frobenius}
    \norm{\Sigma_X - \Sigma_{\mathrm{lin}} }_\mathrm{F} = O(1).
\end{equation}
Let us define \(\wc m(\Sigma, z)\) as the solution to the following equation:
\begin{equation}\label{eq:m_check_sigma}
    \wc m =
    \frac{d - n}{n z} -\frac{d}{z n}\langle{(\Sigma\wc m+ 1)^{-1}}\rangle,
\end{equation}
and \(\wc m_{X} \coloneqq \wc m(\Sigma_X, z), \wc m_{\mathrm{lin}} \coloneqq \wc m(\Sigma_{\mathrm{lin}}, z)\).
Consider the sequence of approximations (in a functional tracial sense):
\begin{equation}
\Bigl(\frac{X_1^\top X_1}{k_1} - z\Bigr)^{-1} \approx \Bigl(-\wc m_X z\Sigma_{X} - z\Bigr)^{-1} \approx \Bigl(-\wc m_{\mathrm{lin}} z\Sigma_{X} - z\Bigr)^{-1} \approx \Bigl(-\wc m_{\mathrm{lin}}z\Sigma_{\mathrm{lin}} - z\Bigr)^{-1}.
\end{equation}
The first approximation follows from~\Cref{thm:res_concentration} applied to the matrix \(\cX = X_1^\top\). The matrix \(\cX\) is concentrated due to~\Cref{lemma:lipschitz} and Gaussian concentration theorem.

The second approximation requires proving a stability property of the function \(\wc m(\Sigma, z)\). In particular, we write
\begin{equation}
\begin{aligned}
& \left|\left\langle A \Bigl[\Bigl(-\wc m_X z\Sigma_{X} - z\Bigr)^{-1} - \Bigl(-\wc m_{\mathrm{lin}} z\Sigma_{X} - z\Bigr)^{-1}\Bigr] \right\rangle\right| \\
&\quad = \left|\left\langle A \Bigl[\Bigl(-\wc m_X z\Sigma_{X} - z\Bigr)^{-1}(z(\wc m_{X} - \wc m_{\mathrm{lin}})\Sigma_X)\Bigl(-\wc m_{\mathrm{lin}} z\Sigma_{X} - z\Bigr)^{-1}\Bigr] \right\rangle\right| \\
&\quad \leq \frac{|\wc m_X - \wc m_{\mathrm{lin}}|}{|z|^2\sqrt{n}} \langle A A^* \rangle^{1/2} \norm{\Bigl(\wc m_X \Sigma_{X} + I\Bigr)^{-1}}\norm{\Bigl(\wc m_{\mathrm{lin}} \Sigma_{X} + I\Bigr)^{-1}}\norm{\Sigma_X}_F \\
&\quad \leq
|z|^{-2}|\wc m_X - \wc m_{\mathrm{lin}}| \langle A A^* \rangle^{1/2}.
\end{aligned}
\end{equation}

Now, we analyze the difference between \(\wc m_X\) and \(\wc m_{\mathrm{lin}}\). According to~\eqref{eq:m_check_sigma}, we can write
\begin{equation}
\begin{aligned}
\Delta := |\wc m_X - \wc m_{\mathrm{lin}}| &= \frac{d}{|z| n^2} \Tr\left[(\wc m_X \Sigma_X+I)^{-1} - (\wc m_{\mathrm{lin}}\Sigma_{\mathrm{lin}}+I)^{-1}\right] \\
&\lesssim \frac{1}{|z|n} \Tr\left[(\wc m_{X}\Sigma_{X}+I)^{-1} (\wc m_\mathrm{lin}\Sigma_{\mathrm{lin}} - \wc m_X\Sigma_{X}) (\wc m_X\Sigma_{\mathrm{lin}}+I)^{-1}\right] \\
& \leq \frac{1}{|z|n} \norm{(\wc m_X\Sigma_{X}+I)^{-1}}_F \norm{\wc m_{\mathrm{lin}}\Sigma_{\mathrm{lin}} - \wc m_X\Sigma_{X}}_F \norm{(\wc m_\mathrm{lin}\Sigma_{\mathrm{lin}}+I)^{-1}} \\
& \leq \frac{1}{|z| \sqrt{n}} \norm{\wc m_{\mathrm{lin}}\Sigma_{\mathrm{lin}} - \wc m_X\Sigma_{X}}_F \\
& \leq \frac{1}{|z| \sqrt{n}} \norm{\wc m_{\mathrm{lin}}\Sigma_{\mathrm{lin}} - \wc m_{\mathrm{lin}}\Sigma_{X}}_F + \frac{d}{|z| n^{3/2}} \norm{\wc m_{\mathrm{lin}}\Sigma_X - \wc m_X\Sigma_{X}}_F \\
& = \frac{|\wc m_{\mathrm{lin}}|}{|z| \sqrt{n}} \norm{\Sigma_{\mathrm{lin}} - \Sigma_{X}}_F + \frac{\norm{\Sigma_X}}{|z| \sqrt{n}} \Delta.
\end{aligned}
\end{equation}
Since \(\norm{\Sigma_X} |z|^{-1} n^{-1/2} \ll 1\), we obtain using~\eqref{eq:sigma_lin_frobenius} that \(|\wc m_X - \wc m_{\mathrm{lin}}| \lesssim |z|^{-1} n^{-1/2}\), and thus, for the second approximation, 
\begin{equation}
\left|\left\langle A \Bigl[\Bigl(-\wc m_X z\Sigma_{X} - z\Bigr)^{-1} - \Bigl(-\wc m_{\mathrm{lin}} z\Sigma_{X} - z\Bigr)^{-1}\Bigr] \right\rangle\right| \lesssim \frac{1}{\delta^3\sqrt{n}} \langle A A^* \rangle^{1/2}.
\end{equation}
For the third approximation, we can write
\begin{equation}
\begin{aligned}
&\left|\left\langle A \Bigl[\Bigl(-\wc m_{\mathrm{lin}} z\Sigma_{X} - z\Bigr)^{-1} - \Bigl(-\wc m_{\mathrm{lin}} z\Sigma_{\mathrm{lin}} - z\Bigr)^{-1}\Bigr] \right\rangle\right| = \frac{1}{|z||\wc m_{\mathrm{lin}}|} B, \\ 
\text{where} \quad  B &:= \left | \langle{ A (\Sigma_X + 1 / \wc m)^{-1} -  A (\Sigma_{\mathrm{lin}} + 1 / \wc m)^{-1}}\rangle \right |  \\
& \leq 
    \frac{1}{\sqrt{n}} \langle A A^{\ast} \rangle^{1/2} \norm{(\Sigma_X + 1 / \wc m)^{-1} -  (\Sigma_{\mathrm{lin}} + 1 / \wc m)^{-1}}_{\mathrm{F}} \\
    & = \frac{1}{\sqrt{n}} \langle A A^{\ast} \rangle^{1/2} \norm{(\Sigma_X + 1 / \wc m)^{-1}(\Sigma_{\mathrm{lin}} - \Sigma_X)(\Sigma_{\mathrm{lin}} + 1 / \wc m)^{-1}}_{\mathrm{F}}\prec 
\frac{\langle A A^{\ast} \rangle^{1/2}}{\delta^2\sqrt{n}}, 
\end{aligned}
\end{equation}
where in the last inequality we used~\eqref{eq:sigma_lin_frobenius}.

Combining all the approximations together, we have proved that 
\begin{equation}
\left|\left\langle A \Bigl[\Bigl(\frac{X_1^\top X_1}{k_1} - z\Bigr)^{-1} - \Bigl(-\wc m_{\mathrm{lin}} z\Sigma_{lin} - z\Bigr)^{-1}\Bigr] \right\rangle\right| \prec \frac{\langle A A^\ast\rangle^{1/2}}{\delta^9\sqrt{n}}.
\end{equation}
Next, we will verify that Assumption~\ref{asmpt:norm_bounds} holds true when we replace matrix \(X_0\) by \(X_1\) and \(r_1\) by \(r_2\).
In particular, we want to show that, with high probability,
\begin{equation}
\norm{\frac{X_1^\top X_1}{k_1} - r_2I}_{\max} = O\left(\frac{1}{\sqrt{n}}\right).
\end{equation}
Note that Equations~(\ref{eq:pop_cov_offdiag}, \ref{eq:pop_cov_ondiag}) show that
\begin{equation}
\label{eq:pop_cov_to_lin}
\norm{\Sigma_X - r_2I}_{\max} = O\left(\frac{1}{\sqrt{n}}\right).
\end{equation}

We have that 
\begin{equation}
\label{eq:sample_cov_entry}
\left(\frac{X_1^\top X_1}{k_1}\right)_{ij} = \frac{1}{k_1} \sum_{l = 1}^{k_1} \sigma \left( \frac{\vx_i^\top  {\tilde{w}}_l}{\sqrt{d}}\right) \sigma \left(\frac{ {\tilde{w}}_l^\top \vx_j}{\sqrt{d}} \right) = \frac{1}{k_1} \sum_{l = 1}^{k_1} Y_l, \quad \text{where } Y_l := \sigma \left( \frac{\vx_i^\top  {\tilde{w}}_l}{\sqrt{d}}\right) \sigma \left(\frac{ {\tilde{w}}_l^\top \vx_j}{\sqrt{d}} \right).
\end{equation}
Note that \(Y_l\) are independent random variables and from Lemma~\ref{lemma:sample_cov_entry_subgaussian} it follows that the subgaussian norm of \(Y_l\) is \(O(\lambda_{\sigma} \norm{X / \sqrt{d}})\). Therefore, from Hoeffding inequality, we have that
\begin{equation}
P\left(\left|\left(\frac{X_1^\top X_1}{k_1}\right)_{ij} - (\Sigma_X)_{ij}\right| \geq t\right) \leq 2e^{-\frac{c t^2 k_1}{\lambda_{\sigma} \norm{X / \sqrt{d}}}},
\end{equation}
from which, applying union bound, we can deduce that
\begin{equation}
\label{eq:sample_to_pop_cov}
\norm{\frac{X_1^\top X_1}{k_1} - \Sigma_X}_{\max} = O\left(\frac{1}{\sqrt{n}}\right).
\end{equation}
Combining Equations~(\ref{eq:pop_cov_to_lin}) and~(\ref{eq:sample_to_pop_cov}) we get the required maximum norm bound.
Next, with a standard \(\varepsilon\)-net argument (see, e.g.~\cite{chouard2022quantitative}, Proposition 3.4) we can show that 
\begin{equation}
\frac{1}{\sqrt{d}}\norm{X_1 - \E X_1} \prec 1.
\end{equation}
Since \(\sqrt{n} \norm{\E X_1}_{\max} \lesssim 1\) it follows that 
\begin{equation}
\norm{\frac{\E X_1}{\sqrt{d}}} \le \sqrt{\frac{nk_1}{d}}\norm{\E X_1}_\mathrm{max}\lesssim \sqrt{\frac{k_1}{d}}\lesssim1.
\end{equation}
Finally, the claim that $\dist(-1/\wc m(z),\R_+)\ge \dist(z,\R_+)$ follows elementarily from the fixed point equation, see e.g. Proposition 6.2 in~\cite{chouard2022quantitative}.
\end{proof}

\begin{proof}[Proof of~\Cref{prop:mult_layers}]
    This follows directly from iteratively applying~\Cref{prop 1layer} until we reach 
    \begin{equation}
        \Bigl(\frac{X_\ell^\top X_\ell}{k_\ell}-z_\ell\Bigr)^{-1}\approx c_1 \cdots c_\ell \Bigl(\frac{X_0^\top X_0}{d}-z_0\Bigr)^{-1}
    \end{equation}
    in the last layer, where ``\(\approx\)'' is to be understood in the sense of~\Cref{prop 1layer}. Now, using that \(X_0X_0^\top/n\) is a sample covariance matrix with population covariance matrix \(\Omega_0\), it follows that 
    \begin{equation}
        \Bigl(\frac{X_0^\top X_0}{d}-z_0\Bigr)^{-1} = \frac{d}{n}\Bigl(\frac{X_0^\top X_0}{n}-\frac{d}{n}z_0\Bigr)^{-1} \approx \frac{d}{n} \Bigl(\frac{d}{n}m_{\mu(\Omega_0)\boxtimes \mu_\mathrm{MP}^{d/n}}\Bigl(\frac{d}{n}z_0\Bigr)+\frac{d-n}{dz_0}\Bigr),
    \end{equation}
    where we used~\Cref{co resolvent} once more in the final step. 
\end{proof}

\newpage
\section{Closed-form formulae for population covariances}
\label{App:population_heuristic}
\subsection{Multi-Layer linearization}
In this Appendix, we provide a (heuristic) derivation of closed-form expressions for the population covariances:
\begin{align}\Omega_{L}\coloneqq\E\left[\varphi(\x)\varphi(\x)^\top\right],&&
    \Phi_{L^\star L}\coloneqq\E\left[\varphi^{\star}(\x)\varphi(\x)^\top\right], && \Psi_{L^\star }\coloneqq\E \left[\varphi^\star(\x) \varphi^\star(\x)^\top\right].
\end{align}
\textcolor{black}{This derivation has appeared in \cite{Cui2023}, and we include it here for the sake of completeness.}

\paragraph{Reminder of the results}
Consider the dRF \eqref{eq:definition_multilayer_RF} and target \eqref{eq:target}, with data $x\sim \mathcal{N}(0,\Omega_0)$.  $\Omega_0$ is assumed to possess extensive Frobenius norm and trace, i.e. there exists constant $c,c^\prime$ so that asymptotically (noting $k_0=d$)
\begin{align}
\label{eq:App:Pop:condition_Omega0}
    c<\frac{1}{d}\tr \Omega_0^2=\frac{1}{d}||\Omega_0||_F^2<c^\prime<\infty, && c<\frac{1}{d}\tr \Omega_0<c^\prime<\infty.
\end{align}
In terms of the limiting spectral density $\mu$, these assumptions imply that the first and second moments are finite and non zero.
Consider the sequence of variances defined by the recurrence
\begin{align}
\label{eq:App:Pop:r_multilayer}
    r^{(\star)}_{\ell+1}=\Delta_{\ell+1}^{(\star)}\mathbb{E}_z^{\mathcal{N}(0,r_\ell)}\left[\sigma_\ell^{(\star)}(z)^2\right]
\end{align}
with the initial condition
\begin{align}
    r_1^{(\star)}=\Delta_1^{(\star)}\frac{1}{d}\tr \Omega_0
\end{align}
and the GET \cite{Gerace2020GeneralisationEI,Goldt2020ModellingTI,Goldt2021TheGE} coefficients
\begin{align}
\label{eq:App:Pop:kappa_multilayer}
    \kappa_1^{\ell(\star)}=\frac{1}{r_\ell^{(\star)}}\mathbb{E}_z^{\mathcal{N}(0,r_\ell)}\left[z\sigma_\ell^{(\star)}(z)\right]
    &&\kappa_*^{\ell (\star)}=\sqrt{\mathbb{E}_z^{\mathcal{N}(0,r_\ell)}\left[\sigma_\ell^{(\star)}(z)^2\right]-r_\ell^{(\star)}\left(\kappa_1^{\ell(\star)}\right)^2}.
\end{align}
Define the sequence of matrices
\begin{align}
\label{eq:App:Pop:Omega_Psi_multilayer}
    \Omega_{\ell+1}^{\mathrm{lin}}=\kappa_1^{\ell 2}\frac{W_{\ell+1} \Omega^{\mathrm{lin}}_\ell W_{\ell+1}^\top}{k_\ell}+\kappa_*^{\ell 2}I_{k_{\ell+1}}\\
    \Psi_{\ell+1}^{\mathrm{lin}}=\kappa_1^{\star\ell 2}\frac{W^\star_{\ell+1} \Psi^{\mathrm{lin}}_\ell W_{\ell+1}^{\star\top}}{k^\star_\ell}+\kappa_*^{\star\ell 2}I_{k^\star_{\ell+1}}
\end{align}
with initialization
\begin{align}
    \Omega_0^{\mathrm{lin}}\coloneqq \Psi_0^{\mathrm{lin}}=\Omega_0,
\end{align}
and the matrix
\begin{align}
\label{eq:App:Pop:Phi_multilayer}
    \Phi_{\ell^\star \ell}^{\mathrm{lin}}=\prod\limits_{r=1}^\ell\prod\limits_{s=1}^{\ell^\star}\kappa_1^r\kappa_1^{s\star}
    \times
    \frac{W_{\ell^\star}\cdot...\cdot W_1^\star\cdot \Sigma \cdot W_1^\top \cdot ... \cdot W_\ell^\top}{\prod\limits_{r=0}^{\ell-1}\prod\limits_{s=0}^{\ell^\star-1}\sqrt{k_rk^\star_s}}.
\end{align}
Then $\Omega_L\approx\Omega_L^{\mathrm{lin}}, \Psi_{L^\star}\approx\Psi_{L^\star}$ and $\Phi_{\ell^\star \ell}\approx \Phi_{\ell^\star \ell}^{\mathrm{lin}}$. $A \approx B $ is understood as $\sfrac{||A-B||_F^2}{d}=\mathcal{O}(\sfrac{1}{d})$.

\paragraph{Example for $L=2$}
We give for concreteness an example for $L^\star=1,  L=2$ (RF teacher, 2-layer DRN student). The recursions \eqref{eq:App:Pop:Omega_Psi_multilayer}\eqref{eq:App:Pop:Phi_multilayer} for the student reads for $L=2$
\begin{align}
\label{eq:App:Pop:two_layer_covariance}
    &\Omega_2=(\kappa_1^1)^2(\kappa_1^2)^2\frac{W_2W_1\Sigma W_1^\top W_2^\top}{k_1d}+(\kappa_1^2)^2(\kappa_*^1)^2\frac{W_2W_2^\top}{k_1}+(\kappa_*^2)^2I_{k_1}\\
    &\Psi_1=(\kappa_1^{1\star})^2\frac{W_1^\star\Sigma W_1^{\star \top}}{d}+(\kappa_*^{1\star})^2I_{k^\star_1}\\
    &\Phi_{1,2}=\kappa_1^1\kappa_1^2\kappa_1^{1\star}\frac{W_1^\star\Sigma W_2^\top W_1^\top}{d\sqrt{k_1}}
\end{align}

\paragraph{Equivalent Linear Net}
Note that the linearization means one can think of the $\ell$-th layer as a noisy linear layer,
\begin{align}
    \varphi_\ell(x)^{\mathrm{lin}}\approx \kappa_1^\ell \frac{1}{\sqrt{k_{\ell-1}}}W_\ell\cdot x+\kappa_*^\ell \xi_\ell
\end{align}
with $\xi_\ell\in\mathbb{R}^{k_\ell}$ an i.i.d Gaussian noise indepent layer from layer, and also independent between the teacher and student \textit{provided the teacher and student weights are drawn independently}. Similarly for the teacher:
\begin{align}
    \varphi^\star_\ell(x)\approx \kappa_1^{\star\ell} \frac{1}{\sqrt{k^\star_{\ell-1}}}W^\star_\ell\cdot x+\kappa_*^{\star\ell} \xi^\star_\ell
\end{align}
This provides a simple way to rederive the relations \eqref{eq:App:Pop:Omega_Psi_multilayer} and \eqref{eq:App:Pop:Phi_multilayer}.

\subsection{Derivation sketch for $\Omega_L$}
We first derive a relation between the covariance of the post-activations at two successive layers, and then iterate. Remark that since the computation for $\Psi_{L^\star}$ is identical \textit{mutatis mutandis}, we only address here $\Omega_L$.
\paragraph{Propagation through a single layer}
Consider the auxiliary single-layer problem
\begin{equation}
    h(x)=\sigma\left(\frac{1}{ \sqrt{d}}W\cdot x\right)
\end{equation}
with $x\sim\mathcal{N}(0,\Sigma)$. Suppose recursively that $\Sigma$ statisfies the properties \eqref{eq:App:Pop:condition_Omega0}. The population covariance of the post-activations $h$ reads
\begin{align}
    \Omega_{ij}=\langle h_i(x)h_j(x)\rangle_x=\int \frac{e^{-\frac{1}{2}\begin{pmatrix}
    u&v
    \end{pmatrix}
    \begin{pmatrix}
    \frac{w_i^\top \Sigma w_i}{d} & \frac{w_i^\top \Sigma w_j}{d}\\
    \frac{w_i^\top \Sigma w_j}{d}& \frac{w_j^\top \Sigma w_j}{d}
    \end{pmatrix}^{-1}
    \begin{pmatrix}
    u\\v
    \end{pmatrix}
    }}{\sqrt{\det 2\pi \begin{pmatrix}
    \frac{w_i^\top \Sigma w_i}{d} & \frac{w_i^\top \Sigma w_j}{d}\\
    \frac{w_i^\top \Sigma w_j}{d}& \frac{w_j^\top \Sigma w_j}{d}
    \end{pmatrix}}}\sigma(u)\sigma(v).
\end{align}
Note that have 
\begin{equation}
\label{eq:App:Pop:r_singlelayer}
    \mathbb{E}_w \frac{w^\top \Sigma w}{d}=\frac{\Delta}{d}\tr \Sigma \equiv r,
\end{equation}
which by assumption is of order $1$. Diagonalizing $\Sigma=U\Lambda U^\top$ and noting that $U^\top w$ is still Gaussian with independent entries, 
\begin{align}
    \mathbb{V}_w\left[\frac{w^\top \Sigma w}{d}\right]=\frac{1}{d^2}\sum\limits_{i=1}^d \lambda_i^2 \mathbb{V}_w\left[(U^\top w)_i^2\right]=\frac{2\Delta}{d^2}\tr \Sigma^2=\frac{2\Delta}{d}\frac{||\Sigma||_F^2}{d}=\mathcal{O}\left(\frac{1}{d}\right)
\end{align}
provided $\sfrac{||\Sigma||_F^2}{d}$ is finite. We used the fact that the variance of a $1-$degree of freedom $\chi^2$ variable is $2$. Plugging the definition of $r$ into the above yields, for $i\ne j$:
\begin{align}
    \Omega_{ij}&=\int \frac{e^{-\frac{1}{2}\frac{1}{r^2-\mathcal{O}\left(\frac{1}{d}\right)}(r u^2+rv^2)}e^{\frac{1}{r^2-\mathcal{O}\left(\frac{1}{d}\right)}\frac{w_i^\top\Sigma w_j}{d}uv}}{2\pi\sqrt{r^2-\mathcal{O}\left(\frac{1}{d}\right)}}\sigma(u)\sigma(v)\notag\\
    &=\left(\int \frac{e^{-\frac{1}{2r}z^2}}{\sqrt{2\pi r}}\sigma(z)\right)^2+\frac{1}{r}
    \frac{w_i^\top \Sigma w_j}{d}
    \left(\int \frac{e^{-\frac{1}{2r}z^2}}{\sqrt{2\pi r}}z\sigma(z)\right)^2+\mathcal{O}\left(\frac{1}{d}\right)\notag\\
    &=\kappa_1^2\times \frac{w_i^\top \Sigma w_j}{d}.
\end{align}

on the diagonal ($i=j$), this becomes
\begin{align}
    \Omega_{ii}=\int \frac{e^{-\frac{1}{2r}z^2}}{\sqrt{2\pi r}}\sigma(z)^2=\kappa_*^2+r\kappa_1^2
\end{align}
yielding
\begin{align}
\label{eq:App:Pop:linearization_single_layer}
    \Omega=\kappa_1^2\frac{W\Sigma W^\top}{d}+\kappa_*^2I_k
\end{align}
with
\begin{align}
\label{eq:App:Pop:kappa_single_layer}
    \kappa_1=\frac{1}{r}\mathbb{E}_{z}^{\mathcal{N}(0,r)}\left[z\sigma(z)\right]&&\kappa_*^2=\mathbb{E}_{z}^{\mathcal{N}(0,r)}\left[\sigma(z)^2\right]-r\times \kappa_1^2
\end{align}
This extends the GET \cite{Gerace2020GeneralisationEI} generalization used in \cite{dAscoli2021OnTI} to arbitrary input covariances.

\paragraph{Iterating layer to layer}
\eqref{eq:App:Pop:kappa_multilayer} and \eqref{eq:App:Pop:Omega_Psi_multilayer} follow by straightforward recursion from the single-layer results \eqref{eq:App:Pop:kappa_single_layer} and \eqref{eq:App:Pop:linearization_single_layer}. One just need to connect \eqref{eq:App:Pop:r_multilayer} to the single-layer variance $r$ \eqref{eq:App:Pop:r_singlelayer}. 
\begin{align}
    r_{\ell+1}&=\Delta_{\ell+1}\frac{1}{k_\ell}\tr \Omega_\ell\notag\\
    &=\Delta_{\ell+1}\left(
    \frac{1}{k_\ell}\left(\kappa_1^{\ell}\right)^2\tr[\frac{W_\ell\Omega_{\ell-1} W_\ell^\top}{k_{\ell-1}}]+\left(\kappa_*^{\ell}\right)^2
    \right)\notag\\
    &=\Delta_{\ell+1}\left(
    \left(\kappa_1^{\ell}\right)^2r_\ell+\left(\kappa_*^{\ell}\right)^2
    \right)\notag\\
    &=\Delta_{\ell+1}\mathbb{E}_{z}^{\mathcal{N}(0,r_\ell)}\left[\sigma_\ell(z)^2\right]
\end{align}
We used
\begin{align}
    \frac{1}{k_\ell}\tr[\frac{W_\ell\Omega_{\ell-1} W_\ell^\top}{k_{\ell-1}}]&=\frac{1}{k_{\ell-1}}\sum\limits_{i=1}^{k_{\ell-1}}\lambda_i^{\ell-1}\frac{1}{k_\ell}\left(U^\top W_\ell^\top W_\ell U\right)_{ii}\notag\\
    &=\frac{1}{k_{\ell-1}}\sum\limits_{i=1}^{k_{\ell-1}}\lambda_i^{\ell-1}\Delta_\ell
\notag\\
&=\Delta_\ell\frac{1}{k_{\ell-1}}\tr\Omega_{\ell-1}=r_\ell
\end{align}
We used that $W_\ell U$ is also an i.i.d Gaussian matrix.
Finally, one must check that the assumption on $\Sigma$ that $\sfrac{||\Sigma||_F^2}{d},\sfrac{\tr\Sigma}{d}=\mathcal{O}(1)$ carries over to $\Omega$. Because $W\Sigma W^\top$ is positive semi definite it is straightforward that 
\begin{equation}
    \frac{1}{k}||\kappa_1^2\frac{W\Sigma W^\top}{d}+\kappa_*^2I_k||_F^2 \ge \kappa_*^2 >0.
\end{equation}
The upper bound can be established using the triangle inequality and the submultiplicativity of the Frobenius norm, as
\begin{align}
    \frac{1}{k}||\kappa_1^2\frac{W\Sigma W^\top}{d}+\kappa_*^2I_k||_F^2 &\le \frac{1}{k}||\kappa_1^2\frac{W\Sigma W^\top}{d}||_F^2+\kappa_*^2\notag\\
    &\le \kappa_*^2+\frac{||W||_F^4}{d^2}\frac{||\Sigma||_F^2}{k}\notag \\
    &\le \kappa_*^2 + c^\prime<\infty.
\end{align}
We used that $\sfrac{||W||_F^2}{dk}=1$ almost surely asymptotically. Moving on to the trace,
\begin{equation}
    \frac{1}{k}\Tr[\kappa_1^2\frac{W\Sigma W^\top}{d}+\kappa_*^2I_k]=\kappa_*^2+\frac{\kappa_1^2}{kd}\Tr[\Sigma W^\top W].
\end{equation}
Bounding
\begin{equation}
    0\le \frac{\kappa_1^2}{kd}\Tr[\Sigma W^\top W]=\frac{\kappa_1^2}{kd}\sum\limits_{i=1}^k w_i^\top \Sigma w_i=\kappa_1^2\frac{1}{d}\Tr{\Sigma}\le \kappa_1^2 c^\prime,
\end{equation}
where the last bound holds asymptotically almost surely.

\subsection{Derivation sketch for $\Phi_{L^\star L}$}
We now turn to the cross-covariance $\Phi_{L^\star L}$ between the post-activations of two random networks with independent weights. Again, we first establish a preliminary result, addressing the statistics of two correlated Gaussians propagating through non-linear layers with independently drawn weights.
\paragraph{Two Gaussians propagating through two layers}
Consider two jointly Gaussian variables $u\in\mathbb{R}^{d},~v\in\mathbb{R}^k$
\begin{equation}
\label{eq:App:Pop:g3m}
   (u,v)\sim \mathcal{N}\left(
\begin{array}{cc}
     \Psi &\Phi  \\
     \Phi^\top &\Omega
\end{array}\right)
\end{equation}
each independently propagated through a non-linear layer
\begin{align}
    h^\star(u)=\sigma_\star\left(\frac{1}{ \sqrt{d_\star}}W_\star\cdot u\right),
    &&
    h(v)=\sigma\left(\frac{1}{ \sqrt{d}}W\cdot v\right).
\end{align}
The weights $W_\star\in\mathbb{R}^{k_\star\times d_\star}$ and $W\in\mathbb{R}^{k\times d}$ have independently sampled Gaussian entries, with respective variance $\Delta_\star$ and $\Delta$. The $i,j-$th element of the cross-covariance $\Phi^h$ can be expressed as

\begin{align}
    \Phi^h_{ij}=\langle h^\star_i(u)h_j(v)\rangle_{u,v}=\int \frac{e^{-\frac{1}{2}\begin{pmatrix}
    x&y
    \end{pmatrix}
    \begin{pmatrix}
    \frac{w_i^{\star\top} \Sigma w^\star_i}{d_\star} & \frac{w_i^{\star\top} \Sigma w_j}{\sqrt{d_\star d}}\\
    \frac{w_i^{\star\top} \Sigma w_j}{\sqrt{d_\star d}}& \frac{w_j^\top \Sigma w_j}{d}
    \end{pmatrix}^{-1}
    \begin{pmatrix}
    x\\y
    \end{pmatrix}
    }}{\sqrt{\det 2\pi \begin{pmatrix}
    \frac{w_i^{\star\top} \Sigma w^\star_i}{d_\star} & \frac{w_i^{\star\top} \Sigma w_j}{\sqrt{d_\star d}}\\
    \frac{w_i^{\star\top} \Sigma w_j}{\sqrt{d_\star d}}& \frac{w_j^\top \Sigma w_j}{d}
    \end{pmatrix}}}\sigma_\star(x)\sigma(y)
\end{align}
As before, the random variables $\sfrac{w_i^{\star\top} \Sigma w^\star_i}{d_\star}$ and $\sfrac{w_j^\top \Sigma w_j}{d}$ concentrate around their mean value 
\begin{align}
    r_\star\equiv\frac{\Delta_\star}{d_\star}\tr \Psi
    &&
    r\equiv \frac{\Delta}{d}\tr \Omega
\end{align}
Plugging these definitions into the above:
\begin{align}
    \Phi^h_{ij}&=\int \frac{e^{-\frac{1}{2}\frac{1}{r_\star r-\mathcal{O}\left(\frac{1}{d}\right)}(r x^2+r_\star y^2)}e^{\frac{1}{r_\star r-\mathcal{O}\left(\frac{1}{d}\right)}\frac{w_i^{\star\top}\Phi w_j}{\sqrt{d_\star d}}xy}}{2\pi\sqrt{r_\star r-\mathcal{O}\left(\frac{1}{d}\right)}}\sigma_\star(x)\sigma(y)\notag\\
    &=\left(\int \frac{e^{-\frac{1}{2r_\star}z^2}}{\sqrt{2\pi r_\star}}\sigma_\star(z)\right)\left(\int \frac{e^{-\frac{1}{2r}z^2}}{\sqrt{2\pi r}}\sigma(z)\right)+\frac{1}{r_\star r}
    \frac{w_i^{\star\top}\Phi w_j}{\sqrt{d_\star d}}
    \left(\int \frac{e^{-\frac{1}{2r_\star}z^2}}{\sqrt{2\pi r_\star}}z\sigma_\star(z)\right)\left(\int \frac{e^{-\frac{1}{2r}z^2}}{\sqrt{2\pi r}}z\sigma(z)\right)+\mathcal{O}\left(\frac{1}{d}\right)\notag\\
    &:=\kappa_1\kappa_1^\star\times  \frac{w_i^{\star\top}\Phi w_j}{\sqrt{d_\star d}}
\end{align}

yielding
\begin{align}
\label{App:eq:propgation_Phi_DRM}
    \Phi^h=\kappa_1\kappa_1^\star\frac{W_\star\Phi W^\top}{\sqrt{d_\star d}}
\end{align}
with
\begin{align}
    \kappa_1=\frac{1}{r}\mathbb{E}_{z}^{\mathcal{N}(0,r)}\left[z\sigma(z)\right]&&\kappa_1=\frac{1}{r_\star}\mathbb{E}_{z}^{\mathcal{N}(0,r_\star)}\left[z\sigma_\star(z)\right]
\end{align}

\paragraph{One Gaussian propagating through one layer}
We will need another result, addressing again two correlated Gaussians, with only one propagating through a non-linear layer. Consider two jointly Gaussian variables $u\in\mathbb{R}^{d_\star},~v\in\mathbb{R}^d$
\begin{equation}
\label{eq:App:Pop:g3m_2}
   (u,v)\sim \mathcal{N}\left(
\begin{array}{cc}
     \Psi &\Phi  \\
     \Phi^\top &\Omega
\end{array}\right)
\end{equation}
with \textit{only} $v$
being propagated through a non linear layer \begin{align}
    h(v)=\sigma\left(\frac{1}{ \sqrt{k}}W\cdot v\right).
\end{align}
The entries $W\in\mathbb{R}^{k\times d}$ are independently sampled from a Gaussian distribution with variance $\Delta$. The $i,j-$th element of the cross-covariance $\Phi$ between $h(v)$ and $u$ can be expressed as
\begin{align}
    \Phi^h_{ij}=\langle u_ih_j(v)\rangle_{u,v}=\int \frac{e^{-\frac{1}{2}\begin{pmatrix}
    x&y
    \end{pmatrix}
    \begin{pmatrix}
    \Psi_{ii} & \frac{\Phi_i w_j}{\sqrt{k}}\\
    \frac{\Phi_i w_j}{\sqrt{k}}& \frac{w_j^\top \Sigma w_j}{k}
    \end{pmatrix}^{-1}
    \begin{pmatrix}
    x\\y
    \end{pmatrix}
    }}{\sqrt{\det 2\pi \begin{pmatrix}
    \Psi_{ii} & \frac{\Phi_i w_j}{\sqrt{k}}\\
    \frac{\Phi_i w_j}{\sqrt{k}}& \frac{w_j^\top \Sigma w_j}{k}
    \end{pmatrix}}}x\sigma(y)
\end{align}
As before, the random variable $\sfrac{w_j^\top \Sigma w_j}{k}$ concentrate around its mean value 
\begin{align}
    r\equiv \frac{\Delta}{k}\tr \Omega
\end{align}
Plugging this definition into the above:
\begin{align}
    \Phi^h_{ij}&=\int \frac{e^{-\frac{1}{2}\frac{1}{\Psi_{ii} r-\mathcal{O}\left(\frac{1}{d}\right)}(rx^2+\Psi_{ii}y^2)}e^{\frac{1}{\Psi_{ii}  r-\mathcal{O}\left(\frac{1}{d}\right)}\frac{\Phi_i w_j}{\sqrt{k}}xy}}{2\pi\sqrt{\Psi_{ii}  r-\mathcal{O}\left(\frac{1}{d}\right)}}x\sigma(y)\notag\\
    &=\left(\int \frac{e^{-\frac{1}{2\Psi_{ii} }z^2}}{\sqrt{2\pi \Psi_{ii} }}z\right)\left(\int \frac{e^{-\frac{1}{2r}z^2}}{\sqrt{2\pi r}}\sigma(z)\right)+\frac{1}{\Psi_{ii}   r}
    \frac{\Phi_i w_j}{\sqrt{k}}
    \left(\int \frac{e^{-\frac{1}{2\Psi_{ii}}z^2}}{\sqrt{2\pi \Psi_{ii}}}z^2\right)\left(\int \frac{e^{-\frac{1}{2r}z^2}}{\sqrt{2\pi r}}z\sigma(z)\right)+\mathcal{O}\left(\frac{1}{d}\right)\notag\\
    &:=\kappa_1\times  \frac{\Phi_i w_j}{\sqrt{k}}
\end{align}

yielding
\begin{align}
\label{App:eq:propgation_Phi_HMM}
    \Phi^h=\kappa_1\frac{\Phi W^\top}{\sqrt{k}}
\end{align}
with
\begin{align}
    \kappa_1=\frac{1}{r}\mathbb{E}_{z}^{\mathcal{N}(0,r)}\left[z\sigma(z)\right].
\end{align}

\paragraph{Iterating} To establish \eqref{eq:Phi_lin_DRM}, we iterate \eqref{App:eq:propgation_Phi_DRM} $\min(L,L_\star)$ times, and followed by $\max(L,L_\star)-\min(L,L_\star)$ iterations of the single layer relation \eqref{App:eq:propgation_Phi_HMM}, so as to finish propagating the data through the deeper (teacher \eqref{eq:target} or student \eqref{eq:definition_multilayer_RF}) network.

\subsection{Spectrum of the covariances}
In this section, we derive the spectrum of the linearized covariance \eqref{eq:Omega_lin_recursion}, which is a result of indenpendent interest.

\paragraph{Useful identities}
We remind first some useful facts. For $W\in\mathbb{R}^{k_\ell\times k_{\ell-1}}$ with i.i.d Gaussian entries and $\Sigma\in\mathbb{R}^{k_{\ell-1}\times k_{\ell-1}}$ a deterministic matrix admitting a limiting spectral density $\mu_{\ell-1}$ as $k_{\ell-1}\rightarrow  \infty$, we have, from the fact that $XX^\top$ and $X^\top X$ share the same spectrum up to a zero eigenvalues,
\begin{align}
    \mu_{\frac{1}{k_{\ell-1}}W\Sigma W^\top}=\frac{k_{\ell-1}}{k_{\ell}}\times\left[\frac{k_\ell}{k_{\ell-1}}\otimes\mu_{\frac{1}{k_{\ell}}\Sigma^{\frac{1}{2}}W^\top W \Sigma^{\frac{1}{2}}}\right]+\frac{k_\ell-k_{\ell-1}}{k_{\ell}}\delta
\end{align}
The spectrum of $\frac{1}{k_{\ell}}\Sigma^{\frac{1}{2}}W^\top W \Sigma^{\frac{1}{2}}$ is given by
\begin{equation}
    \mu_{\mathrm{MP}}^{\frac{k_{\ell-1}}{k_\ell}}\boxtimes \mu_{\ell-1}
\end{equation}
where $ \mu_{\mathrm{MP}}^{\frac{k_{\ell-1}}{k_\ell}}$ is the Marcenko-Pastur distribution with aspect ratio $k_{\ell-1}/{k_\ell}$. In terms of Stieltjes transforms:
\begin{align}
    m_{\frac{1}{k_{\ell-1}}W\Sigma W^\top}(z)=\left(\frac{k_{\ell-1}}{k_\ell}\right)^2\times m_{\frac{1}{k_{\ell}}\Sigma^{\frac{1}{2}}W^\top W \Sigma^{\frac{1}{2}}}\left(
    \frac{k_{\ell-1}}{k_\ell} z
    \right)+\left(\frac{k_{\ell-1}}{k_\ell}-1\right)\frac{1}{z}
\end{align}
Using the Marcenko-Pastur map and using the shorthand $\gamma_\ell=\frac{k_{\ell-1}}{k_\ell}$, we reach that the Stieltjes transform for $\frac{1}{k_{\ell-1}}W\Sigma W^\top$ is the solution of
\begin{align}
    m(z)=\int \frac{
    (\gamma_\ell-1)x m(z)-\gamma_\ell}{zxm(z)+\gamma_\ell z}d\mu_{\ell-1}(x)=\frac{\gamma_\ell-1}{z}-\frac{\gamma_\ell^2}{z}\int \frac{1}{xm(z)+\gamma_\ell}d\mu_{\ell-1}(x)
\end{align}

\paragraph{Spectrum $\Omega_\ell^{\mathrm{lin}}$}
 The spectral distribution $\mu_\ell$ of $\Omega_\ell^{\mathrm{lin}}$ is then given by the recursion relation
\begin{align}
\label{eq:App:Pop:recursion_mu}\mu_\ell=\left(\kappa_1^{\ell}\right)^2\otimes\left[\frac{k_{\ell-1}}{k_{\ell}}\times\left[\frac{k_\ell}{k_{\ell-1}}\otimes\mu_{\mathrm{MP}}^{\frac{k_{\ell-1}}{k_\ell}}\boxtimes \mu_{\ell-1}\right]+\frac{k_\ell-k_{\ell-1}}{k_{\ell}}\delta\right]\oplus\left(\kappa_*^\ell\right)^2
\end{align}
with initial condition $\mu_0=\mu_{\Omega_0}$. This translates to

\begin{align}
\label{eq:App:Pop:update_m}
    m_\ell(z)=\frac{\gamma_\ell-1}{z-\left(\kappa_*^\ell\right)^2}- \frac{\gamma_\ell^2\left(\kappa_1^\ell\right)^2}{(z-\left(\kappa_*^\ell\right)^2)m_\ell(z)}m_{\ell-1}\left(-\frac{\gamma_\ell}{\left(\kappa_1^\ell\right)^2m_\ell(z)}\right).
\end{align}

\paragraph{Numerical scheme}
We now discuss a numerical scheme to solve \eqref{eq:App:Pop:update_m}. Note that each $m_{\ell-1}$ is only evaluated at
\begin{equation}
    z_{\ell-1}\equiv -\frac{\gamma_\ell}{\left(\kappa_1^\ell\right)^2m_\ell(z)}
\end{equation}

To solve this numerically we keep two arrays $(m_0,...,m_L)$ and $(z_0,...z_L)$, with $m_\ell\equiv m_\ell(z_\ell)$. For simplicity consider the case where the input covariance is identity, meaning
\begin{equation}
    m_0(z_0)=\frac{1}{1-z_0}
\end{equation}
Then until convergence we iterate
\begin{align}
    \forall 0\le i\le \ell-1,\qquad z_i\leftarrow -\frac{ \gamma_{i+1}}{\left(\kappa_1^i\right)^2m_{i+1}}
\end{align}
and keep
\begin{align}
    z_L=\lambda +i\eta
\end{align}
with $\eta=0^+$ and $\lambda$ the value at which we wish to evaluate the density $\mu_L(\lambda)$. Then we update
\begin{align}
    \forall 1\le i\le L,\qquad m_i\leftarrow \frac{\gamma_i-1-\sqrt{(\gamma_i-1)^2-4\frac{m_{i-1}\gamma_i^2}{\left(\kappa_1^i\right)^2}\left(z_i-\left(\kappa_*^i\right)^2\right)}}{2\left(z_i-\left(\kappa_*^i\right)^2\right)}
\end{align}
where we solved the update \eqref{eq:App:Pop:update_m} directly, which is empirically yielding better convergence than directly iterating \eqref{eq:App:Pop:update_m}. 

\begin{figure}[t!]
    \centering
\includegraphics[scale=0.55]{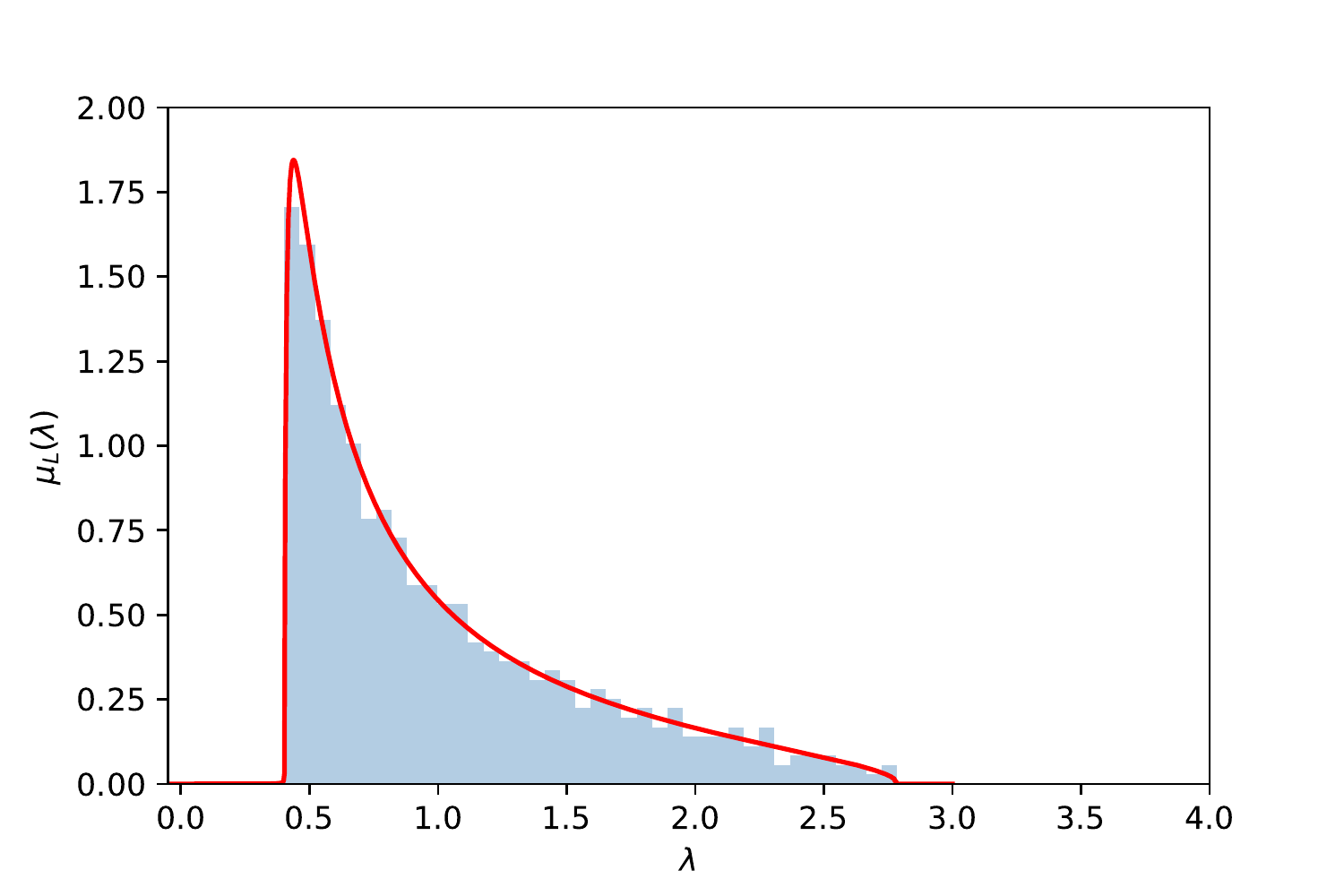}
\includegraphics[scale=0.55]{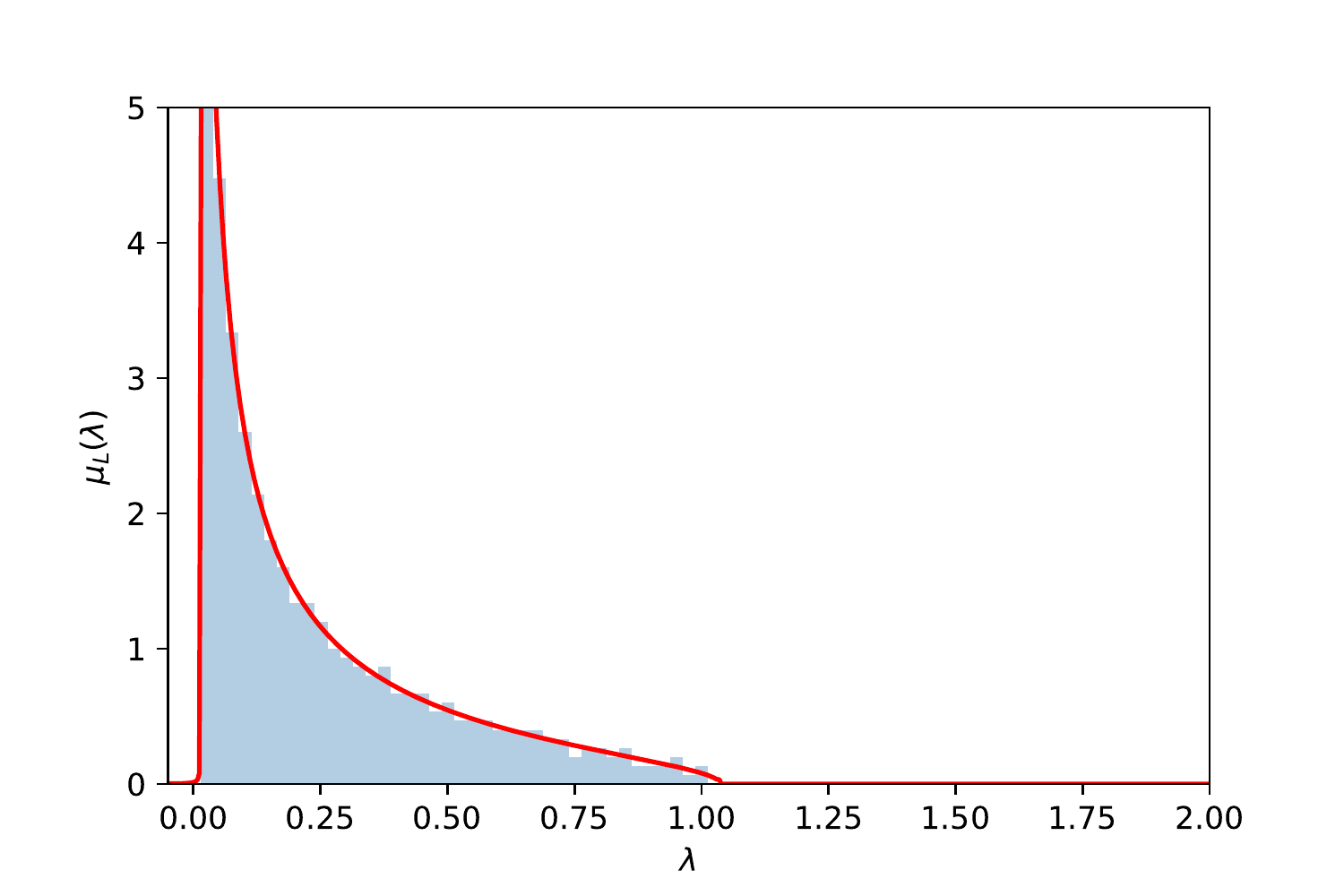}
    \vspace{-0.5cm}
    \caption{Limiting spectral distributions for the post-activation covariance $\Omega_2$ \eqref{eq:Omega_lin_recursion} of a $2-$ hidden layers network \eqref{eq:definition_multilayer_RF}, with architectures $\gamma_1=\sfrac{6}{5},\gamma_2=\sfrac{3}{5}$ and activation $\sigma_1=\sigma2 =\tanh(2\cdot)$ (top), and $\gamma_1=\sfrac{7}{10},\gamma_2=6/5$ and activation  $\sigma_1=\sigma2 =\mathrm{sign}$ (bottom)
    (red) Theoretical asymptotic spectral distribution obtained from solving the recursion \eqref{eq:App:Pop:recursion_mu} (see Appendix \ref{App:population_heuristic} for further details on the numerical scheme) (blue) Empirical distribution, estimated from the sample covariance of $10^5$ samples, in dimension $d=1000$. }
    \label{fig:spectra}
\end{figure}

\begin{figure}[t!]
    \centering
    \includegraphics[scale=0.55]{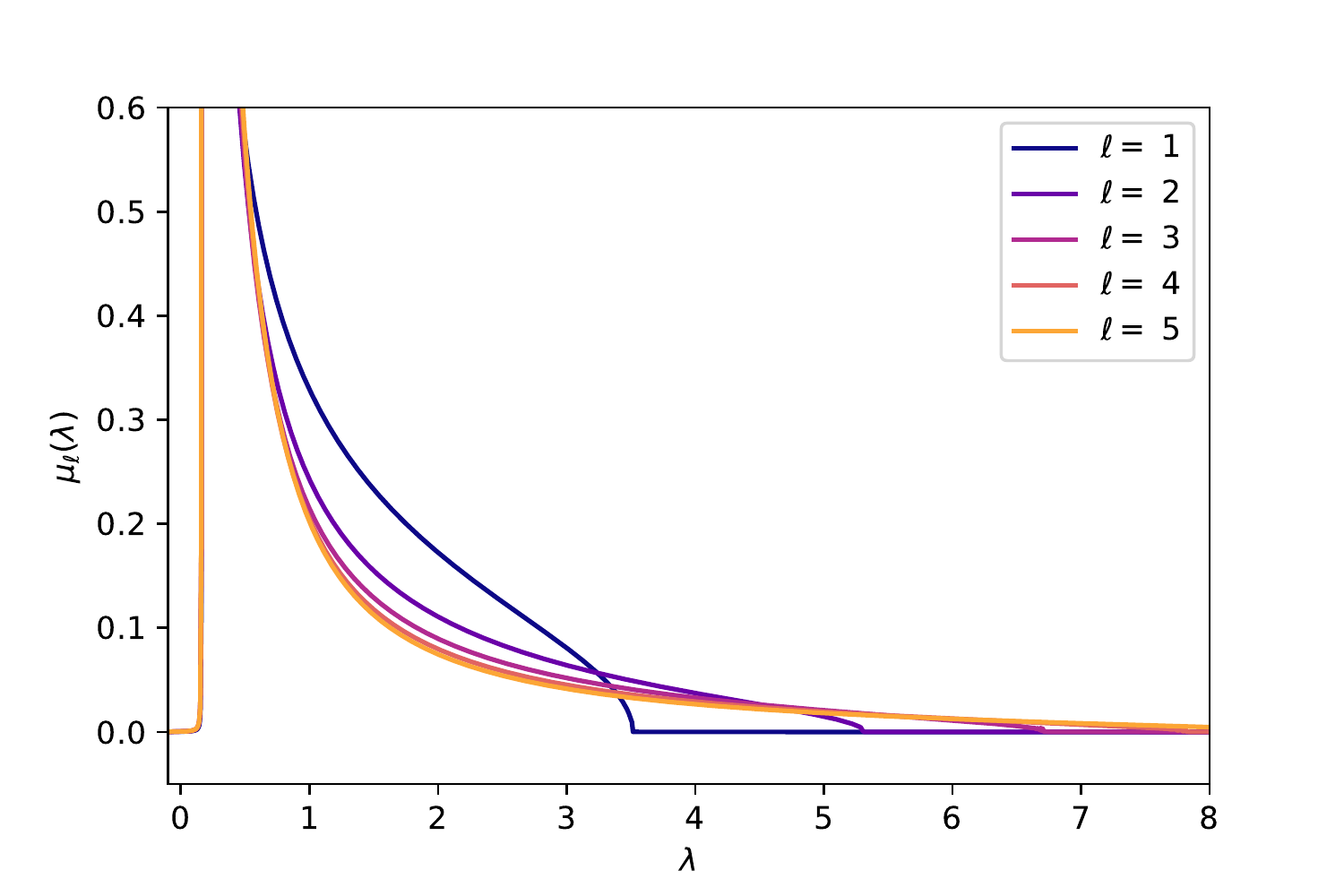}
    \vspace{-0.5cm}
    \caption{Evolution of the asymptotic spectral distribution $\mu_\ell$ of the post-activations $h_\ell(\x)$, for $1\le \ell \le L=5$, for a network with architecture $\gamma_1=...=\gamma_5=1$ and $\sigma_1=...\sigma_5=\tanh$ activation, and isotropic data $\Omega_0=I_d$. Propagation through non-linear layers tends to extend the support of the distribution, and also increase the density of small eigenvalues.}
    \label{fig:layerwise_spectrum}
\end{figure}

Fig.~\ref{fig:spectra} shows the theoretical asymptotic distribution \eqref{eq:App:Pop:recursion_mu} for $3-$layer RFs with sigmoid and sign activations, which is found to display excellent agreement with numerical estimations of the population covariance estimated with $10^5$ independent samples. Fig.~\ref{fig:layerwise_spectrum} shows the asymptotic distribution across $L=5$ layers for a rectangular $\tanh$ network. In alignment to the observations of \cite{Fan2020SpectraOT} for the conjugate kernel in similar models, the support of the distribution increases with depth, alongside an increase in the density of small eigenvalues. Note that the presence of small eigenvalues has been linked in a variety of settings \cite{Cui2021GeneralizationER, Mei2021GeneralizationEO,Misiakiewicz2022SpectrumOI,Hu2022SharpAO} to an effective additional implicit $\ell_2$ regularization when using \eqref{eq:definition_multilayer_RF} to perform regression. This intuition is further discussed in Section \ref{sec:architecture}.

\newpage
\section{Error universality of ridge regression}
\label{App:error:ridge}
In this Appendix we provide a detailed derivation of \eqref{eq:ridge:rmt} and Corollary \ref{prop:universality:ridge}. First, we start by recapping the setting for this Corollary. Here, we are interested in characterizing the asymptotic mean-squared test error:
\begin{align}
\label{eq:app:testmse}
\mathcal{E}_{\rm{gen.}}(\hat{\theta}) = \mathbb{E}\left(y-\frac{\hat{\theta}^{\top}\varphi(\x)}{\sqrt{k}}\right)^2
\end{align}
\noindent where $\varphi:\mathbb{R}^{d}\to\mathbb{R}^{k}$ are the $L$-layers random features defined in \eqref{eq:definition_multilayer_RF} and $\hat{\theta}\in\mathbb{R}^{k}$ is the ridge estimator:
\begin{align}
\label{eq:app:ridge:sol}
\hat{\theta} &= \text{argmin}\left[\sum\limits_{\mu=1}^{n}\left(y^{\mu}-\frac{\hat{\theta}^{\top}\varphi(\x^{\mu})}{\sqrt{k}}\right)^2+\frac{\lambda}{2}||\theta||^2_{2}\right]\notag\\
&=\frac{1}{\sqrt{k}}\left(\lambda I_{k}+\frac{1}{k}X_{L}X_{L}^{\top}\right)^{-1}X_{L}y
\end{align}
\noindent where, following the notation in the main, we have defined the features matrix $X_{L}\in\mathbb{R}^{k\times n}$ by stacking together $\varphi(\x^{\mu})$ column-wise and the label vector $y\in\mathbb{R}^{n}$. In particular, in Corollary \ref{prop:universality:ridge} we focus in the case where the labels are generated, up to additive Gaussian noise, by a $L$-layers random features target with the same architecture. Explicitly, this can be written as:
\begin{align}
\label{eq:app:ridge:teacher}
y^{\mu} = \frac{\theta_{\star}^{\top}\varphi(\x^{\mu})}{\sqrt{k}}+z^{\mu}
\end{align}
where $\theta_{\star}\sim\mathcal{N}(0_{k},I_{k})$ and $z^{\mu}\sim\mathcal{N}(0,\Delta)$ independently. Note that, for the purposes of the discussion here we do not need to assume the inputs $\x^{\mu}\in\mathbb{R}^{d}$ are Gaussian, but only that the data matrix $X_{0}\in\mathbb{R}^{d\times n}$ satisfies the concentration condition \eqref{X0 assump}. In particular, this implies that the results in this Appendix hold for the test error \eqref{eq:app:testmse} conditionally on the training inputs $X_{0}$.

From here, the computation is standard, and closely follows other works deriving closed-form asymptotics for ridge regression under different assumptions, e.g. \cite{Karoui2013, Dobriban2015HighDimensionalAO, Wu2020OnTO, Hastie2019SurprisesIH}. First, note we can rewrite:
\begin{align}
\label{eq:app:mse}
\mathcal{E}_{\rm{gen.}}(\hat{\theta}) &= \mathbb{E}_{\z,\theta_{\star},z,\x}\left(\frac{\theta_{\star}^{\top}\varphi(\x)}{\sqrt{k}}+z - \frac{\hat{\theta}^{\top}\varphi(\x)}{\sqrt{k}}\right)^2 \notag\\
&\overset{(a)}{=} \frac{1}{k}\mathbb{E}_{,\z,\theta_{\star}}(\hat{\theta}-\theta_{\star})^{\top}\mathbb{E}_{\x}\left[\varphi(\x)\varphi(\x)^{\top}\right](\hat{\theta}-\theta_{\star})+\Delta\notag\\
&\overset{(b)}{=} \frac{1}{k}\mathbb{E}_{\z,\theta_{\star}}\left[(\hat{\theta}-\theta_{\star})^{\top}\Omega_{L}(\hat{\theta}-\theta_{\star})\right]
\end{align}
\noindent where in (a) we used the independence of the test sample took the $z$ average explicitly, and in (b) we have used the definition \eqref{eq:pop_cov_def}. Focusing on:
\begin{align}
\label{eq:app:diff}
	\hat{\theta} - \theta_{\star} &= \sfrac{1}{\sqrt{k}}\left(\lambda I_{k} + \sfrac{1}{k}X_{L}X_{L}^{\top}\right)^{-1}X_{L}^{\top}(\sfrac{1}{\sqrt{k}}X_{L}^{\top}\theta_{\star}+\z) - \theta_{\star}\notag\\
	&= \left(\lambda I_{k} + \sfrac{1}{k}X_{L}X_{L}^{\top}\right)^{-1}\left(\sfrac{1}{k}X_{L}X_{L}^{\top}-I_{k}\right) \theta_{\star} + \sfrac{1}{\sqrt{k}}\left(\lambda I_{k} + \sfrac{1}{k}X_{L}X_{L}^{\top}\right)^{-1}X\z\\
	&\overset{(c)}{=}-\lambda \left(\lambda I_{k} + \sfrac{1}{k}X_{L}X_{L}^{\top}\right)^{-1}\theta_{\star} + \sfrac{1}{\sqrt{k}}\left(\lambda I_{k} + \sfrac{1}{k}X_{L}X_{L}^{\top}\right)^{-1}X\z
\end{align}
\noindent where in (c) we have used the following version of the Woodbury identity:
\begin{align}
\lambda\left(\lambda I_{k} + \sfrac{1}{k}X_{L}X_{L}^{\top}\right)^{-1} = I_{k} - \sfrac{1}{k}\left(\lambda I_{k} + \sfrac{1}{k}X_{L}X_{L}^{\top}\right)^{-1}X_{L}X_{L}^{\top}	
\end{align}
Inserting the above in \eqref{eq:app:mse}:
\begin{align}
\label{eq:app:averaging}
\mathcal{E}_{\rm{gen.}}(\hat{\theta}) &= \sfrac{\lambda^2}{k}~\mathbb{E}_{\theta_{\star}}\left[\theta_{\star}^{\top}\left(\lambda I_{k} + \sfrac{1}{k}X_{L}X_{L}^{\top}\right)^{-1}\Omega_{L}\left(\lambda I_{k} + \sfrac{1}{k}X_{L}X_{L}^{\top}\right)^{-1}\theta_{\star}\right]+\notag\\
&\qquad +\sfrac{1}{k^2}~\mathbb{E}_{\z}\left[\z^{\top}X_{L}^{\top}\left(\lambda I_{k} + \sfrac{1}{k}X_{L}X_{L}^{\top}\right)^{-1}\Omega_{L}\left(\lambda I_{k} + \sfrac{1}{k}X_{L}X_{L}^{\top}\right)^{-1}X_{L}\z\right] + \Delta\notag\\
&\overset{(d)}{=} \lambda^2~\left\langle\left(\lambda I_{k} + \sfrac{1}{k}X_{L}X_{L}^{\top}\right)^{-1}\Omega_{L}\left(\lambda I_{k} + \sfrac{1}{k}X_{L}X_{L}^{\top}\right)^{-1}\right\rangle+\notag\\
&\qquad +\Delta\left\langle\sfrac{1}{k}X_{L}X_{L}^{\top}\left(\lambda I_{k} + \sfrac{1}{k}X_{L}X_{L}^{\top}\right)^{-1}\Omega_{L}\left(\lambda I_{k} + \sfrac{1}{k}X_{L}X_{L}^{\top}\right)^{-1}\right\rangle + \Delta
\end{align}
\noindent where in (d) we took the expectations over the noise and target weights and used the definition $\langle \cdot \rangle \equiv \sfrac{1}{k}\tr(\cdot)$ with the cyclicity of the trace. We can put the expression above in a shape in which Theorem \ref{prop:mult_layers} apply by adding and subtracting $\lambda I_{k}$ to $\sfrac{1}{k}X_{L}X_{L}^{\top}$ the second trace term. This leads to the expression \eqref{eq:ridge:rmt} quoted in the main text:
\begin{align}
\mathcal{E}_{\rm{gen.}}(\hat{\theta}) &= \Delta\left(\left\langle\Omega_{L}\left(\lambda I_{k}+\sfrac{1}{k}X_{L}X_{L}\right)^{-1}\right\rangle+1\right)+\lambda(\lambda-\Delta)\left\langle\Omega_{L}\left(\lambda I_{k}+\sfrac{1}{k}X_{L}X_{L}\right)^{-2}\right\rangle\notag\\
&=\Delta\left(\left\langle\Omega_{L}\left(\lambda I_{k}+\sfrac{1}{k}X_{L}X_{L}\right)^{-1}\right\rangle+1\right)-\lambda(\lambda-\Delta)\partial_{\lambda}\left\langle\Omega_{L}\left(\lambda I_{k}+\sfrac{1}{k}X_{L}X_{L}\right)^{-1}\right\rangle
\end{align}
Note that the last expression requires applying~\Cref{prop:mult_layers} to the derivative of the resolvent. In general, this can be justified by writing a squared resolvent \(G(z)^2=(H-z)^{-2}\) of some non-negative matrix $H\ge0$ in terms of a Cauchy-integral 
\begin{equation}
    G(z)^2=G'(z)=\frac{1}{2\pi\mathrm{i}} \oint_\gamma \frac{1}{(w-z)^2} G(w)\dif w,
\end{equation}
where \(\gamma\) is any contour around \(z\) not crossing \(\R_+\). In this way some local law of the type \(\abs{\langle A(G-M)\rangle}\prec \epsilon \dist(z,\R_+)^{-k}\) can be transferred to the derivative as 
\begin{equation}\label{derivative trick}
    \langle A(G'(z)-M'(z)\rangle = \frac{1}{2\pi\mathrm{i}} \oint_\gamma \frac{1}{(w-z)^2}\langle A(G(w)-M(w))\rangle \dif w = O_\prec\Bigl(\frac{\epsilon}{\dist(z,\R_+)^{k+1}}\Bigr),
\end{equation}
by choosing \(\gamma\) to be a small circle of radius \(\dist(z,\R_+)/2\) around \(z\), using that the deterministic equivalent is also holomorphic away from \(\R_+\). 

Therefore, in the high-dimensional limit where $n,k_{\ell}, d\to\infty$ at fixed ratios $\alpha =\sfrac{n}{d}$ and $\gamma_{\ell}=\sfrac{k_{\ell}}{d}$, under the assumptions of Theorem \ref{prop:mult_layers} for the input data $X_{0}\in\mathbb{R}^{d\times n}$ \eqref{X0 assump} and the architecture of the deep random features and for $\lambda>0$\footnote{Technically, we don't need to assume the regularization is bounded away from here. It suffices to take it decaying slower than $n^{-\sfrac{1}{18}}$ for Thm. \ref{prop:mult_layers} to apply.} we can apply Theorem \ref{prop:mult_layers} to write the asymptotic limit of the test error:
\begin{align}
\label{eq:app:asymptotic:error}
\lim\limits_{k\to\infty}\mathcal{E}_{\rm{gen}.}(\hat{\theta}) = \mathcal{E}_{\rm{gen.}}^{\star}(\lambda,\Delta, \alpha,\gamma_{\ell}, \kappa_{1}^{\ell},\kappa_{\star}^{\ell}) \equiv \Delta\left(\langle\Omega_{L}\rangle\wc m_{L}(-\lambda)+1\right)
-\lambda(\lambda-\Delta)\langle\Omega_{L}\rangle\partial_{\lambda}\wc m_{L}(-\lambda)
\end{align}
\noindent where $\wc m_{L}(z)$ can be computed recursively from \eqref{eq:companion_stieltjes} for a given regularization strength $\lambda>0$, noise level $\Delta>0$, sample complexity $\alpha>0$ and features architecture $(\gamma_{\ell},\sigma_{\ell})_{\ell\in[L]}$.
On the other hand, it follows from the recursion \eqref{eq:Omega_lin_recursion} that the trace of the last-layer covariance $\langle\Omega_{L}\rangle$ admits the compact expression
\begin{equation}
\label{eq:app:traceomega}
    \langle\Omega_{L}\rangle=\sum\limits_{\ell=1}^{L-1}\left(\kappa_*^\ell\right)^2\prod\limits_{\ell^\prime=\ell+1}^L \left(\kappa_1^{\ell^\prime}\right)^2\Delta_{\ell^\prime}
    +\left(\kappa_*^L\right)^2+\frac{1}{d}\langle\Omega_{0}\rangle\prod\limits_{\ell=1}^L \left(\kappa_1^{\ell}\right)^2\Delta_\ell
\end{equation}
in terms only of the coefficients \eqref{eq:kappa_multilayer}.

Note that \eqref{eq:app:asymptotic:error} agrees exactly with the formula for the asymptotic test error of ridge regression on a equivalent Gaussian dataset $\mathcal{D} = \{(v^{\mu}, y^{\mu})\}_{\mu\in[n]}$:
\begin{align}
y^{\mu} = \frac{\theta_{\star}^{\top}v^{\mu}}{\sqrt{k}}+z^{\mu}, && v^{\mu}\sim\mathcal{N}(0_{k}, \Omega_{L}).
\end{align}
\noindent which, to our best knowledge, was first derived in \cite{Dobriban2015HighDimensionalAO}. This establishes the Gaussian universality of the asymptotic test error for this model.
\subsection{Possible extensions}
\label{sec:app:extensions}
We now discuss some possible extensions of the universality result above. They require, however, a more involved analysis, which we leave for future work. Our goal here is simply to highlight other possible applications of our deterministic equivalent in Thm. \ref{prop:mult_layers}. 

\paragraph{Deterministic last-layer weights: } The first extension is to generalize the result above to deterministic last layer weights $\theta_{\star}$. 
Indeed, \cite{Wei2022} shows that for ridge regression on a deterministic target $y^{\mu} = \sfrac{1}{\sqrt{k}}~\theta_{\star}^{\top}\varphi(\x^{\mu})$\footnote{For simplicity, we discuss the noiseless $\Delta=0$ case here. See Appendix B of \cite{Wei2022} for a discussion of noisy targets}, the test error can be asymptotically estimated from the \emph{generalized cross-validation} (GCV) estimator, defined as:
\begin{align}
\text{GCV}_{\lambda} = \lambda\left\langle (\lambda I_{k}+\hat{\Omega}_{L})^{-1}\right\rangle \mathcal{E}_{\rm{train.}}(\hat{\theta})
\end{align}
\noindent where $\hat{\Omega}_{L}=\sfrac{1}{n}X_{L}X_{L}^{\top}$ is the \emph{sample covariance matrix} of the features and $\mathcal{E}_{\rm{train.}}(\hat{\theta})$ is the training error associated to the ridge estimator:
\begin{align}
\mathcal{E}_{\rm{train.}}(\hat{\theta}) = \frac{1}{n}\sum\limits_{\mu=1}^{n}\left(y^{\mu}-\frac{\hat{\theta}^{\top}\varphi(\x^{\mu})}{\sqrt{k}}\right)^2
\end{align}
In particular, it is shown that:
\begin{theorem}[Thm. 8 of \cite{Wei2022}] 
\label{thm:app:gcv}
Assume that 
\begin{equation}\label{wei assump}
    \abs{\langle \Bigl(\frac{X_L^\top X_L}{n} - z\Bigr)^{-1}\rangle - \wc m(z)}+  \abs{v^\top \Bigl[\Bigl(\frac{X_LX_L^\top}{zn}-I\Bigr)^{-1}+\Bigl(\Omega_L\wc m(z)+1\Bigr)^{-1}\Bigr]v} \prec \frac{\sqrt{\Im \wc m(z)}}{\sqrt{n\Im z}},
\end{equation}
for all deterministic vectors \(v\) with \(v^\top\Omega_L v\le 1\), where 
\begin{equation}
    \wc m(z) = \frac{k_L-n}{nz} + \frac{k_L}{n} m_{\mu(\Omega_L)\boxtimes \mu_\mathrm{MP}^{k_L/n}}(z).
\end{equation}
Then for all $\lambda>0$ it holds that
\begin{align}
\left|\text{GCV}_{\lambda} - \mathcal{E}_{\rm{gen.}}(\hat{\theta})\right| \lesssim n^{-\sfrac{1}{2}+o(1)} \theta_{\star}^{\top}\Omega_{L}\theta_{\star} \left[\frac{||\Omega_{L}||_{\rm{op.}}}{\lambda}+\left(\frac{\tr\Omega_{L}}{\lambda n}\right)^{\sfrac{3}{2}}\right]
\end{align}
\end{theorem}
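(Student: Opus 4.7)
The plan is to show that $\mathrm{GCV}_\lambda$ and $\mathcal{E}_{\mathrm{gen.}}(\hat\theta)$ both converge, under the deterministic equivalent assumption~\eqref{wei assump}, to the same scalar-valued limit, and to track the rate at which they do so. The noiseless deterministic target $y = \sfrac{1}{\sqrt{k}}X_L^\top\theta_\star$ makes the closed-form algebra clean: using Woodbury-type identities (as in the derivation of~\eqref{eq:app:averaging}) and the push-through relation $X_L(\lambda I_n + X_L^\top X_L/k)^{-1} = (\lambda I_k + X_LX_L^\top/k)^{-1}X_L$, one rewrites the two errors as quadratic forms in $\theta_\star$ weighted by the ridge resolvent $F(\lambda) := (\lambda I_k + X_LX_L^\top/k)^{-1}$:
\begin{align*}
\mathcal{E}_{\mathrm{gen.}}(\hat\theta) &= \frac{\lambda^2}{k}\,\theta_\star^\top F(\lambda)\,\Omega_L\, F(\lambda)\,\theta_\star,\\
\mathcal{E}_{\mathrm{train.}}(\hat\theta) &= \frac{\lambda^2}{n}\,\theta_\star^\top\bigl[F(\lambda) - \lambda F(\lambda)^2\bigr]\theta_\star,
\end{align*}
while the prefactor of $\mathrm{GCV}_\lambda$ is the normalised trace $\lambda\langle(\lambda I_k + \hat\Omega_L)^{-1}\rangle$.

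Next I would substitute deterministic equivalents into these expressions. Setting $v := \theta_\star/\sqrt{\theta_\star^\top\Omega_L\theta_\star}$ so that $v^\top\Omega_L v\le 1$, the anisotropic part of~\eqref{wei assump} gives, after converting between the $1/n$-normalised resolvent $\wh G_n$ appearing in that assumption and the $1/k$-normalised $F$ appearing in the estimator,
\[
\theta_\star^\top F(\lambda)\,\theta_\star \approx \frac{1}{\lambda}\,\theta_\star^\top(I + \wc m^\ast \Omega_L)^{-1}\theta_\star,
\]
with error of size $\theta_\star^\top\Omega_L\theta_\star\cdot n^{-1/2 + o(1)}$, while the tracial part gives $\lambda\langle(\lambda I_k + \hat\Omega_L)^{-1}\rangle\approx\lambda\wc m(-\lambda)$. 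These single-resolvent substitutions immediately yield a candidate deterministic equivalent of $\mathrm{GCV}_\lambda$. The remaining piece is the two-resolvent quadratic form $\theta_\star^\top F(\lambda)\Omega_L F(\lambda)\theta_\star$ appearing in $\mathcal{E}_{\mathrm{gen.}}(\hat\theta)$, which is not directly controlled by~\eqref{wei assump}.

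To control it, I would introduce the perturbed resolvent $F_\mu(\lambda) := (K + \mu\Omega_L + \lambda I)^{-1}$ with $K := X_LX_L^\top/k$, use the identity $F(\lambda)\Omega_L F(\lambda) = -\partial_\mu F_\mu(\lambda)|_{\mu=0}$, and observe that for $\mu$ in a small neighbourhood of $0$ the matrix $K + \mu\Omega_L$ still falls within the setting of~\eqref{wei assump} with a shifted limiting Stieltjes transform $\wc m_\mu$ depending smoothly on $\mu$ via the Marchenko--Pastur fixed-point equation~\eqref{eq:m_check_chouard}. A Cauchy integral in $\mu$ on a contour of radius $\sim 1$, exactly analogous to~\eqref{derivative trick} where the same trick was used to transfer the local law to the $z$-derivative, then bounds the derivative in terms of the single-resolvent law, yielding $\theta_\star^\top F\Omega_L F\theta_\star\approx\theta_\star^\top M\Omega_L M\theta_\star$ where $M$ denotes the deterministic equivalent of $F$. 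Algebraic manipulation using the fixed-point equation for $\wc m$ (together with the identity relating $\partial_\lambda[\lambda\wc m(-\lambda)]$ to $\langle\Omega_L(I+\wc m \Omega_L)^{-2}\rangle$) then shows that the candidate deterministic equivalents of $\mathrm{GCV}_\lambda$ and $\mathcal{E}_{\mathrm{gen.}}(\hat\theta)$ coincide exactly.

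The hard part is precisely this two-resolvent step: since~\eqref{wei assump} controls only single-resolvent anisotropic forms, one must either extend the anisotropic concentration uniformly in the perturbation parameter $\mu$ (which relies on a stability analysis of the fixed-point equation under deterministic matrix perturbations of the covariance) or re-derive a functional anisotropic law for products of resolvents along the lines of~\Cref{thm:res_concentration}. The target rate $n^{-1/2 + o(1)}\theta_\star^\top\Omega_L\theta_\star\cdot[\|\Omega_L\|_{\mathrm{op.}}/\lambda + (\tr\Omega_L/(\lambda n))^{3/2}]$ then emerges by tracking the $\lambda^{-1}$-dependence of $\wc m(-\lambda)$ and its $z$- and $\mu$-derivatives, together with operator- and trace-norm bounds on $\Omega_L$ that arise when applying the anisotropic estimate to the normalised vector $v = \theta_\star/\sqrt{\theta_\star^\top\Omega_L\theta_\star}$.
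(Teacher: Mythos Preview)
The paper does not prove this theorem: it is quoted from~\cite{Wei2022} (as Theorem~8 there) and invoked only as a black box in the discussion of possible extensions in Section~\ref{sec:app:extensions}. There is therefore no proof in the present paper to compare your proposal against.

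As for the proposal itself, the outline is reasonable but the central step is a genuine gap rather than a proof. Your device for the two-resolvent form $\theta_\star^\top F\Omega_L F\theta_\star$ --- perturbing $K\mapsto K+\mu\Omega_L$ and differentiating in $\mu$ --- requires the anisotropic law~\eqref{wei assump} to hold for the resolvent of $K+\mu\Omega_L$. But~\eqref{wei assump} is stated only for the resolvent of the sample covariance $X_LX_L^\top/n$; for $\mu\ne 0$ the perturbed matrix is not of that form, so the assumption does not cover it. The Cauchy-integral trick~\eqref{derivative trick} works because one integrates over nearby spectral parameters $w$, all of which are covered by the \emph{same} local law; your perturbation is instead in the matrix direction, which is not automatic. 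You in fact concede this yourself (``one must either extend the anisotropic concentration uniformly in $\mu$\ldots or re-derive a functional anisotropic law''), so what you have written is a plan that correctly locates the difficulty --- upgrading a single-resolvent anisotropic law to a two-resolvent one --- without resolving it.
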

Applying~\Cref{thm:res_concentration} for fixed weights \(W_1,\ldots,W_L\) shows\footnote{Technically this requires some argument that with high probability the deep RF model with quenched weights satisfies Lipschitz concentration with respect to \(X_0\)} that assumption~\eqref{wei assump} is satisfied in the proportional regime \(k_L\sim n\), up to a worse \(z\)-dependence of the error \(\dist(z,\R_+)^{-9}\) rather than \((\Im \wc m(z)/\Im z)^{1/2}\), and only for bounded vectors \(\norm{v}\lesssim1\). 

Instead, our result~\cref{prop:mult_layers} proves a preliminary version of~\eqref{wei assump} with an explicit deterministic equivalent only depending on the input population covariance \(\Omega_0\) rather than the output population covariance \(\Omega_L\), at the price of having an error which is larger by a factor of \(\sqrt n\). It is an interesting question whether our error rates can be improved to imply~\cref{wei assump} which is left for future work.  

\paragraph{General case:} As discussed in the introduction, in the general case we are interested in a target:
 \begin{align}
 \label{eq:app:target}
     f_{\star}(\x^{\mu})=\frac{1}{\sqrt{k_{\star}}}\theta_{\star}^{\top}\varphi_{\star}(\x^{\mu}), && \theta_{\star}\sim\mathcal{N}(0_{k_{\star}},I_{k_{\star}}),
 \end{align}
where the $L_{\star}$ multi-layer random features $\varphi_{\star}:\mathbb{R}^{d}\to\mathbb{R}^{k_{\star}}$ are not necessarily the same as the $L$ multi-layer random features $\varphi:\mathbb{R}^{d}\to\mathbb{R}^{k}$. As discussed in the introduction, this contains as a special case the \emph{hidden-manifold model} (HMM), introduced in \cite{Goldt2020ModellingTI} as a model for structured high-dimensional data where the labels depend only on the coordinates of a lower-dimensional "latent space". While in Section \ref{sec:Error_uni} we provide an exact but heuristic formula to compute the error in this case (valid for arbitrary convex losses), the challenge in proving it with random matrix theory methods in the case of ridge regression comes from the fact that this is a mismatched model. Indeed, naively writing the expression for the test error in this case:
\begin{align}
\label{eq:app:mse:mismatched}
\mathcal{E}_{\rm{gen.}}(\hat{\theta}) &= \mathbb{E}_{\theta_{\star},\x}\left(\frac{\theta_{\star}^{\top}\varphi_{\star}(\x)}{\sqrt{k_{\star}}}+z - \frac{\hat{\theta}^{\top}\varphi(\x)}{\sqrt{k}}\right)^2 \notag\\
&= \langle\Psi_{L_{\star}}\rangle+\frac{2}{\sqrt{k_{\star}k}}\mathbb{E}_{\theta_{\star}}\left[\theta_{\star}^{\top}\Phi_{L_{\star}L}\hat{\theta}\right] + \frac{1}{k}\mathbb{E}_{\theta_{\star}}\left[\hat{\theta}^{\top}\Omega_{L}\hat{\theta}\right]
\end{align}
\noindent where we recall the reader of the definitions:
\begin{align}
\Psi_{\ell} = \mathbb{E}\left[h_{\ell}(\x)h_{\ell}(\x)^{\top}\right], && \Phi_{\ell\ell'} = \mathbb{E}\left[h_{\ell}(\x)h_{\ell'}(\x)^{\top}\right].
\end{align}
Indeed, applying Thm. \ref{prop:mult_layers} to the expression above is not as straightforward as above. To see this, focus on the second term:
\begin{align}
\mathbb{E}_{\theta_{\star}}\left[\theta_{\star}^{\top}\Phi_{L_{\star}L}\hat{\theta}\right] = \tr\left[\Phi_{L_{\star}L}(\lambda I_{k}+\sfrac{1}{k}X_{L}X_{L}^{\top})^{-1}X_{L}^{\top}X_{L_{\star}}\right]
\end{align}
\noindent where we defined the target feature matrix $X_{L_{\star}}\in\mathbb{R}^{k_{\star}\times n}$ with columns given by $\varphi(\x^{\mu})\in\mathbb{R}^{k_{\star}}$. This would, naively, require a more refined deterministic equivalent than Thm. \ref{prop:mult_layers} provides. Possible alternative approaches would be to rewrite the misspecification as an effective additive noise (e.g. as in Appendix B of \cite{Clarte2022ASO}) and derive a local-law akin to Assumption \ref{wei assump} with a control over the noise (see Appendix B of \cite{Wei2022} for a discussion) or to use the linear pencil method as in \cite{Mei2019TheGE}. This provides an interesting avenue for future work. 
\newpage
\section{Exact asymptotics for the general case}
\label{App:error:general}
In this appendix, we detail the sharp asymptotic characterization for the test error of the dRF \eqref{eq:definition_multilayer_RF} on a deep random network target \eqref{eq:target}, for regression (Fig.\,\ref{fig:regression}) and classification (Fig.\,\ref{fig:logistic}).

The backbone of the derivation is the theorem of \cite{Loureiro2021CapturingTL}, which fully characterizes the test error of the GCM \eqref{eq:g3m} in terms of the covariance matrices $\Psi_{L_\star},\Omega_L$ and $\Phi_{L_\star L}$. In the original work of \cite{Loureiro2021CapturingTL}, these matrices for the dRF model had to be estimated numerically through a Monte-Carlo algorithm. In the present work however, the closed-form expressions afforded by \eqref{eq:Omega_lin_recursion}, \eqref{eq:Psi_lin_recursion} and \eqref{eq:Phi_lin_DRM}, which we remind in the next subsection, now afford a way to access fully analytical formulas. We successively detail these characterizations for ridge regression and logistic regression readouts.

\subsection{Reminder of second-order statistics of network activations}
Before providing detailed asymptotic characterizations for the test error of ridge and logistic regression, we first provide a reminder for the expressions of the linearized matrices $\Psi_{L_\star}^{\mathrm{lin}},\Omega_L^{\mathrm{lin}}$ and $\Phi_{L_\star L}^{\mathrm{lin}}$ (\ref{eq:Omega_lin_recursion},\ref{eq:Psi_lin_recursion},\ref{eq:Phi}). Using conjecture \ref{conj:error_uni_lin}, these matrices can then be used in the formulas of \cite{Loureiro2021CapturingTL} to access fully analytical formulas for the test errors, in terms only of the target network weights \eqref{eq:target} and the coefficients \eqref{eq:kappa_multilayer}.
The following expressions follow from expliciting the solution of the recursions (\ref{eq:Omega_lin_recursion},\ref{eq:Psi_lin_recursion},\ref{eq:Phi})
\begin{align}
    &\Omega_L^{\mathrm{lin}}=\left(\prod\limits_{\ell^\prime=1}^L \frac{\kappa_1^{\ell^\prime}W_{\ell^\prime}^\top}{\sqrt{k_{\ell^\prime-1}}}\right)^\top \Omega_0\left(\prod\limits_{\ell^\prime=1}^L \frac{\kappa_1^{\ell^\prime}W_{\ell^\prime}^\top}{\sqrt{k_{\ell^\prime-1}}}\right)+\sum\limits_{\ell=1}^{L-1}\left(\kappa_*^\ell\right)^2\left(\prod\limits_{\ell^\prime=\ell+1}^L \frac{\kappa_1^{\ell^\prime}W_{\ell^\prime}^\top}{\sqrt{k_{\ell^\prime-1}}}\right)^\top\Omega_0\left(\prod\limits_{\ell^\prime=\ell+1}^L \frac{\kappa_1^{\ell^\prime}W_{\ell^\prime}^\top}{\sqrt{k_{\ell^\prime-1}}}\right) + \left(\kappa_*^L\right)^2 I_{k_L}\\
    &\Psi_{L_\star}^{\mathrm{lin}}=\left(\prod\limits_{\ell^\prime=1}^{L_\star}\frac{\kappa_1^{\ell^\prime\star}W_{\ell^\prime}^\top}{\sqrt{k_{\ell^\prime-1}}}\right)^\top\Omega_0\left(\prod\limits_{\ell^\prime=1}^{L_\star} \frac{\kappa_1^{\ell^\prime\star}W_{\ell^\prime}^\top}{\sqrt{k_{\ell^\prime-1}}}\right)+\sum\limits_{\ell=1}^{L_\star-1}\left(\kappa_*^{\ell\star}\right)^2\left(\prod\limits_{\ell^\prime=\ell+1}^{L_\star} \frac{\kappa_1^{\ell^\prime\star}W_{\ell^\prime}^\top}{\sqrt{k_{\ell^\prime-1}}}\right)^\top\Omega_0\left(\prod\limits_{\ell^\prime=\ell+1}^{L_\star} \frac{\kappa_1^{\ell^\prime\star}W_{\ell^\prime}^\top}{\sqrt{k_{\ell^\prime-1}}}\right) + \left(\kappa_*^{L_\star\star}\right)^2 I_{k_{L_\star}}\\
    &\Phi_{LL_\star}^{\mathrm{lin}}=\left(\prod\limits_{\ell=L^\star}^1 \frac{\kappa_1^{\ell\star}W_{\ell}^\star}{\sqrt{k_\ell^\star}}\right)\cdot \Omega_0 \cdot \left( \prod\limits_{\ell=1}^L\frac{\kappa_1^\ell W_\ell^\top}{\sqrt{k_\ell}}\right)
\end{align}
In the special case where the teacher has depth $L_\star=0$ (i.e. possesses an architecture with no hidden layer), the above expression reduce to
\begin{align}
    &\Psi_{0}^{\mathrm{lin}}=\Omega_0\\
    &\Phi_{LL_\star}^{\mathrm{lin}}=\Omega_0 \cdot \left( \prod\limits_{\ell=1}^L\frac{\kappa_1^\ell W_\ell^\top}{\sqrt{k_\ell}}\right).
\end{align}
The $L_\star=0, L=1$ case has been studied in the literature \cite{Goldt2020ModellingTI, Gerace2020GeneralisationEI} as the \textit{Hidden Manifold Model}. The present work encompasses the analysis of its generalization to deep learners with $L>1$ hidden layers.

\subsection{Ridge regression}
We consider the supervised learning problem of training the readout weights $\theta$ of the dRF \eqref{eq:definition_multilayer_RF} on a dataset $\mathcal{D}=\{x^\mu, y^\mu\}_{\mu=1}^n$, with $x^\mu\sim\mathcal{N}(0_d,\Omega_0)$ independently. The labels are given by a deep random network
\begin{equation}
y^\mu=\frac{\theta_\star^\top\varphi_\star(x^\mu)}{\sqrt{k_{L_\star}}}+z^\mu,
\end{equation}
where $z^\mu\sim \mathcal{N}(0,\Delta)$ is a Gaussian additive noise and the teacher feature map is 
\begin{align}
    \varphi_\star(x)=\varphi^\star_{L_\star}\circ...\circ\varphi^\star_1(x).
\end{align}
Note that compared to \eqref{eq:target}, we have adopted the notation $\theta_\star:=W^\star_{L_\star+1}\in\mathbb{R}^{k_{L_\star}}$ for the sake of clarity. Defining
\begin{align}
    \rho:=\frac{\theta_\star^\top \Psi_{L_\star}\theta_\star}{k_{L_\star}},
\end{align}
We consider the problem training the last layer $\theta$ of the learner dRF \eqref{eq:definition_multilayer_RF} with ridge regression, by minimizing the risk
\begin{equation}
    \hat{\theta}=\underset{\theta}{\mathrm{argmin}}\left\{\sum\limits_{\mu=1}^n\left(y^\mu-\frac{\theta^\top \varphi(x^\mu)}{\sqrt{k_L}}\right)^2+\frac{\lambda}{2}||\theta||^2\right\}.
\end{equation}
Building on the theorem of \cite{Loureiro2021CapturingTL} and conjecture \ref{conj:error_uni_lin}, the mean squared error achieved by this ERM algorithm is given by
\begin{align}
    \epsilon_g:=
    \mathbb{E}_{\mathcal{D}}\mathbb{E}_{x\sim\mathcal{N}(0_d,\Omega_0)}\left(
    f_\star(x)-\frac{\hat{\theta}^\top \varphi(x)}{\sqrt{k_L}}
    \right)^2
    =\rho+q-2m,
\end{align}
with $q,m$ the solutions of the system of equations
\begin{align}
    \begin{cases}
        \hat{V} = \frac{1}{\gamma_L}\frac{\alpha}{1+V}\\
        \hat{q} = \frac{1}{\gamma_L}\alpha\frac{\rho+q-2m}{(1+V)^2}\\
        \hat{m} = \frac{1}{\sqrt{\gamma_L\gamma^\star_{L_\star}}}\frac{\alpha}{1+V}
    \end{cases}, && 
    \begin{cases}
		V =  \frac{1}{k_L}\tr\left(\lambda I_{d}+\hat{V} \Omega_L^{\mathrm{lin}}\right)^{-1} \Omega_L^{\mathrm{lin}}\\
		q = \frac{1}{k_L}\tr\left[\left(\hat{q} \Omega_L^{\mathrm{lin}}+\hat{m}^{2}\Phi_{LL_\star}^{\mathrm{lin}\top}\theta_\star\theta_\star^{\top}\Phi_{LL_\star}^{\mathrm{lin}}\right) \Omega_L^{\mathrm{lin}}\left(\lambda I_{d}+\hat{V} \Omega_L^{\mathrm{lin}}\right)^{-2}\right]\\
		m=\sqrt{\frac{\gamma_L}{\gamma^\star_{L_\star}}}\frac{\hat{m}}{k_L}\tr \Phi_{LL_\star}^{\mathrm{lin}\top}\theta_\star\theta_\star^{\top}\Phi_{LL_\star}^{\mathrm{lin}}\left(\lambda I_{d}+\hat{V} \Omega_L^{\mathrm{lin}}\right)^{-1}
	\end{cases}.
\end{align}

\subsection{Logistic regression} We now turn to the classification setting, when the labels are given by a deep random network with sign readout
\begin{equation}
y^\mu=\mathrm{sign}\left(\frac{\theta_\star^\top\varphi_\star(x^\mu)}{\sqrt{k_{L_\star}}}\right).
\end{equation}
Note that this corresponds to $\sigma_{L_\star+1}=\mathrm{sign}$.
and the dRF readout weights $\theta$ are trained with logistic regression, using the ERM
\begin{equation}
    \hat{\theta}=\underset{\theta}{\mathrm{argmin}}\left\{\sum\limits_{\mu=1}^n\ln \left(1+e^{-y^\mu\frac{\theta^\top \varphi(x^\mu)}{\sqrt{k_L}}}\right)+\frac{\lambda}{2}||\theta||^2\right\}.
\end{equation}

By the same token, introducing following \cite{Loureiro2021CapturingTL} the auxiliary functions
$$ Z(y,\omega,V):=\frac{1}{2}\left(
    1+\mathrm{erf}\left(\frac{y\omega}{\sqrt{2V}}\right)
    \right)
$$
and $f(y,\omega,V)$ defined as the solution of
$$
f(y,\omega,V)=\frac{y}{1+e^{y(Vf(y,\omega,V)+\omega)}}.
$$
It follows from \cite{Loureiro2021CapturingTL} and Conjecture \ref{conj:error_uni_lin} that the associated test error reads
\begin{align}
    \epsilon_g:=\mathbb{E}_{\mathcal{D}}\mathbb{P}_{x\sim\mathcal{N}(0_d,\Omega_0)}\left(
    f_\star(x)\ne \mathrm{sign}\left(
    \frac{\hat{\theta}^\top \varphi(x)}{\sqrt{k_L}}
    \right)
    \right)
    =\frac{1}{\pi}\arccos{\frac{m}{\sqrt{\rho q}}},
\end{align}
where $m,q$ are the solutions of the system of equations
\begin{align}
	&\begin{cases}
		\hat{V} = -\frac{\alpha}{\gamma_L}\int \frac{d\xi e^{-\frac{\xi^2}{2}}}{\sqrt{2\pi}}\left[\sum\limits_{y=\pm 1}~Z\left(y,\frac{m}{\sqrt{q}}\xi, \rho-\frac{m^2}{q}\right) ~\partial_{\omega}f(y,\sqrt{q}\xi,V)\right]\\
		\hat{q} = \frac{\alpha}{\gamma_L}\int \frac{d\xi e^{-\frac{\xi^2}{2}}}{\sqrt{2\pi}}\left[\sum\limits_{y=\pm 1}~Z\left(y,\frac{m}{\sqrt{q}}\xi, \rho-\frac{m^2}{q}\right) f(y,\sqrt{q}\xi,V)^2\right]\\
		\hat{m} = \frac{\alpha}{\sqrt{\gamma_L\gamma^\star_{L_\star}}} \int \frac{d\xi e^{-\frac{\xi^2}{2}}}{\sqrt{2\pi}}\left[\sum\limits_{y=\pm 1}~\partial_{\omega}Z\left(y,\frac{m}{\sqrt{q}}\xi, \rho-\frac{m^2}{q}\right)~f(y,\sqrt{q}\xi,V)  \right]
	\end{cases} \notag \\
	&\begin{cases}
		V =  \frac{1}{k_L}\tr\left(\lambda I_{d}+\hat{V} \Omega_L^{\mathrm{lin}}\right)^{-1} \Omega_L^{\mathrm{lin}}\\
		q = \frac{1}{k_L}\tr\left[\left(\hat{q} \Omega_L^{\mathrm{lin}}+\hat{m}^{2}\Phi_{LL_\star}^{\mathrm{lin}\top}\theta_\star\theta_\star^{\top}\Phi_{LL_\star}^{\mathrm{lin}}\right) \Omega_L^{\mathrm{lin}}\left(\lambda I_{d}+\hat{V} \Omega_L^{\mathrm{lin}}\right)^{-2}\right]\\
		m=\sqrt{\frac{\gamma_L}{\gamma^\star_{L_\star}}}\frac{\hat{m}}{k_L}\tr \Phi_{LL_\star}^{\mathrm{lin}\top}\theta_\star\theta_\star^{\top}\Phi_{LL_\star}^{\mathrm{lin}}\left(\lambda I_{d}+\hat{V} \Omega_L^{\mathrm{lin}}\right)^{-1}
	\end{cases}
	\label{eq:app:sp}
\end{align}

\newpage
\section{Architecture-induced implicit regularization}
\label{App:architecture}
A seminal pursuit in machine learning research is the theoretical understanding of the interplay between the network architecture and it learning ability. While this is a challenging open question, the study of dRF \eqref{eq:definition_multilayer_RF}, i.e. networks with intermediate layers frozen at initialization, allows to make some headway and gather some preliminary insight into these interrogations. It constitutes a highly stylized, but nonetheless versatile, playground for which some questions can be explored, and which hopefully pave the first preliminary steps in the understanding of networks trained end-to-end.

Section \ref{sec:architecture} in the main text discussed the regularization induced by depth in dRF architectures. In this section, we further explore, using conjecture \ref{conj:error_uni_lin} as a flexible toolbox to access asymptotic test errors, the role of other architectural features in the performance of dRF. Our purpose is mainly to complement the discussion of section \ref{App:architecture}, and highlight some observations of interest. A more complete study falls out of the scope of the present manuscript and is left for future work. In this section, we briefly discuss two questions:
\begin{itemize}
    \item For a fixed number of parameters, is it better to have a deep or wide architecture?
    \item What is the influence of a narrow (bottleneck) hidden layer on the test error?
\end{itemize}

\subsection{Deeper or wider}

For a given number of parameters $m\gamma d$, for $m\in[14]$ we explore the performance of
\begin{itemize}
    \item A rectangular deep net of depth $L=m$ with width sequence $\gamma_1=...=\gamma_m=\gamma$.
    \item A wide net  with widths $\gamma_1=\gamma_3=\gamma$ and $\gamma_2=(m-2)\gamma$ of depth $L=3$. 
\end{itemize}
for $m\ge 3$, learning from a two-layers target with sign activation. The activation is taken to be $\tanh$ for all layers, in both networks. Note that in both architectures, the number of trainable parameter is also always the same, since the readout layer is in any case of width $\gamma d$.

Fig.\,\ref{fig:depth_width} compares the deep architecture (dashed lines) with the wide architecture (solid lines). In general, the wide architecture provides smaller test errors, in accordance with the intuition that additional layers introduce more effective noise and therefore generically prove detrimental to the learning ability of the dRF, see Fig.\,\ref{fig:depth_width}, right panel. However, as discussed in section \ref{sec:architecture} in the main text, the implicit regularization induced by this noise can help mitigate overfitting in some regimes. This is in particular the case in the vicinity of the interpolation peaks, for noisy targets. The left panel of Fig.\,\ref{fig:depth_width} shows such a case, where deep architectures outperform wide architectures in small data regimes $\alpha=0.5, 1$. If the explicit regularization $\lambda$ is optimized over, this effect disappears.

\begin{figure}[t!]
    \centering
    \includegraphics[scale=0.5]{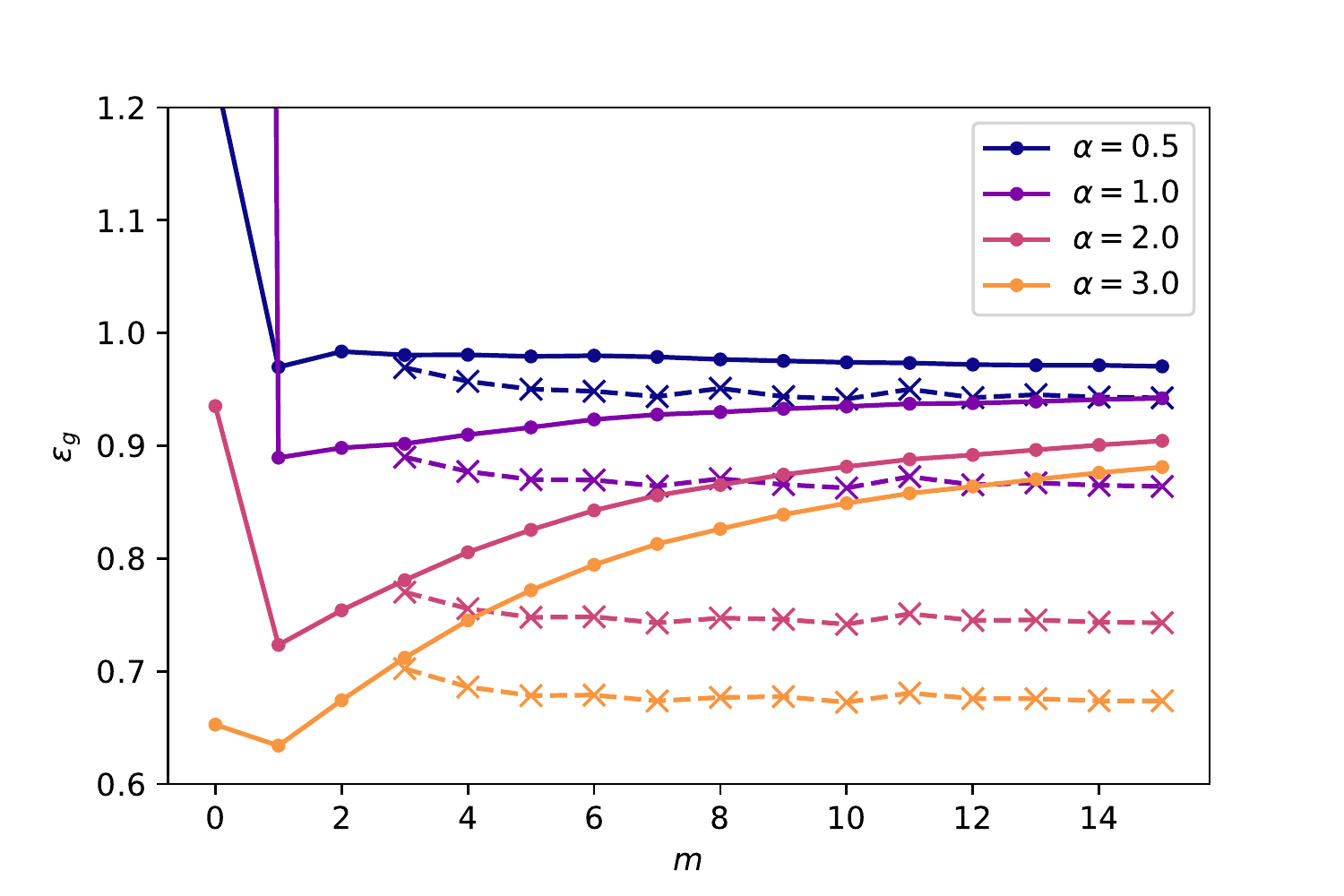}
    \includegraphics[scale=0.5]{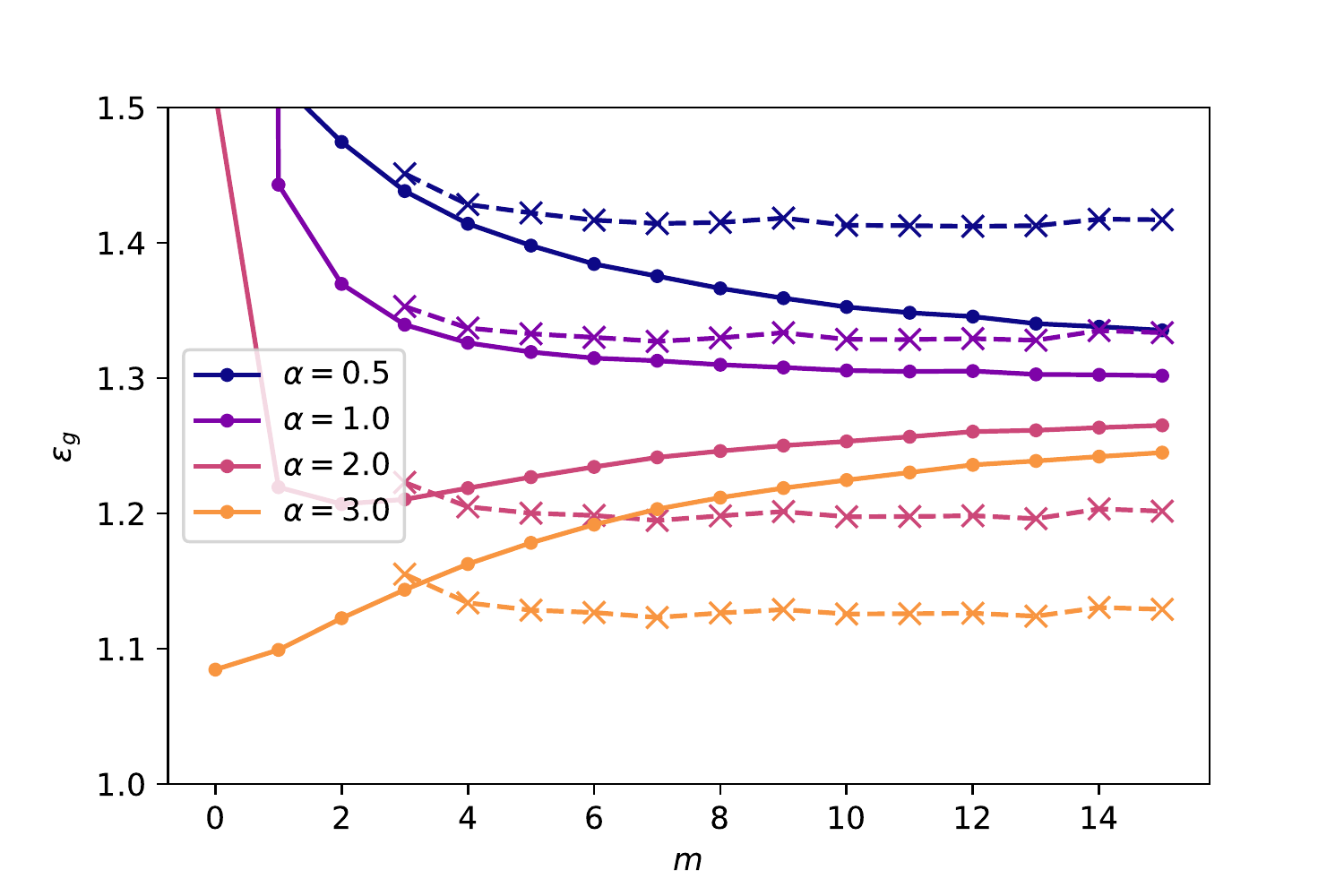}
    \caption{Test error for a regression task on a $L_\star=1$ two-layer target with sign activation and width $\gamma^\star_1=4$. Solid lines represent the test error of a dRF of depth $m$ and widths $\gamma_1=...=\gamma_m=\gamma$, while dashed lines indicate the test errors of wide and shallow dRFs with architecture $\gamma_1=\gamma_3=\gamma$ and $\gamma_2=(m-2)\times k$. All values were evaluated using the sharp asymptotic characterization of conjecture \ref{conj:error_uni_lin}, see also App.\ref{App:error:general}. The parameter $m$, which parameterizes the number of parameters in these two networks, is varied from 0 to 15. For $\gamma=4$ (left), the deep architecture is consistently outperformed by the wide architecture. Closer to the interpolation peak, for $\gamma=1$ (right), the implicit depth-induced regularization means that deeper architectures perform better than wider architectures. }
    \label{fig:depth_width}
\end{figure}

\begin{figure}[t!]
    \centering
    \includegraphics[scale=0.5]{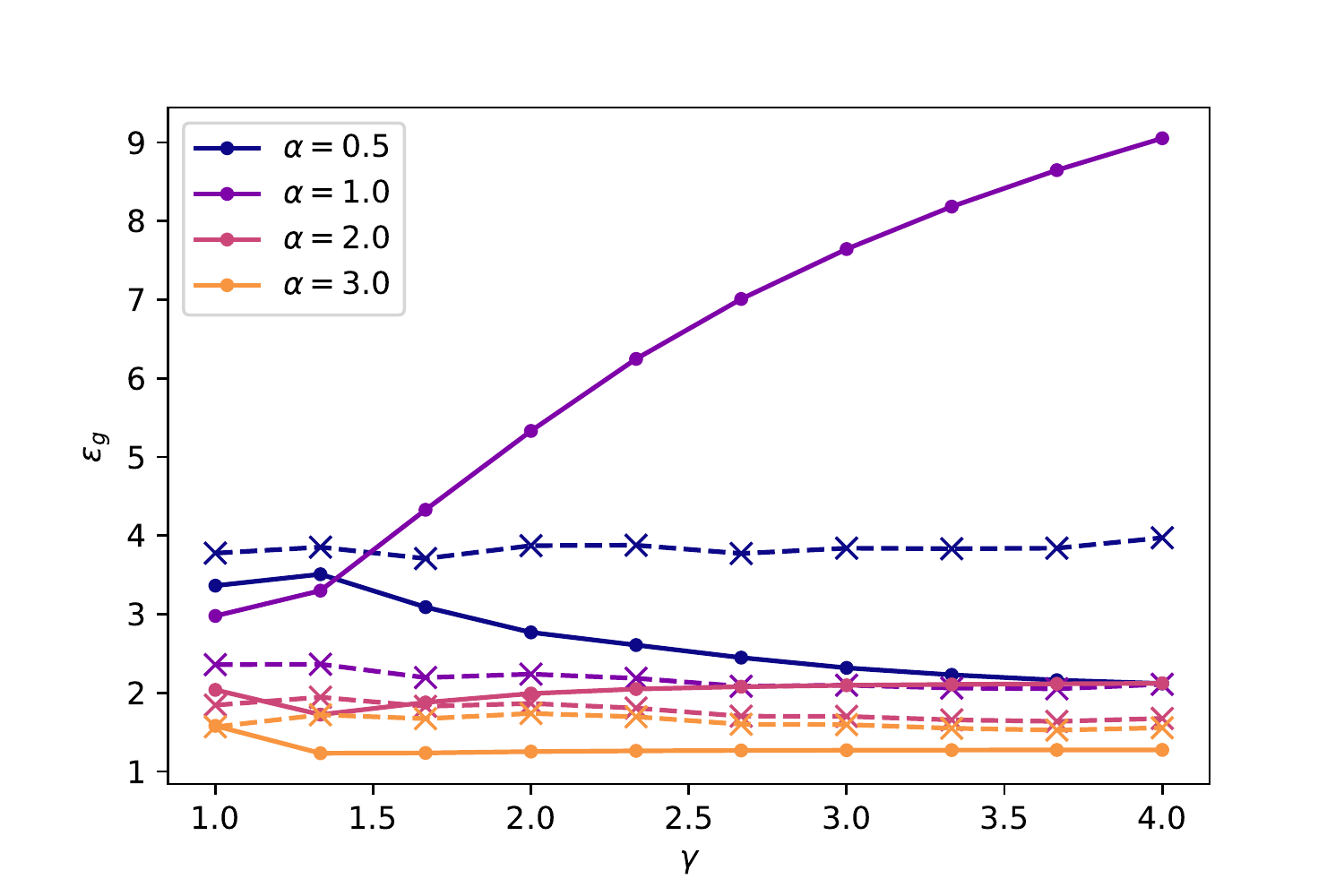}
    \caption{Regression problem over a $L_\star=1$ target with sign activation and width $ \gamma_1^\star$. Dashed lines represent the test error (evaluated using the sharp asymptotics of conjecture \ref{conj:error_uni_lin}, see also App.\ref{App:error:general}) of $L=4$ dRFs, with $\gamma_1=\gamma_2=\gamma_4=\gamma$ and a bottleneck third layer $\gamma_3=\sfrac{1}{2}$. Solid lines corresponds to a rectangular network with no bottleneck $\gamma_1=\gamma_2=\gamma_3=\gamma_4=\gamma$. Close to the interpolation peak ($\alpha=1,2$) the regularization induced by the bottleneck mitigates the overfitting and leads to smaller test errors. }
    \label{fig:bottleneck}
\end{figure}

\subsection{Bottleneck hidden layer}
Another question of interest is the effect of a very narrow hidden layer. Fig.\,\ref{fig:bottleneck} investigates the performance over a $L_\star=1$ target with sign activation and width $ \gamma_1^\star$, of $L=4$ dRFs, with $\gamma_1=\gamma_2=\gamma_4=\gamma$ and a bottleneck third layer $\gamma_3=\sfrac{1}{2}$. The parameter $\gamma$ was varied between 1 and 4. As intuitively expected, the bottleneck, by forcing an intermediary low-dimensional representation, has a regularizing effect. While generically the bottleneck translates into a loss of information, it is beneficial in regimes where regularization is helpful, e.g. close to interpolation peaks or noisy settings. Such an instance is presented in Fig.\,\ref{fig:bottleneck}. Again, if the explicit regularization $\lambda$ is tuned, this effect disappears.

\end{document}